\DeclareMathOperator*{\argmax}{arg\,max}
\DeclareMathOperator*{\argmin}{arg\,min}
\theoremstyle{plain}
\theoremstyle{definition}
\theoremstyle{remark}
\title{Temporal-Difference Variational Continual Learning}
\author{%
  \textbf{Luckeciano C. Melo}\thanks{Correspondence to: luckeciano.carvalho.melo@cs.ox.ac.uk}$^{\ \ 1,2}$
$\quad$
\textbf{Alessandro Abate}\thanks{Denotes equal supervision.}$^{\ \ 2}$
$\quad$
\textbf{Yarin Gal}\footnotemark[2]$^{\ \ 1}$
\\
$^1$ OATML, University of Oxford $\quad$ $^2$ OXCAV, University of Oxford
}
\begin{document}

\maketitle

\begin{abstract}
  Machine Learning models in real-world applications must continuously learn new tasks to adapt to shifts in the data-generating distribution. Yet, for Continual Learning (CL), models often struggle to balance learning new tasks (plasticity) with retaining previous knowledge (memory stability). Consequently, they are susceptible to Catastrophic Forgetting, which degrades performance and undermines the reliability of deployed systems. In the Bayesian CL literature, variational methods tackle this challenge by employing a learning objective that recursively updates the posterior distribution while constraining it to stay close to its previous estimate. Nonetheless, we argue that these methods may be ineffective due to compounding approximation errors over successive recursions. To mitigate this, we propose new learning objectives that integrate the regularization effects of multiple previous posterior estimations, preventing individual errors from dominating future posterior updates and compounding over time. We reveal insightful connections between these objectives and Temporal-Difference methods, a popular learning mechanism in Reinforcement Learning and Neuroscience. Experiments on challenging CL benchmarks show that our approach effectively mitigates Catastrophic Forgetting, outperforming strong Variational CL methods.
\end{abstract}

\section{Introduction}\label{sec:intro}

\setlength{\textfloatsep}{10pt}

\setlength{\intextsep}{0pt}%
\begin{wrapfigure}{r}{0.5\textwidth}
\begin{center}
\includegraphics[width=0.48\textwidth]{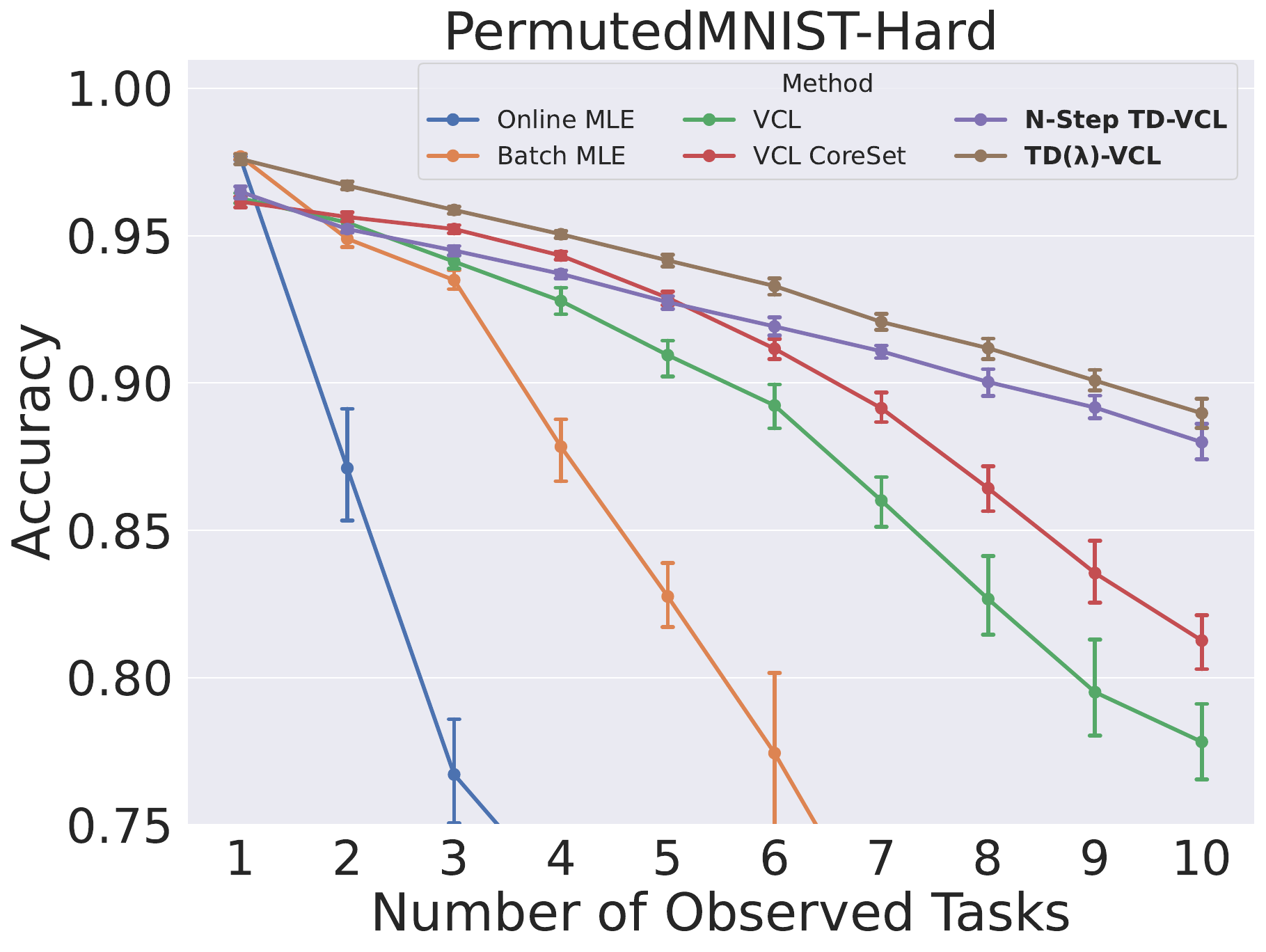}
\captionsetup{skip=-1pt}
\caption{\textbf{Average accuracy across observed tasks in the PermutedMNIST-Hard benchmark}. The TD-VCL approach, proposed in this work, leads to a substantial improvement against standard VCL and non-variational approaches.}
\label{fig:permutedmnist}
\end{center}
\end{wrapfigure}

A fundamental aspect of robust Machine Learning (ML) models is to learn from non-stationary sequential data. In this scenario, two main properties are necessary: first, models must learn from new incoming data –- potentially from a different task -– with satisfactory asymptotic performance and sample complexity. This capability is called plasticity. Second, they must retain the knowledge from previously learned tasks, known as memory stability. When this does not happen, and the performance of previous tasks degrades, the model suffers from Catastrophic Forgetting \cite{goodfellow2015empirical, McCloskey1989CatastrophicII}. These two properties are the central core of Continual Learning (CL) \cite{10.5555/2887770.2887853, ABRAHAM200573}, being strongly relevant for ML systems susceptible to test-time distributional shifts.

Given the critical importance of this topic, extensive literature addresses the challenges of CL in traditional ML methods \cite{10.5555/2887770.2887853, 10.5555/3091529.3091570, McCloskey1989CatastrophicII, FRENCH1999128} and, more recently, for overparameterized models \cite{HADSELL20201028, goodfellow2015empirical, pmlr-v80-serra18a}. In this work, we focus on Bayesian CL methods, for two reasons. First, it provides a principled, self-consistent framework for learning in online or low-data regimes \cite{10.1214/23-STS915}. Second, Bayesian models express their own uncertainty over predictions, which is crucial for safety-critical applications \cite{10.5555/3295222.3295309} and for enabling principled data selection \cite{10.5555/3305381.3305504, melo2024deepbayesianactivelearning}. 

\setlength{\dbltextfloatsep}{5pt} 
\setlength{\dblfloatsep}{0pt}    

 \begin{figure*}[t]
 \centering
  \includegraphics[width=0.9\textwidth]{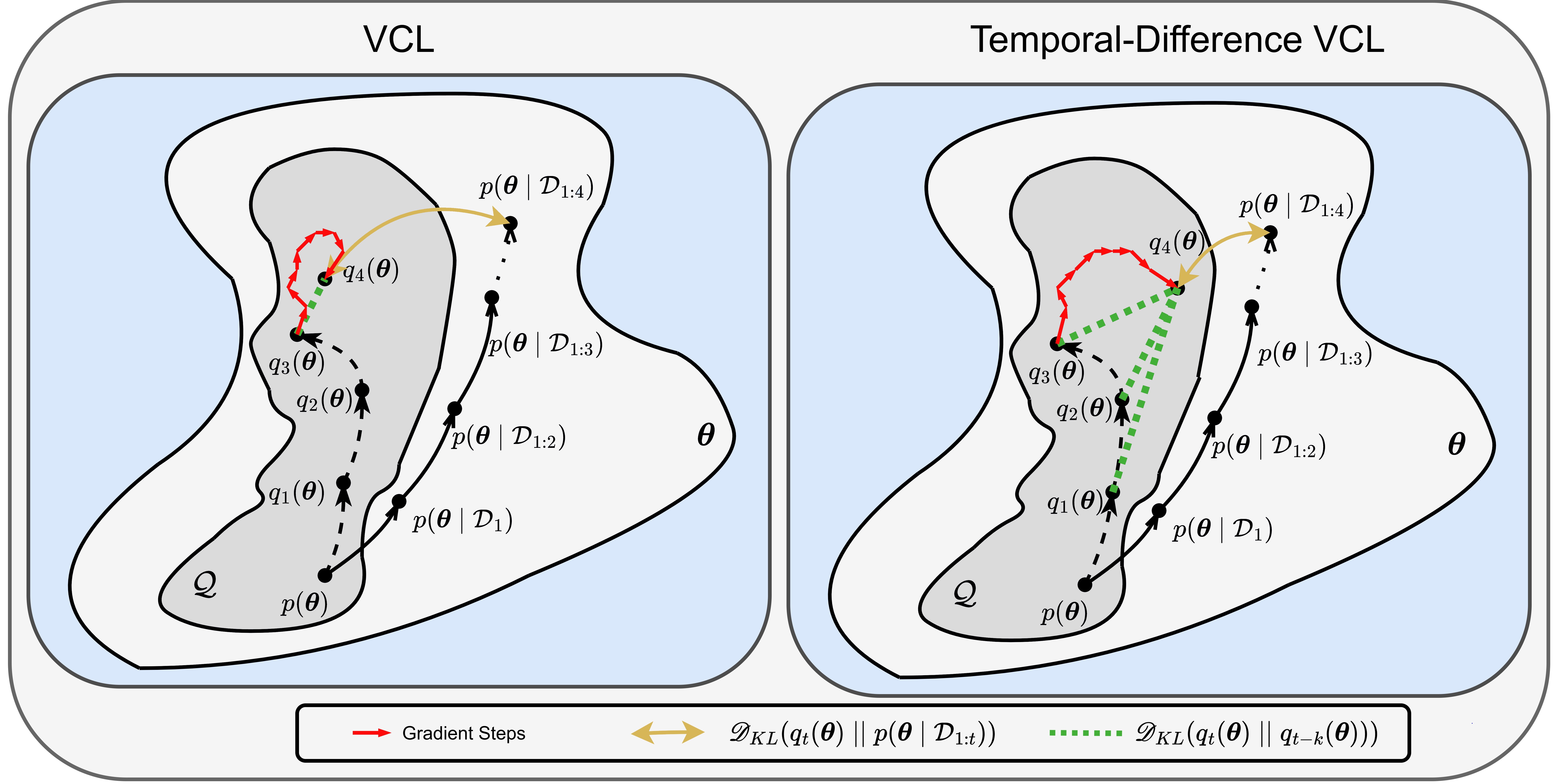}
 \caption{\textbf{An intuitive illustration of how TD-VCL functions in comparison to vanilla VCL}. At each timestep $t$, a new task dataset $\mathcal{D}_{t}$ arrives. Both methods aim to learn variational parameters $q_{t}(\boldsymbol{\theta})$ over a family of distributions $\mathcal{Q}$ that approximates the true posterior $p(\boldsymbol{\theta} \mid \mathcal{D}_{1:t})$ via minimizing the KL divergence $\mathcal{D}_{KL}(q_{t}(\boldsymbol{\theta}) \mid \mid p(\boldsymbol{\theta} \mid \mathcal{D}_{1:t}))$. VCL optimization (left) is only constrained by the most recent posterior, which compounds approximation errors from previous estimations and potentially deviates far from the true posterior. TD-VCL (right) is regularized by a sequence of past estimations, alleviating the impact of compounded errors.}
 \label{fig:tdvcl}
 \end{figure*}

Particularly,  we investigate Variational Continual Learning (VCL) approaches \cite{v.2018variational}. As detailed in Section \ref{sec:preliminaries}, VCL identifies a recursive relationship between subsequent posterior distributions over tasks. A variational optimization objective then leverages this recursion, which regularizes the updated posterior to stay close to the very latest posterior approximation. Nevertheless, we argue that solely relying on a single previous posterior estimate for building up the next optimization target may be ineffective, as the approximation error propagates to the next update and compounds after successive recursions. If a particular estimation is especially poor, the error will be carried over to the next step entirely, which can dramatically degrade model's performance.

In this work, we show that the same optimization objective can be represented as a function of a sequence of previous posterior estimates and task likelihoods. We thus propose a new Continual Learning objective, n-Step KL VCL, that explicitly regularizes the posterior update considering several past posterior approximations. By considering multiple previous estimates, the objective dilutes individual errors, allows correct posterior approximates to exert a corrective influence, and leverages a broader global context to the learning target, reducing the impact of compounding errors over time. Figure \ref{fig:tdvcl} illustrates the underlying mechanism.

We further generalize this unbiased optimization target to a broader family of CL objectives, namely Temporal-Difference VCL, which constructs the learning target by prioritizing the most recent approximated posteriors. We reveal a link between the proposed objective and Temporal-Difference (TD) methods, a popular learning mechanism in Reinforcement Learning \cite{10.1023/A:1022633531479} and Neuroscience \cite{doi:10.1126/science.275.5306.1593}. Furthermore, we show that TD-VCL represents a spectrum of learning objectives that range from vanilla VCL to n-Step KL VCL. Finally, we present experiments on several challenging and popular CL benchmarks, demonstrating that they outperform standard VCL (as shown in Figure \ref{fig:permutedmnist}), other VCL-based methods, and non-variational baselines, effectively alleviating Catastrophic Forgetting.

\section{Related Work}

\textbf{Continual Learning} has been studied throughout the past decades, both in Artificial Intelligence \cite{10.5555/2887770.2887853, 10.5555/3091529.3091570, 10.1023/A:1007331723572} and in Neuro- and Cognitive Sciences \cite{FLESCH2023199, FRENCH1999128, McCloskey1989CatastrophicII}. More recently, the focus has shifted towards overparameterized models, such as deep neural networks \cite{HADSELL20201028, goodfellow2015empirical, pmlr-v80-serra18a, DBLP:conf/iclr/Adel0T20}. Given their powerful predictive capabilities, recent literature approaches CL from a wide range of perspectives. For instance, by regularizing the optimization objective to account for old tasks \cite{Kirkpatrick2016OvercomingCF, 10.5555/3305890.3306093, 10.1007/978-3-030-01252-6_33}; by replaying an external memory composed by a set of previous tasks \cite{NIPS2017_f8752278, 9577808, Rebuffi2016iCaRLIC}; or by modifying the optimization procedure or manipulating the estimated gradients \cite{Zeng2018ContinualLO, 10.5555/3454287.3454450, liu2022continual}. We refer to \citeauthor{PMID:38407999} for an extensive review of recent approaches. Our proposed method is placed between regularization-based and replay-based methods. 

\textbf{Bayesian CL.} In the Bayesian framework, prior methods exploit the recursive relationship between subsequent posteriors that emerge from the Bayes' rule in the CL setting (Section \ref{sec:preliminaries}). Since Bayesian inference is often intractable, they fundamentally differ in the design of approximated inference. We highlight works that learn posteriors via Laplace approximation \cite{10.5555/3327144.3327290, pmlr-v80-schwarz18a}, sequential Bayesian Inference \cite{Titsias2020Functional, 10.5555/3495724.3496098}, and Variational Inference (VI) \cite{v.2018variational, loo2021generalized, pmlr-v162-rudner22a}. Our work and proposed  method lies in the latter category.

\textbf{Variational Inference for CL.} Variational Continual Learning (VCL) \cite{v.2018variational} introduced the idea of online VI for the Continual Learning setting. It leverages the Bayesian recursion of posteriors to build an optimization target for the next step's posterior based on the current one. Similarly, our work also optimizes a target based on previous approximated posteriors. On the other hand, rather than relying on a single past posterior estimation, it bootstraps on several previous estimations to prevent compounded errors. \citet{v.2018variational} further incorporate an heuristic external replay buffer to prevent forgetting, requiring a two-step optimization. In contrast, our work only requires a single-step optimization as the replay mechanism naturally emerges from the learning objective.

Other derivative works usually blend VCL with architectural and optimization improvements \cite{loo2020combining, loo2021generalized, 9897538, Tseran2018NaturalVC, Ebrahimi2020Uncertainty-guided, 10.5555/3692070.3694032, bonnet2025bayesiancontinuallearningforgetting} or different posterior modeling assumptions \cite{ pmlr-v162-rudner22a, auddy2020expressive, 8871157, 10.5555/3454287.3454682, 10.1162/neco_a_01430}. We specifically highlight UCB \cite{Ebrahimi2020Uncertainty-guided}, which adapts the learning rate according to the uncertainty of the Bayesian model, and UCL \cite{10.5555/3454287.3454682}, which introduces a different implementation for the VCL objective by proposing the notion of node-wise uncertainty. While their contribution are orthogonal to ours, we adopt UCB and UCL as comparison methods to further show that our proposed objective may also be combined with other variational methods and enhance their performance.

\section{Preliminaries}\label{sec:preliminaries}
\textbf{Problem Statement}. In the Continual Learning setting, a model learns from a streaming of tasks, which forms a non-stationary data distribution throughout time. More formally, we consider a task distribution $\mathcal{T}$ and represent each task $t \sim \mathcal{T}$ as a set of pairs $\{(\boldsymbol{x}_{t}, y_{t})\}^{N_{t}}$, where $N_{t}$ is the dataset size. At every timestep t\footnote{For notational simplicity, we use the index $t$ to denote both tasks and timesteps. Note that neither the VCL framework nor our proposed methodology requires knowledge of task boundaries, as argued in the Appendix \ref{app:boundaries}.}, the model receives a batch of data $\mathcal{D}_{t}$ for training. We evaluate the model in held-out test sets, considering all previously observed tasks.

In the \textbf{Bayesian framework} for CL, we assume a prior distribution over parameters $p(\boldsymbol{\theta})$, and the goal is to learn a posterior distribution $p(\boldsymbol{\theta} \mid \mathcal{D}_{1:T})$ after observing $T$ tasks. Crucially, given the sequential nature of tasks, we identify a recursive property of posteriors:
\begin{align}\label{eq1}
    p(\boldsymbol{\theta} & \mid \mathcal{D}_{1:T})  \propto p(\boldsymbol{\theta})p(\mathcal{D}_{1:T} \mid \boldsymbol{\theta})  \overset{\mathrm{i.i.d}}{=}   p(\boldsymbol{\theta})\prod_{t = 1}^{T} p(\mathcal{D}_{t} \mid \boldsymbol{\theta}) \propto p(\boldsymbol{\theta} \mid \mathcal{D}_{1:T-1})p(\mathcal{D}_{T} \mid \boldsymbol{\theta}),
\end{align}
where we assume that tasks are i.i.d. Equation \ref{eq1} shows that we may update the posterior estimation online, given the likelihood of the subsequent task.

\textbf{Variational Continual Learning}. Despite the elegant recursion, computing the posterior $p(\boldsymbol{\theta} \mid \mathcal{D}_{1:T})$ exactly is often intractable, especially for large parameter spaces. Hence, we rely on an approximation. VCL achieves this by employing online variational inference \citep{onlinevi}. It assumes the existence of variational parameters $q(\boldsymbol{\theta})$ whose goal is to approximate the posterior by minimizing the following KL divergence over a space of variational approximations $\mathcal{Q}$:

\begin{equation}\label{eq2}
    q_{t}(\boldsymbol{\theta}) = \argmin_{q \in \mathcal{Q}} \mathscr{D}_{KL}(q(\boldsymbol{\theta}) \mid \mid \frac{1}{Z_{t}} q_{t-1}(\boldsymbol{\theta})p(\mathcal{D}_{t} \mid \boldsymbol{\theta})),
\end{equation}
where $Z_{t}$ represents a normalization constant. The objective in Equation \ref{eq2} is equivalent to maximizing the variational lower bound of the online marginal likelihood:

\begin{align}\label{eq3}
    \mathcal{L}_{VCL}^{t}(\boldsymbol{\theta}) = \mathbb{E}_{\boldsymbol{\theta} \sim q_{t}(\boldsymbol{\theta})}[\log{p(\mathcal{D}_{t} \mid \boldsymbol{\theta}})]  - \mathscr{D}_{KL}(q_{t}(\boldsymbol{\theta}) \mid \mid q_{t-1}(\boldsymbol{\theta})).
\end{align}

We can interpret the loss in Equation \ref{eq3} through the lens of the stability-plasticity dilemma \citep{ABRAHAM200573}. The first term maximizes the likelihood of the new task (encouraging plasticity),  whereas the KL term penalizes parametrizations that deviate too far from the previous posterior estimation, which supposedly contains the knowledge from past tasks (encouraging memory stability).

\section{Temporal-Difference Variational Continual Learning}\label{sec:tdvcl}

Maximizing the objective in Equation \ref{eq3} is equivalent to the optimization in Equation \ref{eq2}, but its computation relies on two main approximations. First, computing the expected log-likelihood term analytically is not tractable, which requires a Monte-Carlo (MC) approximation. Second, the KL term relies on a previous posterior estimate, which may be biased from previous approximation errors. While updating the posterior to account for the next task,  these biases deviate the learning target from the true objective. Crucially, as Equation \ref{eq3} solely relies on the very latest posterior estimation, the error compounds with successive recursive updates.

Alternatively, we may represent the same objective as a function of several previous posterior estimations and alleviate the effect of the approximation error from any particular one. By considering several past estimates, the objective dilutes individual errors, allows correct posterior approximates to exert a corrective influence, and leverages a broader global context to the learning target, reducing the impact of compounding errors over time.

\subsection{Variational Continual Learning with n-Step KL Regularization}

We start by presenting a new objective that is equivalent to Equation \ref{eq2} while also meeting the aforementioned desiderata:

\begin{restatable}{proposition}{propnstep}
\label{prop:nstep}
The standard KL minimization objective in Variational Continual Learning (Equation \ref{eq2}) is equivalently represented as the following objective, where $n \in \mathbb{N}_{0}$ is a hyperparameter:

\begin{align}\label{eq:nstep}
    q_{t}(\boldsymbol{\theta}) = \argmax_{q \in \mathcal{Q}}   \mathbb{E}_{\boldsymbol{\theta} \sim q_{t}(\boldsymbol{\theta})}\Big[\sum_{i=0}^{n-1} \frac{(n-i)}{n}\log{p(\mathcal{D}_{t-i} \mid \boldsymbol{\theta}})\Big] - \sum_{i = 0}^{n-1} \frac{1}{n} \mathscr{D}_{KL}(q_{t}(\boldsymbol{\theta}) \mid \mid q_{t-i-1}(\boldsymbol{\theta})). 
\end{align}

\end{restatable}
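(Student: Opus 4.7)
The plan is to derive Equation~\ref{eq:nstep} by first producing $n$ alternative but equivalent forms of the VCL objective in Equation~\ref{eq3}---one per ``unroll depth'' $k\in\{1,\ldots,n\}$---and then averaging them uniformly. The central tool will be the posterior recursion from Equation~\ref{eq1}, applied to the variational iterates themselves: $q_{t-1}(\boldsymbol{\theta}) = \tfrac{1}{Z_{t-1}} q_{t-2}(\boldsymbol{\theta})\, p(\mathcal{D}_{t-1}\mid\boldsymbol{\theta})$, and more generally $q_{t-k}(\boldsymbol{\theta}) \propto q_{t-k-1}(\boldsymbol{\theta})\, p(\mathcal{D}_{t-k}\mid\boldsymbol{\theta})$. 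Taking $\log$ and substituting into the KL term of Equation~\ref{eq3} yields the key identity
\begin{equation*}
\mathscr{D}_{KL}(q_{t} \mid \mid q_{t-1}) = \mathscr{D}_{KL}(q_{t} \mid \mid q_{t-2}) - \mathbb{E}_{\boldsymbol{\theta}\sim q_{t}}[\log p(\mathcal{D}_{t-1}\mid\boldsymbol{\theta})] + \log Z_{t-1},
\end{equation*}
in which $\log Z_{t-1}$ does not depend on the free variable $q_t$ and can be dropped without changing the $\argmax$.

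Building on this, I would proceed by induction on $k$. The inductive claim is that, up to an additive constant in $q_t$, the VCL objective in Equation~\ref{eq3} equals
\begin{equation*}
\mathcal{L}^{(k)}(q_t) \;=\; \mathbb{E}_{\boldsymbol{\theta}\sim q_{t}}\!\Big[\sum_{j=0}^{k-1} \log p(\mathcal{D}_{t-j}\mid\boldsymbol{\theta})\Big] - \mathscr{D}_{KL}(q_{t} \mid \mid q_{t-k})
\end{equation*}
for every $k\in\{1,\ldots,n\}$. The base case $k=1$ is exactly Equation~\ref{eq3}. The inductive step reapplies the identity above to $\mathscr{D}_{KL}(q_t \mid \mid q_{t-k})$, peeling off another normaliser, appending $\log p(\mathcal{D}_{t-k}\mid\boldsymbol{\theta})$ inside the expectation, and shifting the KL reference from $q_{t-k}$ to $q_{t-k-1}$.

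The final step is to form the uniform average $\tfrac{1}{n}\sum_{k=1}^{n}\mathcal{L}^{(k)}(q_t)$: since each summand shares an $\argmax$ with $\mathcal{L}_{VCL}^{t}$, any convex combination of them does as well. The coefficients in Equation~\ref{eq:nstep} then emerge from a simple counting argument: $\log p(\mathcal{D}_{t-i}\mid\boldsymbol{\theta})$ appears inside $\mathcal{L}^{(k)}$ precisely when $k\ge i+1$---i.e.\ in $n-i$ of the $n$ summands---giving the weight $(n-i)/n$; reindexing $i=k-1$ on the KL side puts uniform weight $1/n$ on each $\mathscr{D}_{KL}(q_{t}\mid \mid q_{t-i-1})$. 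This reproduces Equation~\ref{eq:nstep} verbatim.

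The main subtlety I anticipate is justifying the recursive identity $q_{t-1}(\boldsymbol{\theta}) \propto q_{t-2}(\boldsymbol{\theta})\, p(\mathcal{D}_{t-1}\mid\boldsymbol{\theta})$, since each $q_{t-i}$ is only a variational approximation rather than the exact one-step Bayesian update of its predecessor. The ``equivalent representation'' claim in Proposition~\ref{prop:nstep} must therefore be read at the level of Equation~\ref{eq1}'s definitional chain, treating each previous iterate as the ideal target of its own VCL subproblem; I would state this assumption explicitly at the opening of the proof. Once granted, the rest of the argument reduces to algebra and combinatorics.
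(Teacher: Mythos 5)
Your proposal is correct and follows essentially the same route as the paper's own proof: the paper writes the KL objective as $\tfrac{n}{n}$ copies of itself, unrolls each copy to a different recursion depth via Equation~\ref{eq1}, drops the normalizers, and collects terms by the same counting argument ($\log p(\mathcal{D}_{t-i}\mid\boldsymbol{\theta})$ appearing in $n-i$ of the $n$ summands), which is just your induction-then-average presentation in a different order. Your explicit flagging of the assumption that the variational iterates satisfy $q_{t-k}(\boldsymbol{\theta}) \propto q_{t-k-1}(\boldsymbol{\theta})\,p(\mathcal{D}_{t-k}\mid\boldsymbol{\theta})$ (rather than only the exact posteriors doing so) is a point the paper leaves implicit, and is worth stating.
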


We present the proof of Proposition \ref{prop:nstep} in \textbf{Appendix \ref{app:n-step}}. We name Equation \ref{eq:nstep} as the n-Step KL regularization objective. It represents the same learning target of Equation \ref{eq2} as a sum of weighted likelihoods and KL terms that consider different posterior estimations, which can be interpreted as ``distributing" the role of regularization among them. For instance, if an estimate $q_{t-i}$ deviates too far from the true posterior, it only affects $1 / n$ of the KL regularization term. The hyperparameter $n$ assumes integer values up to $t$ and defines how far in the past the learning target goes. If $n$ is set to 1, we recover vanilla VCL.

An interesting insight comes from the likelihood term. It contains the likelihood of different tasks, weighted by their recency. Hence, the idea of re-estimating old task likelihoods, commonly leveraged as a heuristic in CL methods, fundamentally emerges in the proposed objective. We may estimate these likelihood terms by replaying data from different tasks simultaneously, alleviating the violation of the i.i.d assumption that happens given the online, sequential nature of CL \citep{HADSELL20201028}. 

\subsection{From n-Step KL to Temporal-Difference Targets}

The learning objective in Equation \ref{eq:nstep} relies on several different posterior estimates, alleviating the compounding error problem. A caveat is that all estimates have the same weight in the final objective. One may want to have more flexibility by giving different weights for them -- for instance, amplifying the effect from the most recent estimate while drastically reducing the impact of previous ones. It is possible to accomplish that, as shown in the following proposition:

\begin{restatable}{proposition}{proptd}
\label{prop:td}
The standard KL minimization objective in VCL (Equation \ref{eq2}) is equivalently represented as the following objective, with $n \in \mathbb{N}_{0}$, and $\lambda \in [0, 1)$ hyperparameters:

\begin{align}\label{eq:td}
    q_{t}(\boldsymbol{\theta}) = \argmax_{q \in \mathcal{Q}}   &\mathbb{E}_{\boldsymbol{\theta} \sim q_{t}(\boldsymbol{\theta})}\Big[\sum_{i=0}^{n-1} \frac{\lambda^{i}(1 -\lambda^{n-i})}{1 - \lambda^{n}}\log{p(\mathcal{D}_{t-i} \mid \boldsymbol{\theta}})\Big] \nonumber \\ &- \sum_{i = 0}^{n-1} \frac{\lambda^{i}(1 - \lambda)}{1 - \lambda^{n}} \mathscr{D}_{KL}(q_{t}(\boldsymbol{\theta}) \mid \mid q_{t-i-1}(\boldsymbol{\theta})).
\end{align}

\end{restatable}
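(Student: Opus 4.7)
The plan is to piggyback on the machinery behind Proposition \ref{prop:nstep} and obtain Equation \ref{eq:td} as a geometrically weighted average of the same family of ``$k$-step unrolled'' objectives that already underlies the $n$-step result, in direct analogy with how TD$(\lambda)$ returns are built as geometric averages of $n$-step returns in reinforcement learning. This immediately explains the paper's claim that TD-VCL spans a spectrum from vanilla VCL to $n$-step KL VCL: it is a single convex combination whose mixing weights are controlled by $\lambda$.

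Concretely, for each $k \in \{1,\ldots,n\}$ let
$$F_k(q) \;=\; \mathbb{E}_{\boldsymbol{\theta}\sim q}\Big[\sum_{i=0}^{k-1}\log p(\mathcal{D}_{t-i}\mid\boldsymbol{\theta})\Big] \;-\; \mathscr{D}_{KL}\bigl(q(\boldsymbol{\theta})\,\|\,q_{t-k}(\boldsymbol{\theta})\bigr),$$
the target obtained by unrolling the Bayesian recursion (Equation \ref{eq1}) $k$ times inside Equation \ref{eq2}. The proof of Proposition \ref{prop:nstep} (in Appendix \ref{app:n-step}) establishes that each $F_k$ differs from the VCL objective in Equation \ref{eq3} only by an additive constant (the log-normalizer absorbed at each unroll), so every $F_k$ is maximized at the same $q_t$. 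Defining the weights $w_k = (1-\lambda)\lambda^{k-1}/(1-\lambda^n)$ for $k=1,\ldots,n$, the finite geometric series $\sum_{k=1}^n \lambda^{k-1} = (1-\lambda^n)/(1-\lambda)$ yields $\sum_k w_k = 1$, and non-negativity is immediate since $\lambda\in[0,1)$. Therefore $\sum_k w_k F_k$ is a convex combination of objectives with a common argmax, and hence is itself maximized by $q_t$.

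It remains to verify that $\sum_k w_k F_k$ matches the right-hand side of Equation \ref{eq:td} term by term. The likelihood $\log p(\mathcal{D}_{t-i}\mid\boldsymbol{\theta})$ appears in $F_k$ iff $k\ge i+1$, so its coefficient is $\sum_{k=i+1}^n w_k$, which simplifies to $\lambda^i(1-\lambda^{n-i})/(1-\lambda^n)$ by another short geometric-series calculation. The KL term $\mathscr{D}_{KL}(q\,\|\,q_{t-k})$ appears only in $F_k$ with weight $w_k$, and re-indexing $i=k-1$ gives coefficient $\lambda^i(1-\lambda)/(1-\lambda^n)$. Both coefficients agree with Equation \ref{eq:td}. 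As sanity checks, $\lambda\to 0$ concentrates all mass on $w_1=1$ and recovers Equation \ref{eq3}, while $\lambda\to 1^-$ gives $w_k\to 1/n$ (by L'Hôpital), recovering Equation \ref{eq:nstep}.

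The main obstacle is really Step~1: justifying that each $F_k$ represents the same optimization target as Equation \ref{eq2}. This is exactly the content of the proof of Proposition \ref{prop:nstep} and can be imported wholesale; once that equivalence is in hand, the remaining arithmetic is routine manipulation of finite geometric series together with the standard fact that argmax is invariant under convex combinations of objectives sharing a maximizer.
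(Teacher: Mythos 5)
Your proof is correct and follows essentially the same route as the paper: unroll the recursion to depth $k$, weight the $k$-step targets geometrically, normalize by $\sum_{k=0}^{n-1}\lambda^{k}=(1-\lambda^{n})/(1-\lambda)$, and collect likelihood coefficients via a second geometric series — exactly the chain of equalities in Appendix \ref{app:td}. Your framing as a convex combination of the targets $F_k=\text{TD}_t(k)$ with common maximizer is precisely the reformulation the paper itself records as Proposition \ref{propsum} in Appendix \ref{app:tdsum}, so nothing substantively new or missing here.
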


The proof is available in \textbf{Appendix \ref{app:td}}. We call Equation \ref{eq:td} the TD($\lambda$)-VCL objective\footnote{We refer to both n-Step KL Regularization and TD($\lambda$)-VCL as TD-VCL objectives.}. It augments the n-Step KL Regularization to weight the regularization effect of different estimates in a way that geometrically decays -- via the $\lambda^{i}$ term -- as far as it goes in the past. Other $\lambda$-related terms serve as normalization constants. Equation \ref{eq:td} provides a more granular level of target control.

Interestingly, this objective relates intrinsically to the $\lambda$-returns for Temporal-Difference (TD) learning in valued-based reinforcement learning \citep{10.5555/3312046}. More broadly, both objectives of Equations \ref{eq:nstep} and \ref{eq:td} are compound updates that combine $n$-step Temporal-Difference targets, as shown below. First, we formally define a TD target in the CL context:

\begin{restatable}{definition}{tdstep}
\label{def:td}
For a timestep $t$, the n-Step Temporal-Difference target for Variational Continual Learning is defined as, $\forall n \in \mathbb{N}_{0}$, $n \leq t$:

\begin{align}\label{def:tdstep}
     \text{TD}_{t}(n) = \mathbb{E}_{\boldsymbol{\theta} \sim q_{t}(\boldsymbol{\theta})}\Bigg[\sum_{i=0}^{n-1}&\log{p(\mathcal{D}_{t-i} \mid \boldsymbol{\theta}})] \Bigg] - \mathscr{D}_{KL}(q_{t}(\boldsymbol{\theta}) \mid \mid q_{t-n}(\boldsymbol{\theta})). 
\end{align}
\end{restatable}

In \textbf{Appendix \ref{app:rlcon}}, we reveal the connection between Equation \ref{def:tdstep} and the TD targets employed in Reinforcement Learning, justifying the adopted terminology. From this definition, it follows that:

\begin{restatable}{proposition}{propsum}
\label{propsum}
    $\forall n \in \mathbb{N}_{0}$, $n \leq t$ , the objective in Equation \ref{eq2} can be equivalently represented as:

    \begin{equation}\label{eq:maxtd}
        q_{t}(\boldsymbol{\theta}) = \argmax_{q \in \mathcal{Q}} \textup{TD}_{t}(n),
    \end{equation}

    with $\textup{TD}_{t}(n)$ as in Definition \ref{def:td}. Furthermore, the objective in Equation \ref{eq:td} can also be represented as:
    
    \begin{equation}\label{eq:equivtd}
        q_{t}(\boldsymbol{\theta}) = \argmax_{q \in \mathcal{Q}}   \frac{1 -\lambda}{1 - \lambda^{n}} \underbrace{\Bigg[ \sum_{k=0}^{n-1}\lambda^{k}\textup{TD}_{t}(k+1)) \Bigg].}_{\emph{Discounted sum of TD targets}}
    \end{equation}
\end{restatable}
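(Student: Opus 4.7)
The plan is to prove the two claims in sequence, since the second builds naturally on the first.

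For Equation \ref{eq:maxtd}, I would unroll the Bayesian recursion from Equation \ref{eq1} exactly $n$ times instead of once. This gives $p(\boldsymbol{\theta} \mid \mathcal{D}_{1:t}) \propto p(\boldsymbol{\theta} \mid \mathcal{D}_{1:t-n}) \prod_{i=0}^{n-1} p(\mathcal{D}_{t-i} \mid \boldsymbol{\theta})$. Replacing the intractable posterior $p(\boldsymbol{\theta} \mid \mathcal{D}_{1:t-n})$ with its variational approximation $q_{t-n}(\boldsymbol{\theta})$ and substituting into the KL form of Equation \ref{eq2} yields $\mathscr{D}_{KL}(q(\boldsymbol{\theta}) \mid\mid \frac{1}{Z} q_{t-n}(\boldsymbol{\theta}) \prod_{i=0}^{n-1} p(\mathcal{D}_{t-i} \mid \boldsymbol{\theta}))$. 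The standard ELBO derivation, applied just as in the passage between Equation \ref{eq2} and Equation \ref{eq3}, then rewrites the minimization problem as the maximization of $\mathbb{E}_{\boldsymbol{\theta} \sim q_t}\big[\sum_{i=0}^{n-1}\log p(\mathcal{D}_{t-i} \mid \boldsymbol{\theta})\big] - \mathscr{D}_{KL}(q_t \mid\mid q_{t-n})$, which is precisely $\text{TD}_t(n)$.

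For Equation \ref{eq:equivtd}, I would substitute Definition \ref{def:td} into the discounted sum and collect terms. The KL piece is immediate: each $\text{TD}_t(k+1)$ contributes a single KL against $q_{t-k-1}$, so after multiplying by $\lambda^k$ and the prefactor $\frac{1-\lambda}{1-\lambda^n}$, reindexing with $i = k$ recovers the coefficient $\frac{\lambda^i(1-\lambda)}{1-\lambda^n}$ of Equation \ref{eq:td}. The likelihood piece requires swapping the order of summation in the double sum $\sum_{k=0}^{n-1} \lambda^k \sum_{i=0}^{k} \log p(\mathcal{D}_{t-i} \mid \boldsymbol{\theta})$. For a fixed $i$, the index $k$ ranges over $\{i, i+1, \ldots, n-1\}$, so the inner weight becomes
\begin{equation*}
\sum_{k=i}^{n-1}\lambda^k = \lambda^i \cdot \frac{1-\lambda^{n-i}}{1-\lambda}.
\end{equation*}
Multiplying by the prefactor $\frac{1-\lambda}{1-\lambda^n}$ yields exactly $\frac{\lambda^i(1-\lambda^{n-i})}{1-\lambda^n}$, matching the likelihood weight in Equation \ref{eq:td}. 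Since the argmax is invariant under the positive scaling $\frac{1-\lambda}{1-\lambda^n}$, the equivalence with the original VCL objective follows from Proposition \ref{prop:td}.

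The step I expect to require the most care is the double-sum reindexing in the likelihood term, since it is easy to introduce an off-by-one error in the geometric sum bounds. A minor but worthwhile note to include is that the objective is well defined only for $n \leq t$ (since $q_{t-n}$ must exist), and that $\lambda \in [0,1)$ guarantees the normalization $1 - \lambda^n \neq 0$; the case $\lambda \to 0$ recovers TD$_t(1)$ (vanilla VCL) and should be checked as a sanity limit.
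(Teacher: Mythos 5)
Your proposal is correct and follows essentially the same route as the paper: the first claim is obtained by unrolling the Bayesian recursion $n$ times and applying the ELBO identity to the resulting KL, and the second is the same grouping of the expanded TD($\lambda$)-VCL objective into $\sum_{k}\lambda^{k}\textup{TD}_{t}(k+1)$, which you simply verify in the reverse direction (expanding the TD sum and swapping the summation order rather than collecting terms). Your explicit double-sum reindexing with $\sum_{k=i}^{n-1}\lambda^{k}=\lambda^{i}(1-\lambda^{n-i})/(1-\lambda)$ and the remark on $n\leq t$ and $\lambda\in[0,1)$ are sound and match the coefficients in Equation \ref{eq:td}.
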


The proof is in \textbf{Appendix \ref{app:tdsum}}. Proposition \ref{propsum} states that the TD($\lambda$)-VCL objective is a sum of discounted TD targets (up to a normalization constant), effectively representing $\lambda$-returns. In parallel, one can show that the n-Step KL Regularization objective, as a particular case, is a simple average of n-Step TD targets. Fundamentally, the key idea behind these objectives is \textit{bootstrapping}: they build a learning target estimate based on other estimates. Ultimately, the ``$\lambda$-target" in Equation \ref{eq:td} provides flexibility for bootstrapping by allowing multiple previous estimates to influence the objective.

\textbf{The TD-VCL objectives generalize a spectrum of Continual Learning algorithms}. As a final remark, in \textbf{Appendix \ref{app:spectrum}}, we show that, based on the choice of hyperparameters, the TD($\lambda$)-VCL objective forms a family of learning algorithms that span from Vanilla VCL to n-Step KL Regularization. Fundamentally, it mixes different targets of MC approximations for expected log-likelihood and KL regularization. This process is similar to how TD($\lambda$) and $n$-step TD mix MC updates and TD predictions in Reinforcement Learning, effectively providing a mechanism to strike a balance between the variance from MC estimations and the bias from bootstrapping \citep{10.5555/3312046}.

\section{Experiments and Discussion}


Our central hypothesis is that for Bayesian CL, leveraging multiple past posterior estimates mitigates the impact of compounded errors inherent to the VCL objective, thus alleviating the problem of Catastrophic Forgetting. We now provide an experimental setup for validation. Specifically, we evaluate this hypothesis by analyzing the questions highlighted in Section \ref{sec:experiments}.


\textbf{Implementation}. We use a Gaussian mean-field approximate posterior and assume a Gaussian prior $\mathcal{N}(0, \sigma^{2}\boldsymbol{I})$, and parameterize all distributions as deep networks. For all variational objectives, we compute the KL term analytically and employ Monte Carlo approximations for the expected log-likelihood terms, leveraging the reparametrization trick \citep{Kingma2014} for computing gradients. We employed likelihood-tempering \citep{loo2021generalized} to prevent variational over-pruning \citep{trippe2018overpruning}. Lastly, for test-time evaluation, we compute the posterior predictive distribution by marginalizing out the approximated posterior via Monte-Carlo sampling. We provide further detail about architecture and training in Appendix \ref{app:implementation} and our code\footnote{Our code is available at \url{https://github.com/luckeciano/TD-VCL}}.

\textbf{Comparison Methods}. We compare TD-VCL and n-Step KL VCL against several methods. We first evaluate non-variational naive methods for CL: \textbf{Online MLE} naively applies maximum likelihood estimation in the current task data. It serves as a lower bound for other methods, as well as a way to evaluate how challenging the benchmark is. \textbf{Batch MLE} applies maximum likelihood estimation considering a buffer of current and old task data. Next, we adopt the following variational methods for direct comparison in the Bayesian CL setting: \textbf{VCL}, introduced by \citet{v.2018variational}, optimizes the objective in Equation \ref{eq3}. \textbf{VCL CoreSet} is a VCL variant that incorporates a replay set to mitigate any residual forgetting \citep{v.2018variational}. \textbf{UCL} \cite{10.5555/3454287.3454682} is another variational method that implements adaptive regularization based on the notion of node-wise uncertainty. Finally, \textbf{UCB} \cite{Ebrahimi2020Uncertainty-guided} also optimizes the objective of Equation \ref{eq3} but adapts the learning rate for each parameter based on their uncertainty. Particularly for UCL and UCB, we compare them with the proposed \textbf{TD-UCL} and \textbf{TD-UCB}, which incorporate the introduced objective into UCL and UCB, respectively.

\textbf{Benchmarks}. We evaluate five benchmarks for Continual Learning (CL). First, we introduce three new benchmarks: \textbf{PermutedMNIST-Hard}, \textbf{SplitMNIST-Hard}, and \textbf{SplitNotMNIST-Hard}. These are more challenging versions of traditional CL benchmarks with similar names. They are significantly harder due to two key restrictions. First, the amount of replay memory that any method can use is limited in both dataset size and the number of tasks. As empirically shown in Appendix \ref{app:robust_eval}, this creates a much more acute scenario of Catastrophic Forgetting. Second, they enforce the adoption of single-head classifiers. As also shown in Appendix H, this requires the model to account for the potential negative transfer learning among tasks, which makes MNIST/NotMNIST-based benchmarks non-trivial for current research. Next, we also evaluate on two other popular CL benchmarks: \textbf{CIFAR100-10} and \textbf{TinyImageNet-10}. Both benchmarks are very challenging classification problems, particularly in our setting where no pre-trained representations are used. In Appendix \ref{app:benchmarks}, we detail all benchmark tasks and specific constraints adopted for robust evaluation.

\begin{table*}[t] 
\vspace{-8pt} 
\caption{\textbf{Quantitative comparison on the PermutedMNIST-Hard, SplitMNIST-Hard, and SplitNotMNIST-Hard benchmarks}. Each column presents the average accuracy across the past $t$ observed tasks. Results are reported with two standard deviations across ten seeds. Top two results are in \textbf{bold}, while noticeably lower results are in \textcolor{lightgray}{gray}. TD-VCL objective consistently outperforms standard VCL variants, especially when the number of observed tasks increase. } 
\centering 
\setlength{\tabcolsep}{2pt} 
\begin{tabular}{l c c c c c c c c c c} 
\toprule 
& & & \multicolumn{5}{c}{\underline{\textbf{PermutedMNIST-Hard}}} \\ 
 & t = 2 & t = 3 & t = 4 & t = 5 & t = 6 & t = 7 & t = 8 & t = 9 & t = 10\\ 
\midrule 
\small Online MLE & \textcolor{lightgray}{0.87\scriptsize±0.07\normalsize} & \textcolor{lightgray}{0.77\scriptsize±0.06\normalsize} & \textcolor{lightgray}{0.73\scriptsize±0.08\normalsize} & \textcolor{lightgray}{0.69\scriptsize±0.08\normalsize} & \textcolor{lightgray}{0.65\scriptsize±0.13\normalsize} & \textcolor{lightgray}{0.57\scriptsize±0.16\normalsize} & \textcolor{lightgray}{0.51\scriptsize±0.14\normalsize} & \textcolor{lightgray}{0.46\scriptsize±0.11\normalsize} & \textcolor{lightgray}{0.40\scriptsize±0.08\normalsize}\\ 
\small Batch MLE & 0.95\scriptsize±0.01\normalsize & 0.93\scriptsize±0.01   \normalsize & \textcolor{lightgray}{0.88\scriptsize±0.04\normalsize} & \textcolor{lightgray}{0.83\scriptsize±0.04\normalsize} & \textcolor{lightgray}{0.77\scriptsize±0.10\normalsize} & \textcolor{lightgray}{0.71\scriptsize±0.13\normalsize} & \textcolor{lightgray}{0.64\scriptsize±0.12\normalsize} & \textcolor{lightgray}{0.57\scriptsize±0.11\normalsize} & \textcolor{lightgray}{0.51\scriptsize±0.06\normalsize}\\ 
\small VCL & 0.95\scriptsize±0.00\normalsize & 0.94\scriptsize±0.01\normalsize & 0.93\scriptsize±0.02\normalsize & 0.91\scriptsize±0.02\normalsize & \textcolor{lightgray}{0.89\scriptsize±0.03\normalsize} & \textcolor{lightgray}{0.86\scriptsize±0.03\normalsize} & \textcolor{lightgray}{0.83\scriptsize±0.04\normalsize} & \textcolor{lightgray}{0.80\scriptsize±0.06\normalsize} & \textcolor{lightgray}{0.78\scriptsize±0.04\normalsize}\\ 
\small VCL CoreSet & \textbf{0.96\scriptsize±0.00\normalsize} & \textbf{0.95\scriptsize±0.00\normalsize} & \textbf{0.94\scriptsize±0.00\normalsize} & \textbf{0.93\scriptsize±0.02\normalsize} & 0.91\scriptsize±0.01\normalsize & 0.89\scriptsize±0.02\normalsize & \textcolor{lightgray}{0.86\scriptsize±0.03\normalsize} & \textcolor{lightgray}{0.84\scriptsize±0.04\normalsize} & \textcolor{lightgray}{0.81\scriptsize±0.03\normalsize}\\ 
\small \textbf{n-Step TD-VCL}  & 0.95\scriptsize±0.01\normalsize & 0.94\scriptsize±0.00\normalsize & \textbf{0.94\scriptsize±0.00\normalsize} & \textbf{0.93\scriptsize±0.01\normalsize} & \textbf{0.92\scriptsize±0.01\normalsize} & \textbf{0.91\scriptsize±0.01\normalsize} & \textbf{0.90\scriptsize±0.02\normalsize} & \textbf{0.89\scriptsize±0.01\normalsize} & \textbf{0.88\scriptsize±0.02\normalsize}\\ 
\small \textbf{TD($\lambda$)-VCL} & \textbf{0.97\scriptsize±0.00\normalsize} & \textbf{0.96\scriptsize±0.00\normalsize} & \textbf{0.95\scriptsize±0.00\normalsize} & \textbf{0.94\scriptsize±0.01\normalsize} & \textbf{0.93\scriptsize±0.01\normalsize} & \textbf{0.92\scriptsize±0.01\normalsize} & \textbf{0.91\scriptsize±0.01\normalsize} & \textbf{0.90\scriptsize±0.01\normalsize} & \textbf{0.89\scriptsize±0.02\normalsize}\\ 
\midrule 
\midrule 
& \multicolumn{4}{c}{\underline{\textbf{SplitMNIST-Hard}}} & & \multicolumn{4}{c}{\underline{\textbf{SplitNotMNIST-Hard}}}\\
 & t = 2 & t = 3 & t = 4 & t = 5 & &  t = 2 & t = 3 & t = 4 & t = 5\\ 
 \midrule 
 \small Online MLE  & \textcolor{lightgray}{0.86\scriptsize±0.02\normalsize} & \textcolor{lightgray}{0.61\scriptsize±0.03\normalsize} & \textcolor{lightgray}{0.75\scriptsize±0.04\normalsize} & \textcolor{lightgray}{0.57\scriptsize±0.06\normalsize} & & 0.72\scriptsize±0.02\normalsize & \textcolor{lightgray}{0.61\scriptsize±0.05\normalsize} & \textcolor{lightgray}{0.61\scriptsize±0.00\normalsize} & \textcolor{lightgray}{0.51\scriptsize±0.04\normalsize}\\ 
\small Batch MLE & 0.95\scriptsize±0.04\normalsize & \textcolor{lightgray}{0.65\scriptsize±0.04\normalsize} & \textcolor{lightgray}{0.82\scriptsize±0.04\normalsize} & \textcolor{lightgray}{0.59\scriptsize±0.03\normalsize} & & 0.71\scriptsize±0.02\normalsize & \textcolor{lightgray}{0.65\scriptsize±0.03\normalsize} & \textcolor{lightgray}{0.61\scriptsize±0.00\normalsize} & \textcolor{lightgray}{0.50\scriptsize±0.06\normalsize}\\ 
\small VCL  & 0.87\scriptsize±0.02\normalsize & \textcolor{lightgray}{0.66\scriptsize±0.04\normalsize} & \textcolor{lightgray}{0.82\scriptsize±0.03\normalsize} & 0.64\scriptsize±0.11\normalsize & & 0.69\scriptsize±0.04\normalsize & \textcolor{lightgray}{0.63\scriptsize±0.03\normalsize} & \textcolor{lightgray}{0.60\scriptsize±0.00\normalsize} & \textcolor{lightgray}{0.51\scriptsize±0.06\normalsize}\\ 
\small VCL CoreSet & 0.93\scriptsize±0.04\normalsize & \textcolor{lightgray}{0.68\scriptsize±0.07\normalsize} & 0.84\scriptsize±0.04\normalsize & \textcolor{lightgray}{0.62\scriptsize±0.03\normalsize} & & 0.69\scriptsize±0.04\normalsize & \textcolor{lightgray}{0.65\scriptsize±0.02\normalsize} & \textcolor{lightgray}{0.60\scriptsize±0.01\normalsize} & \textcolor{lightgray}{0.51\scriptsize±0.07\normalsize}\\ 
\small \textbf{n-Step TD-VCL} & \textbf{0.98\scriptsize±0.01\normalsize} & \textbf{0.79\scriptsize±0.08\normalsize} & \textbf{0.88\scriptsize±0.04\normalsize} & \textbf{0.67\scriptsize±0.04\normalsize} & & \textbf{0.72\scriptsize±0.04\normalsize} & \textbf{0.73\scriptsize±0.05\normalsize} & \textbf{0.70\scriptsize±0.04\normalsize} & \textbf{0.58\scriptsize±0.08\normalsize}\\ 
\small \textbf{TD($\lambda$)-VCL} & \textbf{0.98\scriptsize±0.01\normalsize} & \textbf{0.81\scriptsize±0.07\normalsize} & \textbf{0.89\scriptsize±0.03\normalsize} & \textbf{0.66\scriptsize±0.02\normalsize} & & \textbf{0.74\scriptsize±0.02\normalsize} & \textbf{0.73\scriptsize±0.03\normalsize} & \textbf{0.69\scriptsize±0.03\normalsize} & \textbf{0.58\scriptsize±0.09\normalsize}\\ 
\bottomrule 
\end{tabular}
\label{tab:results_hard} 
\end{table*}

\begin{table*}[t] 
\caption{\textbf{Quantitative comparison on the CIFAR100-10 and TinyImagenet-10 benchmarks}. Each column presents the average accuracy across the past $t$ observed tasks. Results are reported with two standard deviations across five seeds. TD-VCL variants consistently outperform the baselines in harder benchmarks with more complex architectures, such as Bayesian CNNs. The full table is available in the Appendix \ref{app:full_results}.} 
\centering 
\setlength{\tabcolsep}{1.5pt} 
\begin{tabular}{l c c c c c c c c c c c c c} 
\toprule 
& \multicolumn{4}{c}{\underline{\textbf{CIFAR100-10}}} & & & \multicolumn{4}{c}{\underline{\textbf{TinyImageNet-10}}}\\
  & t = 4 & t = 6 & t = 8 & t = 10 & &   & t = 4 & t = 6 & t = 8 & t = 10 \\ 
\midrule 
\small Online MLE & \textcolor{lightgray}{0.57\scriptsize±0.06\normalsize} & \textcolor{lightgray}{0.56\scriptsize±0.03\normalsize} & \textcolor{lightgray}{0.53\scriptsize±0.06\normalsize} & \textcolor{lightgray}{0.52\scriptsize±0.04\normalsize} 
& &
& \textcolor{lightgray}{0.45\scriptsize±0.02\normalsize} & \textcolor{lightgray}{0.44\scriptsize±0.01\normalsize} & \textcolor{lightgray}{0.45\scriptsize±0.02\normalsize} & \textcolor{lightgray}{0.44\scriptsize±0.03\normalsize}\\ 
\small Batch MLE & \textcolor{lightgray}{0.58\scriptsize±0.04\normalsize} & \textcolor{lightgray}{0.58\scriptsize±0.05\normalsize} & \textcolor{lightgray}{0.56\scriptsize±0.06\normalsize} & \textcolor{lightgray}{0.54\scriptsize±0.07\normalsize} 
& &
& \textcolor{lightgray}{0.48\scriptsize±0.02\normalsize} & \textcolor{lightgray}{0.48\scriptsize±0.02\normalsize} & \textcolor{lightgray}{0.50\scriptsize±0.02\normalsize} & \textcolor{lightgray}{0.51\scriptsize±0.03\normalsize}\\ 
\small VCL & {0.63\scriptsize±0.02\normalsize} & \textcolor{lightgray}{0.60\scriptsize±0.02\normalsize} & \textcolor{lightgray}{0.61\scriptsize±0.05\normalsize} & \textcolor{lightgray}{0.66\scriptsize±0.01\normalsize} 
& &
&  \textcolor{lightgray}{0.51\scriptsize±0.03\normalsize} & \textcolor{lightgray}{0.51\scriptsize±0.03\normalsize} & \textcolor{lightgray}{0.51\scriptsize±0.02\normalsize} & \textcolor{lightgray}{0.51\scriptsize±0.02\normalsize}\\ 
\small VCL CoreSet &  {0.63\scriptsize±0.03\normalsize} & {0.63\scriptsize±0.02\normalsize} & \textcolor{lightgray}{0.61\scriptsize±0.02\normalsize} & \textcolor{lightgray}{0.65\scriptsize±0.02\normalsize} 
& &
& \textcolor{lightgray}{0.51\scriptsize±0.02\normalsize}  & \textcolor{lightgray}{0.51\scriptsize±0.02\normalsize}  & {0.54\scriptsize±0.02\normalsize}  & {0.54\scriptsize±0.02\normalsize}\\ 
\small \textbf{n-Step TD-VCL}  &  \textbf{0.67\scriptsize±0.02\normalsize} & \textbf{0.65\scriptsize±0.01\normalsize} & \textbf{0.68\scriptsize±0.04\normalsize} & \textbf{0.69\scriptsize±0.02\normalsize} 
& &
&  \textbf{0.55\scriptsize±0.02\normalsize} &  \textbf{0.54\scriptsize±0.02\normalsize} & \textbf{0.56\scriptsize±0.02\normalsize} & \textbf{0.56\scriptsize±0.02\normalsize}\\ 
\small \textbf{TD($\lambda$)-VCL} & \textbf{0.66\scriptsize±0.04\normalsize} & \textbf{0.66\scriptsize±0.02\normalsize} & \textbf{0.67\scriptsize±0.01\normalsize} & \textbf{0.71\scriptsize±0.01\normalsize} 
& &
& \textbf{0.56\scriptsize±0.02\normalsize} & \textbf{0.55\scriptsize±0.03\normalsize} & \textbf{0.56\scriptsize±0.02\normalsize} & \textbf{0.56\scriptsize±0.02\normalsize}\\ 
\bottomrule 
\end{tabular}
\label{tab:results_cnn} 
\end{table*}



\subsection{Experiments}\label{sec:experiments}

We highlight and analyze the following questions to evaluate our hypothesis and proposed method:


\begin{figure*}[t]
\centering
\centerline{\includegraphics[width=1.0\textwidth]{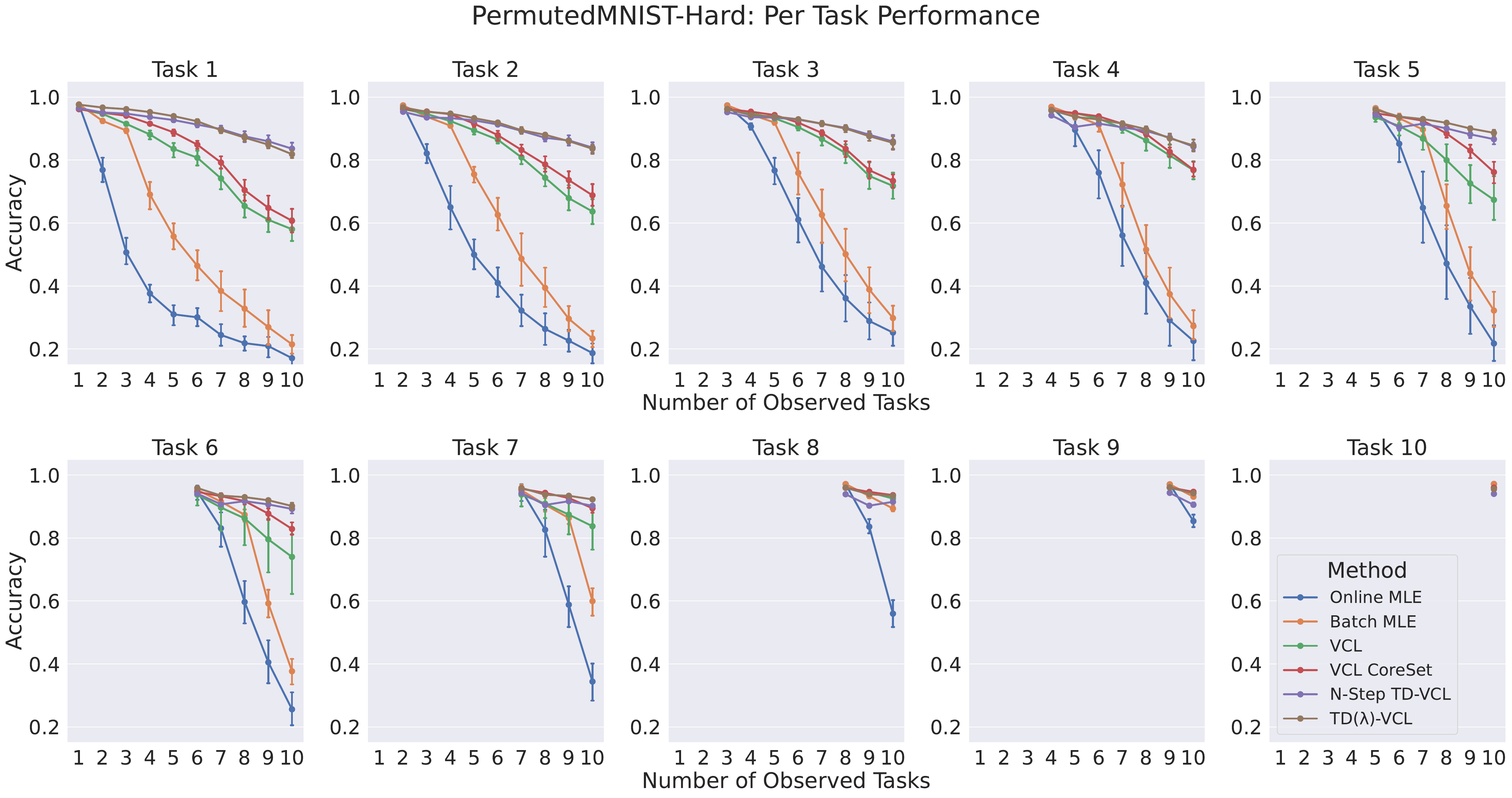}}
\caption{\textbf{Per-task performance (accuracy) over time in the PermutedMNIST-Hard benchmark}. Each plot represents the accuracy of one task (identified in the plot title) while the number of observed tasks increases. We highlight a stronger effect of Catastrophic Forgetting on earlier tasks for the baselines, while TD-VCL objectives are noticeably more robust to this phenomenon. }
\label{fig:permutedmnisttask}
\end{figure*}

\textbf{Do the TD-VCL objectives effectively alleviate Catastrophic Forgetting in challenging CL benchmarks?} Tables \ref{tab:results_hard} and \ref{tab:results_cnn} present the results for all benchmarks. Each column presents the average accuracy across the past $t$ observed tasks, and we show the results starting from $t = 2$ as $t = 1$ is simply single-task learning. For \textbf{PermutedMNIST-Hard},  all methods present high accuracy for $t = 2$, suggesting that they could fit the data successfully. As the number of tasks increases, they start manifesting Catastrophic Forgetting at different levels. While Online and Batch MLE drastically suffer, variational approaches considerably retain old tasks' performance. The Core Set slightly helps VCL, and both n-Step KL and TD-VCL outperform them by a considerable margin, attaining approximately 90\% average accuracy after all tasks.  For completeness, Figure \ref{fig:permutedmnist}  graphically shows the results. We emphasize the discrepancy between variational approaches and naive baselines and highlight the performance boost by adopting TD-VCL objectives.

For \textbf{SplitMNIST-Hard}, we highlight that the TD-VCL objectives also surpass baselines in all configurations, but with a decrease in performance for $t = 5$, suggesting a more challenging setup for addressing Catastrophic Forgetting that opens a venue for future research. We discuss SplitMNIST-Hard results in more detail in Appendix \ref{app:pertask}.  Next, \textbf{SplitNotMNIST-Hard} is a harder benchmark, as the letters come from a diverse set of font styles. Furthermore, we purposely decided to employ a modest network architecture (as for previous benchmarks). Facing hard tasks with less expressive parametrizations will result in higher posterior approximation error. Our goal is to evaluate how the variational methods behave in this setting. Once again, n-step KL and TD-VCL surpassed the baselines after observing more than three tasks. The effect is more pronounced after increasing the number of observed tasks. These objectives are the only ones whose resultant models achieved non-trivial average accuracy after observing all tasks.

Lastly, we analyze the results on \textbf{CIFAR100-10} and \textbf{TinyImageNet-10} in Table \ref{tab:results_cnn}. These are considerably harder benchmarks, as the distribution of images and classes is much richer than the previous benchmarks. Furthermore, they necessarily require better architectures to attain non-trivial performance. Following previous work \cite{pmlr-v80-serra18a, pmlr-v139-kumar21a, 10.5555/3618408.3619129}, we adopt an AlexNet architecture \cite{krizhevsky2009learning}. This setup is ideal for evaluating how the learning objective functions at a larger scale with more complex, deep architectures such as (Bayesian) convolutional networks. Once again, TD-VCL objectives attain superior performance, particularly for later timesteps, where Catastrophic Forgetting is more pronounced in the baselines. This suggests that leveraging multiple posterior estimates for learning is better than only the latest one, even when the approximation error is high.



\textbf{How do the TD-VCL objectives affect per-task performance?} While the previous question analyze the performance averaged across different tasks, we now investigate the accuracy of each task separately in the course of online learning. This setup is relevant since solely considering the averaged accuracy may hide a stronger Catastrophic Forgetting effect from earlier tasks by ``compensating" with higher accuracy from later tasks. We show the results for PermutedMNIST-Hard in Figure \ref{fig:permutedmnisttask} (we defer additional per-task results for Appendix \ref{app:pertask}). It presents a sequence of plots, where each figure represents the accuracy of one task while the number of observed tasks increases. Naturally, the tasks that appear at later stages present fewer data points: for instance, ``Task 10" has a single data point as it does not have test data for earlier timesteps.

As observed, per-task performance explicitly shows a stronger effect of Catastrophic Forgetting for earlier tasks in the adopted baselines. We particularly highlight how non-variational approaches fail for them. In this direction, TD-VCL objectives presented a more robust performance against others. For instance, we highlight the results for Task 1. After observing all tasks, the proposed methods demonstrated accuracy of around 80\% and 85\%. The VCL baselines dropped to 50\% and 60\%, and MLE-based methods failed with only 20\% of accuracy.

\textbf{How does TD-VCL (and variants) perform against other Bayesian CL methods?}

In this work, we focus on Continual Learning with a Bayesian lens. As highlighted in Section \ref{sec:intro}, it provides a formal, uncertainty-aware framework crucial for safety-critical applications and data-efficient learning. Thus, we analyze the TD objective (Equation \ref{eq:td}) on other Bayesian CL methods. UCL and UCB are variational methods that optimize the objective in Equation \ref{eq2} but propose new mechanisms for regularization and learning rate adaptation. Since these enhancements are orthogonal to the objective, we incorporate the proposed TD objective with these methods, resulting in TD-UCL and TD-UCB, respectively. We aim to show that the TD objectives for CL work across different base methods and promote a performance boost on them.

Table \ref{tab:results_td} compares the base methods (VCL, UCL, and UCB) with their TD-enhanced counterparts (complete results in Appendix \ref{app:full_results}). While there is no dominant base method across the benchmarks, the TD counterparts consistently improve upon their respective base methods, especially at later timesteps. These results indicate that the TD objective is robust among different Bayesian CL algorithms and may be incorporated effectively into methods that rely on the variational objective in Equation \ref{eq2}.

\setlength{\dashlinedash}{4pt} 
\setlength{\dashlinegap}{4pt}  
\setlength{\arrayrulewidth}{0.1pt} 
\setlength{\dbltextfloatsep}{8pt} 
\setlength{\dblfloatsep}{0pt}    

\begin{table*}[t] 
\caption{\textbf{Quantitative comparison between Bayesian CL methods and their TD-enhanced counterparts}. The TD-enhanced methods incorporate the objective in Equation \ref{eq:td} in each base method. Although no single base method consistently outperforms the others across all benchmarks, their TD-enhanced versions consistently achieve better performance, particularly at later timesteps. The full table is avaliable in Appendix \ref{app:full_results}.} 
\centering 
\setlength{\tabcolsep}{2pt} 
\begin{tabular}{l c c c c c c c c c c c c c} 
\toprule 
& \multicolumn{4}{c}{\underline{\textbf{PermutedMNIST-Hard}}} & & & & \multicolumn{4}{c}{\underline{\textbf{SplitMNIST-Hard}}}\\
 & t = 4 & t = 6 & t = 8 & t = 10 & & & & t = 2  & t = 3 & t = 4 & t = 5 \\ 
\midrule 
\small VCL &  {0.93\scriptsize±0.02\normalsize} & \textcolor{lightgray}{0.89\scriptsize±0.03\normalsize} & \textcolor{lightgray}{0.83\scriptsize±0.04\normalsize} & \textcolor{lightgray}{0.78\scriptsize±0.04\normalsize} 
& & &
 & \textcolor{lightgray}{0.87\scriptsize±0.02\normalsize} & \textcolor{lightgray}{0.66\scriptsize±0.04\normalsize} & \textcolor{lightgray}{0.82\scriptsize±0.03\normalsize} & \textcolor{lightgray}{0.64\scriptsize±0.11\normalsize}\\ 
 
\small \textbf{TD($\lambda$)-VCL} &  \textbf{0.95\scriptsize±0.00\normalsize} & \textbf{0.93\scriptsize±0.01\normalsize} & \textbf{0.91\scriptsize±0.01\normalsize} & \textbf{0.89\scriptsize±0.02\normalsize} 
& & &
 & \textbf{0.98\scriptsize±0.01\normalsize} & \textbf{0.79\scriptsize±0.08\normalsize} & \textbf{0.88\scriptsize±0.04\normalsize} & \textbf{0.67\scriptsize±0.04\normalsize}\\ 
 \hdashline
\small UCL & {0.94\scriptsize±0.00\normalsize} & \textcolor{lightgray}{0.89\scriptsize±0.02\normalsize} & \textcolor{lightgray}{0.83\scriptsize±0.06\normalsize} & \textcolor{lightgray}{0.73\scriptsize±0.12\normalsize} 
& & &
& \textcolor{lightgray}{0.88\scriptsize±0.04\normalsize} & \textcolor{lightgray}{0.68\scriptsize±0.03\normalsize} & \textcolor{lightgray}{0.83\scriptsize±0.03\normalsize} & \textcolor{lightgray}{0.66\scriptsize±0.06\normalsize}\\ 
\small \textbf{TD($\lambda$)-UCL} &  \textbf{0.95\scriptsize±0.00\normalsize} & \textbf{0.92\scriptsize±0.02\normalsize} & \textbf{0.88\scriptsize±0.04\normalsize} & \textbf{0.84\scriptsize±0.04\normalsize} 
& & &
 & \textbf{0.97\scriptsize±0.01\normalsize}  & \textbf{0.85\scriptsize±0.06\normalsize}  & \textbf{0.90\scriptsize±0.02\normalsize}  & \textbf{0.70\scriptsize±0.04\normalsize}\\ 
 \hdashline
\small UCB  & {0.92\scriptsize±0.01\normalsize} & {0.89\scriptsize±0.02\normalsize} & \textcolor{lightgray}{0.86\scriptsize±0.02\normalsize} & \textcolor{lightgray}{0.83\scriptsize±0.02\normalsize} 
& & &
&  \textcolor{lightgray}{0.85\scriptsize±0.16\normalsize} &  \textcolor{lightgray}{0.79\scriptsize±0.12\normalsize} & \textcolor{lightgray}{0.83\scriptsize±0.06\normalsize} & \textcolor{lightgray}{0.75\scriptsize±0.10\normalsize}\\ 
\small \textbf{TD($\lambda$)-UCB} & \textbf{0.93\scriptsize±0.00\normalsize} & \textbf{0.91\scriptsize±0.01\normalsize} & \textbf{0.90\scriptsize±0.01\normalsize} & \textbf{0.88\scriptsize±0.02\normalsize} 
& & &
 & \textbf{0.93\scriptsize±0.02\normalsize} & \textbf{0.89\scriptsize±0.03\normalsize} & \textbf{0.87\scriptsize±0.03\normalsize} & \textbf{0.80\scriptsize±0.03\normalsize}\\ 
\midrule
\midrule
& \multicolumn{4}{c}{\underline{\textbf{CIFAR100-10}}} & & & & \multicolumn{4}{c}{\underline{\textbf{TinyImageNet-10}}}\\
 &  t = 4 & t = 6 & t = 8 & t = 10 & & & & t = 4 & t = 6 & t = 8 & t = 10 \\ 
\midrule 
\small VCL &  {0.63\scriptsize±0.02\normalsize} & \textcolor{lightgray}{0.60\scriptsize±0.02\normalsize} & \textcolor{lightgray}{0.61\scriptsize±0.05\normalsize} & \textcolor{lightgray}{0.66\scriptsize±0.01\normalsize} 
& &
& &  \textcolor{lightgray}{0.51\scriptsize±0.03\normalsize} & \textcolor{lightgray}{0.51\scriptsize±0.03\normalsize} & \textcolor{lightgray}{0.51\scriptsize±0.02\normalsize} & \textcolor{lightgray}{0.51\scriptsize±0.02\normalsize}\\ 
\small \textbf{TD($\lambda$)-VCL} &  \textbf{0.66\scriptsize±0.04\normalsize} & \textbf{0.66\scriptsize±0.02\normalsize} & \textbf{0.67\scriptsize±0.01\normalsize} & \textbf{0.71\scriptsize±0.01\normalsize} 
& &
& & \textbf{0.56\scriptsize±0.02\normalsize} & \textbf{0.55\scriptsize±0.03\normalsize} & \textbf{0.56\scriptsize±0.02\normalsize} & \textbf{0.56\scriptsize±0.06\normalsize}\\ 
\hdashline
\small UCL & {0.64\scriptsize±0.05\normalsize} & \textcolor{lightgray}{0.60\scriptsize±0.05\normalsize} & \textcolor{lightgray}{0.58\scriptsize±0.02\normalsize} & \textcolor{lightgray}{0.62\scriptsize±0.02\normalsize} 
& &
&  & 0.52\scriptsize±0.03\normalsize & \textcolor{lightgray}{0.51\scriptsize±0.02\normalsize} & \textcolor{lightgray}{0.52\scriptsize±0.02\normalsize} & \textcolor{lightgray}{0.50\scriptsize±0.03\normalsize}\\ 
\small \textbf{TD($\lambda$)-UCL} & \textbf{0.64\scriptsize±0.01\normalsize} & \textbf{0.70\scriptsize±0.02\normalsize} & \textbf{0.66\scriptsize±0.03\normalsize} & \textbf{0.67\scriptsize±0.03\normalsize} 
& &
& & \textbf{0.54\scriptsize±0.01\normalsize}  & \textbf{0.54\scriptsize±0.01\normalsize}  & \textbf{0.55\scriptsize±0.01\normalsize}  & \textbf{0.56\scriptsize±0.01\normalsize}\\ 
\hdashline
\small UCB  &  {0.66\scriptsize±0.02\normalsize} & {0.66\scriptsize±0.03\normalsize} & \textcolor{lightgray}{0.65\scriptsize±0.01\normalsize} & \textcolor{lightgray}{0.66\scriptsize±0.01\normalsize} 
& &
& & 0.51\scriptsize±0.02\normalsize &  \textcolor{lightgray}{0.48\scriptsize±0.04\normalsize} & \textcolor{lightgray}{0.45\scriptsize±0.02\normalsize} & \textcolor{lightgray}{0.42\scriptsize±0.03\normalsize}\\ 
\small \textbf{TD($\lambda$)-UCB} &  \textbf{0.66\scriptsize±0.01\normalsize} & \textbf{0.67\scriptsize±0.01\normalsize} & \textbf{0.68\scriptsize±0.01\normalsize} & \textbf{0.70\scriptsize±0.01\normalsize} 
& &
& & \textbf{0.52\scriptsize±0.01\normalsize} & \textbf{0.51\scriptsize±0.02\normalsize} & \textbf{0.50\scriptsize±0.03\normalsize} & \textbf{0.47\scriptsize±0.02\normalsize}\\ 
\bottomrule 
\end{tabular}
\label{tab:results_td} 
\end{table*}

\textbf{How do the TD-VCL objectives behave with the choice of the hyperparameters $n$, $\lambda$, and the likelihood-tempering parameter $\beta$?} The proposed learning objectives introduce two new hyperparameters: $n$ (the number of considered previous posterior estimates in the learning target) and $\lambda$ for TD($\lambda$)-VCL (which controls the level of influence for each past posterior estimate). Furthermore, it also inherits the $\beta$ parameter from VCL. Hence, we evaluate the sensitivity of the proposed objectives concerning these hyperparameters, presenting results and detailed discussion in Appendix \ref{app:ablation}. We highlight three main findings. First, similarly to VCL, TD-VCL objectives are sensitive to the likelihood-tempering hyperparameter. Second, increasing $n$ is beneficial up to a certain point, from which it becomes detrimental, suggesting the existence of an optimal range for leveraging posterior estimates. Lastly, TD-VCL objectives present robustness over the choice of $\lambda$, with a more pronounced effect when the number of observed tasks increases.

\section{Closing Remarks}\label{sec:remarks}

In this work, we presented a new family of variational objectives for Continual Learning, namely Temporal-Difference VCL. TD-VCL is an unbiased proxy of the standard VCL objective but leverages several previous posterior estimates to alleviate the compounding error caused by recursive approximations. We showed that TD-VCL represents a spectrum of Continual Learning algorithms and is equivalent to a discounted sum of n-step Temporal-Difference targets. Lastly, we empirically presented that it helps address Catastrophic Forgetting, surpassing Bayesian CL baselines in several challenging benchmarks.

\textbf{Limitations}. Despite being theoretically principled and attaining superior performance, TD-VCL presents limitations. First, the hyperparameters $n$ and $\lambda$ depend on the evaluated setting, which may require certain tuning. Second, the objectives rely on past posterior estimates, which may increase memory requirements. Still, we believe this is not a major limitation as TD-VCL suits well modern deep Bayesian architectures that target smaller parameter subspaces for posterior approximation \citep{yang2024bayesian, dwaracherla2024efficient, melo2024deepbayesianactivelearning}.

\textbf{Future Work}. While presenting connections with Temporal-Difference methods, TD-VCL is not an RL algorithm. Further mathematical connections with Markov Decision/Reward Processes formalism are left as future work. Another interesting direction is to apply TD-VCL objectives for other problems that involve sequential variational inference, such as probabilistic meta-learning \citep{10.5555/3327546.3327622, Zintgraf2020VariBAD:}.

\newpage

\begin{ack}
    The authors thank Panagiotis Tigas for insightful discussions on variational inference.  Luckeciano C. Melo acknowledges funding from the Air Force Office of Scientific Research (AFOSR) European Office of Aerospace Research \& Development (EOARD) under grant number FA8655-21-1-7017. Yarin Gal is supported by a Turing AI Fellowship financed by the UK government’s Office for Artificial Intelligence, through UK Research and Innovation (grant reference EP/V030302/1) and delivered by the Alan Turing Institute.
\end{ack}



{
\small
\bibliographystyle{unsrtnat}
\bibliography{references}
}


\clearpage
\newpage
\appendix

\section{Derivation of the n-Step KL Regularization Objective}\label{app:n-step}

In this Section, we prove Proposition ~\ref{prop:nstep}:

\propnstep*

\begin{proof}

Starting from Equation \ref{eq2}, we can expand it as a sum of equal terms and utilize the recursive property (Equation \ref{eq1}) to expand these terms:
\begin{equation} \label{nstep1}
\begin{split}
     q_{t}(\boldsymbol{\theta}) &= \argmin_{q \in \mathcal{Q}} \mathscr{D}_{KL}(q(\boldsymbol{\theta}) \mid \mid \frac{1}{Z_{t}} q_{t-1}(\boldsymbol{\theta})p(\mathcal{D}_{t} \mid \boldsymbol{\theta})) \\
    &= \argmin_{q \in \mathcal{Q}}  \frac{n}{n} \mathscr{D}_{KL}(q(\boldsymbol{\theta}) \mid \mid \frac{1}{Z_{t}} q_{t-1}(\boldsymbol{\theta})p(\mathcal{D}_{t} \mid \boldsymbol{\theta})) \\
    &= \argmin_{q \in \mathcal{Q}} \frac{1}{n} \Bigg[\mathscr{D}_{KL}(q(\boldsymbol{\theta}) \mid \mid \frac{1}{Z_{t}} q_{t-1}(\boldsymbol{\theta})p(\mathcal{D}_{t} \mid \boldsymbol{\theta})) \\
    &  \hspace{17mm} + \mathscr{D}_{KL}(q(\boldsymbol{\theta}) \mid \mid \frac{1}{Z_{t}Z_{t-1}} q_{t-2}(\boldsymbol{\theta})p(\mathcal{D}_{t} \mid \boldsymbol{\theta})p(\mathcal{D}_{t-1} \mid \boldsymbol{\theta})) + \dots \\
    & \hspace{17mm} + \mathscr{D}_{KL}(q(\boldsymbol{\theta}) \mid \mid \frac{1}{\prod_{i=0}^{n-1}Z_{t-i}} q_{t-n}(\boldsymbol{\theta}) \prod_{i=0}^{n-1}p(\mathcal{D}_{t-i} \mid \boldsymbol{\theta})) \Bigg] \\
    &= \argmin_{q \in \mathcal{Q}} \frac{1}{n} \Bigg[\mathscr{D}_{KL}(q_{t}(\boldsymbol{\theta}) \mid \mid q_{t-1}(\boldsymbol{\theta})) - \mathbb{E}_{\boldsymbol{\theta} \sim q_{t}(\boldsymbol{\theta})}[\log{p(\mathcal{D}_{t} \mid \boldsymbol{\theta}})] \\
    & \hspace{17mm} + \mathscr{D}_{KL}(q_{t}(\boldsymbol{\theta}) \mid \mid q_{t-2}(\boldsymbol{\theta})) - \mathbb{E}_{\boldsymbol{\theta} \sim q_{t}(\boldsymbol{\theta})}[\log{p(\mathcal{D}_{t} \mid \boldsymbol{\theta}}) + \log{p(\mathcal{D}_{t-1} \mid \boldsymbol{\theta}})] + \dots \\
    & \hspace{17mm} + \mathscr{D}_{KL}(q_{t}(\boldsymbol{\theta}) \mid \mid q_{t-n}(\boldsymbol{\theta})) - \mathbb{E}_{\boldsymbol{\theta} \sim q_{t}(\boldsymbol{\theta})}[\sum_{i=0}^{n-1}\log{p(\mathcal{D}_{t-i} \mid \boldsymbol{\theta}})] \Bigg] \\
    &= \argmin_{q \in \mathcal{Q}} \frac{1}{n} \Bigg[ \sum_{i = 0}^{n-1} \mathscr{D}_{KL}(q_{t}(\boldsymbol{\theta}) \mid \mid q_{t-i}(\boldsymbol{\theta})) - \mathbb{E}_{\boldsymbol{\theta} \sim q_{t}(\boldsymbol{\theta})}\Big[n\log{p(\mathcal{D}_{t} \mid \boldsymbol{\theta}}) \\
    & \hspace{17mm} + (n - 1)\log{p(\mathcal{D}_{t-1} \mid \boldsymbol{\theta}}) + \dots + \log{p(\mathcal{D}_{t-n+1} \mid \boldsymbol{\theta}}) \Big] \Bigg] \\
    &= \argmax_{q \in \mathcal{Q}}   \mathbb{E}_{\boldsymbol{\theta} \sim q_{t}(\boldsymbol{\theta})}\Big[\sum_{i=0}^{n-1} \frac{(n-i)}{n}\log{p(\mathcal{D}_{t-i} \mid \boldsymbol{\theta}})\Big] - \sum_{i = 0}^{n-1} \frac{1}{n} \mathscr{D}_{KL}(q_{t}(\boldsymbol{\theta}) \mid \mid q_{t-i-1}(\boldsymbol{\theta})). 
\end{split}
\end{equation}

\end{proof}

\newpage
\section{Derivation of the Temporal-Difference VCL Objective}\label{app:td}

Before proving Proposition ~\ref{prop:td}, we start by presenting a well known result for the sum of geometric series:

\begin{restatable}{lemma}{geom}
\label{lem:geom}
The finite sum of a geometric series with $n$ terms, common ratio $\lambda$ and initial term $a$ is given by:

\begin{equation}
    \sum_{k = 0}^{n-1} \lambda^{k}a = \frac{a(1-\lambda^{n})}{(1 - \lambda)} 
\end{equation}

\begin{proof}
    Let $s_{n} = \sum_{k = 0}^{n} \lambda^{k}a$. Hence,

    \begin{equation}
    \begin{split}
        s_{n} -\lambda s_{n} &= \sum_{k = 0}^{n-1} \lambda^{k}a - \lambda \sum_{k = 0}^{n-1} \lambda^{k}a = a - a\lambda^{n} \\
        \iff & s_{n} (1 - \lambda) = a(1 - \lambda^{n}) \\
        \iff & s_{n} = \frac{a(1-\lambda^{n})}{(1 - \lambda)}.
    \end{split}
    \end{equation}
\end{proof}
\end{restatable}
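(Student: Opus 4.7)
\textbf{Proof plan for Lemma \ref{lem:geom}.} The plan is to use the standard ``multiply-and-subtract'' telescoping trick for geometric sums. Let $s_{n} := \sum_{k=0}^{n-1}\lambda^{k}a$ denote the partial sum. First I would write out $s_{n}$ explicitly as $a + a\lambda + a\lambda^{2} + \dots + a\lambda^{n-1}$, then form $\lambda s_{n} = a\lambda + a\lambda^{2} + \dots + a\lambda^{n}$ by reindexing. Subtracting these two expressions produces a telescoping cancellation in which every intermediate $a\lambda^{k}$ term disappears, leaving only the endpoints $a - a\lambda^{n}$ on the right-hand side.

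From there I would factor the left-hand side as $s_{n}(1 - \lambda)$ and divide through by $(1 - \lambda)$ to isolate $s_{n}$, yielding the claimed closed form $s_{n} = a(1 - \lambda^{n})/(1 - \lambda)$. The implicit assumption $\lambda \neq 1$ is fine here because in the intended application (Proposition \ref{prop:td} and Equation \ref{eq:td}) the hyperparameter $\lambda$ is restricted to $[0,1)$, so division by $(1 - \lambda)$ is well-defined; I would add a brief parenthetical noting this.

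Since the statement is elementary, there is no real obstacle. The only care required is bookkeeping of the summation indices when reindexing $\lambda s_{n}$ so that the cancellation is unambiguous. A fully rigorous alternative would be a one-line induction on $n$: the base case $n = 1$ gives $s_{1} = a = a(1-\lambda)/(1-\lambda)$, and the inductive step adds $a\lambda^{n}$ to the hypothesis and simplifies. Either route is short enough that I would present the direct telescoping argument, as already sketched in the excerpt's proof block, since it also motivates why the formula has the particular form $(1 - \lambda^{n})/(1 - \lambda)$ that will reappear as a normalization constant in the TD($\lambda$)-VCL objective.
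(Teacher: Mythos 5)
Your proposal is correct and uses exactly the same multiply-and-subtract telescoping argument as the paper's proof of Lemma \ref{lem:geom}; your added remark that $\lambda \in [0,1)$ guarantees $\lambda \neq 1$ (so the division is legitimate) is a small but worthwhile clarification the paper omits.
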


Now, we prove Proposition ~\ref{prop:td}. 

\proptd*

\begin{proof}
We can use Lemma \ref{lem:geom} to expand the sum of KL terms:

\begin{equation} \label{td}
\begin{split}
     q_{t}(\boldsymbol{\theta}) &= \argmin_{q \in \mathcal{Q}} \mathscr{D}_{KL}(q(\boldsymbol{\theta}) \mid \mid \frac{1}{Z_{t}} q_{t-1}(\boldsymbol{\theta})p(\mathcal{D}_{t} \mid \boldsymbol{\theta})) \\
    &= \argmin_{q \in \mathcal{Q}}  \frac{1 -\lambda}{1 - \lambda^{n}} \frac{1 - \lambda^{n}}{1 - \lambda} \mathscr{D}_{KL}(q(\boldsymbol{\theta}) \mid \mid \frac{1}{Z_{t}} q_{t-1}(\boldsymbol{\theta})p(\mathcal{D}_{t} \mid \boldsymbol{\theta})) \\
    &= \argmin_{q \in \mathcal{Q}} \frac{1 -\lambda}{1 - \lambda^{n}} \Bigg[\mathscr{D}_{KL}(q(\boldsymbol{\theta}) \mid \mid \frac{1}{Z_{t}} q_{t-1}(\boldsymbol{\theta})p(\mathcal{D}_{t} \mid \boldsymbol{\theta})) \\
    &  \hspace{24mm} + \lambda\mathscr{D}_{KL}(q(\boldsymbol{\theta}) \mid \mid \frac{1}{Z_{t}Z_{t-1}} q_{t-2}(\boldsymbol{\theta})p(\mathcal{D}_{t} \mid \boldsymbol{\theta})p(\mathcal{D}_{t-1} \mid \boldsymbol{\theta})) + \dots \\
    & \hspace{24mm} + \lambda^{n-1}\mathscr{D}_{KL}(q(\boldsymbol{\theta}) \mid \mid \frac{1}{\prod_{i=0}^{n-1}Z_{t-i}} q_{t-i}(\boldsymbol{\theta}) \prod_{i=0}^{n-1}p(\mathcal{D}_{t-i} \mid \boldsymbol{\theta})) \Bigg] \\
   &= \argmin_{q \in \mathcal{Q}} \frac{1 -\lambda}{1 - \lambda^{n}} \Bigg[\mathscr{D}_{KL}(q_{t}(\boldsymbol{\theta}) \mid \mid q_{t-1}(\boldsymbol{\theta})) - \mathbb{E}_{\boldsymbol{\theta} \sim q_{t}(\boldsymbol{\theta})}[\log{p(\mathcal{D}_{t} \mid \boldsymbol{\theta}})] \\
    & \hspace{24mm} + \lambda\mathscr{D}_{KL}(q_{t}(\boldsymbol{\theta}) \mid \mid q_{t-2}(\boldsymbol{\theta})) - \lambda\mathbb{E}_{\boldsymbol{\theta} \sim q_{t}(\boldsymbol{\theta})}[\log{p(\mathcal{D}_{t} \mid \boldsymbol{\theta}}) + \log{p(\mathcal{D}_{t-1} \mid \boldsymbol{\theta}})] + \dots \\
    & \hspace{24mm} + \lambda^{n-1}\mathscr{D}_{KL}(q_{t}(\boldsymbol{\theta}) \mid \mid q_{t-n}(\boldsymbol{\theta})) - \lambda^{n-1}\mathbb{E}_{\boldsymbol{\theta} \sim q_{t}(\boldsymbol{\theta})}[\sum_{i=0}^{n-1}\log{p(\mathcal{D}_{t-i} \mid \boldsymbol{\theta}})] \Bigg] \\
    &= \argmin_{q \in \mathcal{Q}} \frac{1 -\lambda}{1 - \lambda^{n}} \Bigg[ \sum_{i = 0}^{n-1} \lambda^{i}\mathscr{D}_{KL}(q_{t}(\boldsymbol{\theta}) \mid \mid q_{t-i-1}(\boldsymbol{\theta})) - \mathbb{E}_{\boldsymbol{\theta} \sim q_{t}(\boldsymbol{\theta})}\Big[\sum_{i = 0}^{n-1} \lambda^{i}\log{p(\mathcal{D}_{t} \mid \boldsymbol{\theta}}) \\
    & \hspace{17mm} + \sum_{i = 1}^{n-1} \lambda^{i}\log{p(\mathcal{D}_{t-1} \mid \boldsymbol{\theta}}) + \dots + \lambda^{n-1}\log{p(\mathcal{D}_{t-n+1} \mid \boldsymbol{\theta}}) \Big] \Bigg] \\
    &= \argmin_{q \in \mathcal{Q}} \frac{1 -\lambda}{1 - \lambda^{n}} \Bigg[ \sum_{i = 0}^{n-1} \lambda^{i}\mathscr{D}_{KL}(q_{t}(\boldsymbol{\theta}) \mid \mid q_{t-i-1}(\boldsymbol{\theta})) - \mathbb{E}_{\boldsymbol{\theta} \sim q_{t}(\boldsymbol{\theta})}\Big[\frac{1 -\lambda^{n}}{1 - \lambda}\log{p(\mathcal{D}_{t} \mid \boldsymbol{\theta}}) \\
    & \hspace{17mm} + \frac{\lambda(1 -\lambda^{n-1})}{1 - \lambda}\log{p(\mathcal{D}_{t-1} \mid \boldsymbol{\theta}}) + \dots + \lambda^{n-1}\log{p(\mathcal{D}_{t-n+1} \mid \boldsymbol{\theta}}) \Big] \Bigg] \\
    &= \argmax_{q \in \mathcal{Q}}   \mathbb{E}_{\boldsymbol{\theta} \sim q_{t}(\boldsymbol{\theta})}\Big[\sum_{i=0}^{n-1} \frac{\lambda^{i}(1 -\lambda^{n-i})}{1 - \lambda^{n}}\log{p(\mathcal{D}_{t-i} \mid \boldsymbol{\theta}})\Big] - \sum_{i = 0}^{n-1} \frac{\lambda^{i}(1 - \lambda)}{1 - \lambda^{n}} \mathscr{D}_{KL}(q_{t}(\boldsymbol{\theta}) \mid \mid q_{t-i-1}(\boldsymbol{\theta})). 
\end{split}
\end{equation}

\end{proof}

\newpage

\section{The connection of TD Targets in TD-VCL and Reinforcement Learning}\label{app:rlcon}

In the Section \ref{sec:tdvcl}, we formalize the concept of n-Step Temporal-Difference for the Variational CL objective (Definition \ref{def:td}). In this Section, we reveal the connections between this definition and the widely used Temporal-Difference methods in Reinforcement Learning. Our aim is to clarify why Equation \ref{def:tdstep} indeed represents a temporal-difference target, both in a broad and strict senses.

In a \textbf{broad} sense, \textit{bootstrapping} characterizes a Temporal-Difference target: building a learning target estimate based on previous estimates. Crucially, the leveraged estimates are functions of different timesteps. TD-VCL objectives applies bootstrapping in the KL regularization term, by considering one or more of posteriors estimates from previous timesteps.

In a \textbf{strict} sense, we can show that Equation \ref{def:tdstep} deeply resembles TD targets in Reinforcement Learning. RL assumes the formalism of a Markov Decision Process (MDP), defined by a tuple $\mathcal{M} = (\mathcal{S}, \mathcal{A}, \mathcal{P}, \mathcal{R}, \mathcal{P}_{0}, \gamma, H)$, where $\mathcal{S}$ is a state space, $\mathcal{A}$ is an action space, $\mathcal{P} : \mathcal{S} \times \mathcal{A} \times \mathcal{S} \rightarrow [0, \infty)$ is a transition dynamics, $\mathcal{R} : \mathcal{S} \times \mathcal{A} \rightarrow [-R_{max}, R_{max}]$ is a bounded reward function, $\mathcal{P}_{0}: \mathcal{S} \rightarrow [0, \infty)$ is an
initial state distribution, $\gamma \in [0, 1]$ is a discount factor, and $H$ is the horizon.

The standard RL objective is to find a policy that maximizes the cumulative reward:

\begin{equation}
    \pi^{*}_{\boldsymbol{\theta}} =  \argmax_{\pi} \mathbb{E}_{\pi} [\sum_{k=0}^{H} \gamma^{k} \mathcal{R} (s_{t+k}, a_{t+k})],
\end{equation}

with $ a_{t} \sim \pi_{\boldsymbol{\theta}} (a_{t} \mid s_{t})$, $s_{t} \sim \mathcal{P}(s_{t} \mid s_{t-1}, a_{t-1})$, and $s_{0} \sim \mathcal{P}_{0}(s)$, where $\pi_{\boldsymbol{\theta}} : \mathcal{S} \times \mathcal{A} \rightarrow [0, \infty)$ is a policy parameterized by $\boldsymbol{\theta}$. Hence, we can define the following learning target, which represents a ``value" function at each state $s_{t}$:

\begin{equation}\label{eq:vf}
    v_{\pi}(s_{t}) \coloneqq \mathbb{E}_{\pi} [\sum_{k=0}^{H} \gamma^{k} \mathcal{R} (s_{t+k}, a_{t+k}) \mid s = s_{t}], \forall s_{t} \in \mathcal{S}.
\end{equation}

Naturally, it follows that $\pi^{*}_{\boldsymbol{\theta}} =  \argmax_{\pi}  v_{\pi}(s)$, $\forall s \in \mathcal{S}$. Crucially, we can expand Equation \ref{eq:vf} as follows:

\begin{align}\label{eq:vfexpanded}
   v_{\pi}(s_{t}) &\coloneqq \mathbb{E}_{\pi} [\sum_{k=0}^{H} \gamma^{k} \mathcal{R} (s_{t+k}, a_{t+k}) \mid s = s_{t}] \nonumber \\
    &= \mathbb{E}_{\pi} [\mathcal{R} (s_{t}, a_{t}) + \sum_{k=1}^{H} \gamma^{k} \mathcal{R} (s_{t+k}, a_{t+k}) \mid s = s_{t}]\nonumber \\
    &= \mathbb{E}_{\pi} [\mathcal{R} (s_{t}, a_{t}) + \gamma v_{\pi}(s_{t+1})],\nonumber \\
    &= \mathbb{E}_{\pi} [\mathcal{R} (s_{t}, a_{t}) + \gamma \mathcal{R} (s_{t+1}, a_{t+1}) + \gamma^{2} v_{\pi}(s_{t+2})], \nonumber \\
    &= \mathbb{E}_{\pi} [\sum_{k=0}^{n-1}\gamma^{k}\mathcal{R} (s_{t}, a_{t}) + \gamma^{n} v_{\pi}(s_{t+n})], \forall s_{t} \in \mathcal{S}, n \leq H.
\end{align}

Temporal-Difference methods estimates a learning target directly from Equation \ref{eq:vfexpanded}:

\begin{align}
    \hat{v}_{\pi}(s) \coloneqq \text{TD}_{\text{RL}}(n) =  \underbrace{\mathbb{E}_{\pi} [\sum_{k=0}^{n-1}\gamma^{k}\mathcal{R} (s_{t}, a_{t})]}_{\text{Estimated via MC Sampling}} + \underbrace{\gamma^{n} \hat{v}_{\pi}(s_{t+n})}_{\text{Bootstrapped via past estimations}}, \forall s_{t} \in \mathcal{S}, n \leq H.
\end{align}

Now, we turn our attention back to our Variational Continual Learning setting. The standard VCL objective is given by Equation \ref{eq2}:

\begin{equation}
    q_{t}(\boldsymbol{\theta}) = \argmin_{q \in \mathcal{Q}} \mathscr{D}_{KL}(q(\boldsymbol{\theta}) \mid \mid \frac{1}{Z_{t}} q_{t-1}(\boldsymbol{\theta})p(\mathcal{D}_{t} \mid \boldsymbol{\theta})). \nonumber
\end{equation}

We can similarly define a learning target as a ``value" function which we aim to maximize:

\begin{align}\label{eq:vfclexpanded}
    u_{q(\boldsymbol{\theta})}(t) &\coloneqq - \mathscr{D}_{KL}(q(\boldsymbol{\theta}) \mid \mid \frac{1}{Z_{t}} q_{t-1}(\boldsymbol{\theta})p(\mathcal{D}_{t} \mid \boldsymbol{\theta})) \nonumber \\
    &= \mathbb{E}_{\boldsymbol{\theta} \sim q_{t}(\boldsymbol{\theta})}\Bigg[\log{p(\mathcal{D}_{t} \mid \boldsymbol{\theta}})] + \log{Z_{t}} \Bigg] - \mathscr{D}_{KL}(q_{t}(\boldsymbol{\theta}) \mid \mid q_{t-1}(\boldsymbol{\theta})) \nonumber \\
    &= \mathbb{E}_{\boldsymbol{\theta} \sim q_{t}(\boldsymbol{\theta})}\Bigg[\log{p(\mathcal{D}_{t} \mid \boldsymbol{\theta}})] + \log{Z_{t}} \Bigg] - \mathscr{D}_{KL}(q_{t}(\boldsymbol{\theta}) \mid \mid \frac{1}{Z_{t-1}} q_{t-2}(\boldsymbol{\theta})p(\mathcal{D}_{t-1} \mid \boldsymbol{\theta})) \nonumber \\
    &= \mathbb{E}_{\boldsymbol{\theta} \sim q_{t}(\boldsymbol{\theta})}\Bigg[\log{p(\mathcal{D}_{t} \mid \boldsymbol{\theta}})] + \log{Z_{t}} \Bigg] + u_{q(\boldsymbol{\theta})}(t-1) \nonumber \\
    &= \mathbb{E}_{\boldsymbol{\theta} \sim q_{t}(\boldsymbol{\theta})}\Bigg[\sum_{i=0}^{n-2}\log{p(\mathcal{D}_{t-i} \mid \boldsymbol{\theta}})] + \sum_{i=0}^{n-2}\log{Z_{t-i}} \Bigg] + u_{q(\boldsymbol{\theta})}(t-n+1), n \in \mathbb{N}_{0}, n \leq t.
\end{align}

Similarly to the RL case, it follows that $q_{t}(\boldsymbol{\theta}) = \argmax_{q \in \mathcal{Q}} u_{q(\boldsymbol{\theta})}(t)$. Lastly, we assume the following estimation of the ``value" function defined in Equation \ref{eq:vfclexpanded}:

\begin{align}
     \hat{u}_{q(\boldsymbol{\theta})}(t) &= \mathbb{E}_{\boldsymbol{\theta} \sim q_{t}(\boldsymbol{\theta})}\Bigg[\sum_{i=0}^{n-2}\log{p(\mathcal{D}_{t-i} \mid \boldsymbol{\theta}})] + \sum_{i=0}^{n-2}\log{Z_{t-i}} \Bigg] + \hat{u}_{q(\boldsymbol{\theta})}(t-n+1) \nonumber \\
     &= \underbrace{\mathbb{E}_{\boldsymbol{\theta} \sim q_{t}(\boldsymbol{\theta})}\Bigg[\sum_{i=0}^{n-1}\log{p(\mathcal{D}_{t-i} \mid \boldsymbol{\theta}})] \Bigg]}_{\text{Estimated via MC Sampling}}  - \underbrace{\mathscr{D}_{KL}(q_{t}(\boldsymbol{\theta}) \mid \mid q_{t-n}(\boldsymbol{\theta}))}_{\text{Bootstrapped via past posterior estimations}} + \underbrace{\Bigg[\sum_{i=0}^{n-1}\log{Z_{t-i}} \Bigg]}_{\text{Constant w.r.t $\boldsymbol{\theta}$}}.
\end{align}

We notice that $Z_{t}$ is constant with respect to $\boldsymbol{\theta}$, hence we can disregard it and still have the same learning target. Thus, we have:

\begin{align}\label{eq:tdcl}
    q_{t}(\boldsymbol{\theta}) &= \argmax_{q \in \mathcal{Q}} \hat{u}_{q(\boldsymbol{\theta})}(t) \nonumber \\
    &= \argmax_{q \in \mathcal{Q}} \mathbb{E}_{\boldsymbol{\theta} \sim q_{t}(\boldsymbol{\theta})}\Bigg[\sum_{i=0}^{n-1}\log{p(\mathcal{D}_{t-i} \mid \boldsymbol{\theta}})] \Bigg]  - \mathscr{D}_{KL}(q_{t}(\boldsymbol{\theta}) \mid \mid q_{t-n}(\boldsymbol{\theta})) + \Bigg[\sum_{i=0}^{n-1}\log{Z_{t-i}} \Bigg] \nonumber \\
    &= \argmax_{q \in \mathcal{Q}} \underbrace{\mathbb{E}_{\boldsymbol{\theta} \sim q_{t}(\boldsymbol{\theta})}\Bigg[\sum_{i=0}^{n-1}\log{p(\mathcal{D}_{t-i} \mid \boldsymbol{\theta}})] \Bigg]  - \mathscr{D}_{KL}(q_{t}(\boldsymbol{\theta}) \mid \mid q_{t-n}(\boldsymbol{\theta}))}_{\text{TD}_{\text{CL}}(n)}.
\end{align}

Equation \ref{eq:tdcl} is exactly n-Step Temporal-Difference target in Definition \ref{def:td} from Section \ref{sec:tdvcl}. The main differences from the CL recursion in Equation \ref{eq:vfclexpanded} and the RL one in Equation \ref{eq:vfexpanded} are two-fold. First, the CL setup is not discounted (or, equivalently, assumes the discount factor $\gamma = 1$). Second, the RL recursion looks over future timesteps, while the CL one looks over past timesteps. Besides these two differences, both scenarios are strongly connected. Particularly, they share the same purpose for leveraging TD targets: to strike a balance between MC estimation (which incurs variance) and bootstrapping (which incurs bias) while estimating the learning objective.

\newpage

\section{TD($\lambda$)-VCL is a discounted sum of n-Step TD targets}\label{app:tdsum}

In Section \ref{sec:tdvcl}, we mention that the TD-VCL learning target is a compound update that averages n-step temporal-difference targets, as per Proposition ~\ref{propsum}, which we prove below.

\propsum*

\begin{proof}
    We start by proving the equivalence between Equation \ref{eq2} and Equation \ref{eq:maxtd}:

    \begin{equation}
\begin{split}\label{eq:nsteptgtdev}
    q_{t}(\boldsymbol{\theta}) &= \argmin_{q \in \mathcal{Q}} \mathscr{D}_{KL}(q(\boldsymbol{\theta}) \mid \mid \frac{1}{Z_{t}} q_{t-1}(\boldsymbol{\theta})p(\mathcal{D}_{t} \mid \boldsymbol{\theta})) \\ 
    &= \argmin_{q \in \mathcal{Q}} \mathscr{D}_{KL}(q(\boldsymbol{\theta}) \mid \mid \frac{1}{\prod_{i=0}^{n-1}Z_{t-i}} q_{t-n}(\boldsymbol{\theta}) \prod_{i=0}^{n-1}p(\mathcal{D}_{t-i} \mid \boldsymbol{\theta})) \\
    &= \argmax_{q \in \mathcal{Q}} \mathbb{E}_{\boldsymbol{\theta} \sim q_{t}(\boldsymbol{\theta})}\Bigg[\sum_{i=0}^{n-1}\log{p(\mathcal{D}_{t-i} \mid \boldsymbol{\theta}})] \Bigg] - \mathscr{D}_{KL}(q_{t}(\boldsymbol{\theta}) \mid \mid q_{t-n}(\boldsymbol{\theta})) \\
    &= \argmax_{q \in \mathcal{Q}} \text{TD}_{t}(n).
\end{split}
\end{equation}

Now, we show that Equation \ref{eq:td} is a discounted sum of n-Step targets:

\begin{equation}
\begin{split}
    q_{t}(\boldsymbol{\theta}) &= \argmax_{q \in \mathcal{Q}} \frac{1 -\lambda}{1 - \lambda^{n}} \Bigg[\mathbb{E}_{\boldsymbol{\theta} \sim q_{t}(\boldsymbol{\theta})}[\log{p(\mathcal{D}_{t} \mid \boldsymbol{\theta}}) - \mathscr{D}_{KL}(q_{t}(\boldsymbol{\theta}) \mid \mid q_{t-1}(\boldsymbol{\theta}))] \\
    & \hspace{24mm} + \lambda\mathbb{E}_{\boldsymbol{\theta} \sim q_{t}(\boldsymbol{\theta})}[\log{p(\mathcal{D}_{t} \mid \boldsymbol{\theta}}) + \log{p(\mathcal{D}_{t-1} \mid \boldsymbol{\theta}})] - \lambda\mathscr{D}_{KL}(q_{t}(\boldsymbol{\theta}) \mid \mid q_{t-2}(\boldsymbol{\theta}))  + \dots \\
    & \hspace{24mm} + \lambda^{n-1}\mathbb{E}_{\boldsymbol{\theta} \sim q_{t}(\boldsymbol{\theta})}[\sum_{i=0}^{n-1}\log{p(\mathcal{D}_{t-i} \mid \boldsymbol{\theta}})] -  \lambda^{n-1}\mathscr{D}_{KL}(q_{t}(\boldsymbol{\theta}) \mid \mid q_{t-n}(\boldsymbol{\theta})) \Bigg] \\
     &= \argmax_{q \in \mathcal{Q}} \frac{1 -\lambda}{1 - \lambda^{n}} \Bigg[ \text{TD}_{t}(1) + \lambda\text{TD}_{t}(2) + \dots \lambda^{n-1} \text{TD}_{t}(n) \Bigg] \\
     &= \argmax_{q \in \mathcal{Q}}   \frac{1 -\lambda}{1 - \lambda^{n}}  \underbrace{\Bigg[\sum_{k=0}^{n-1}\lambda^{k}\text{TD}_{t}(k + 1))\Bigg].}_{\text{Disconted sum of TD targets}} 
\end{split}
\end{equation}
\end{proof}

In Equation \ref{eq:maxtd}, if we set $n = 1$, the n-Step TD target recovers the VCL objective. Furthermore, it is worth highlighting that an n-Step TD target is \textbf{not} the same as n-Step KL Regularization. The latter leverages several previous posterior estimates, while the former only relies on a single estimate. 
 Lastly, we can follow a similar idea to prove that the n-Step KL Regularization objective is a simple average of n-step TD targets, by leveraging the expansion in Equation \ref{nstep1} and identifying the sum of TD targets.

\newpage

\section{TD-VCL: A spectrum of Continual Learning algorithms}\label{app:spectrum}

In this Section, we describe how TD-VCL spans a spectrum of algorithms that mix different levels of Monte Carlo approximation for expected log-likelihood and KL regularization. Our goal is to show that by choosing specific hyperparameters for Equation \ref{eq:td}, one may recover vanilla VCL in one extreme and n-Step KL regularization in the opposite.

Let us consider the TD-VCL objective in Equation \ref{eq:td}:

\begin{equation}
    \argmax_{q \in \mathcal{Q}}   \mathbb{E}_{\boldsymbol{\theta} \sim q_{t}(\boldsymbol{\theta})}\Big[\sum_{i=0}^{n-1} \frac{\lambda^{i}(1 -\lambda^{n-i})}{1 - \lambda^{n}}\log{p(\mathcal{D}_{t-i} \mid \boldsymbol{\theta}})\Big] - \sum_{i = 0}^{n-1} \frac{\lambda^{i}(1 - \lambda)}{1 - \lambda^{n}} \mathscr{D}_{KL}(q_{t}(\boldsymbol{\theta}) \mid \mid q_{t-i-1}(\boldsymbol{\theta})).  \nonumber
\end{equation}

Trivially, if we set $\lambda = 0$, assuming $0^{0} = 1$, it recovers the Vanilla VCL objective, as stated in Equation \ref{eq3}, regardless of the choice of $n$.

More interestingly, we investigate the learning target as $\lambda \to 1$:

\begin{equation}
\begin{split}
     &\lim_{\lambda \to 1} \Bigg\{ \mathbb{E}_{\boldsymbol{\theta} \sim q_{t}(\boldsymbol{\theta})}\Big[\sum_{i=0}^{n-1} \frac{\lambda^{i}(1 -\lambda^{n-i})}{1 - \lambda^{n}}\log{p(\mathcal{D}_{t-i} \mid \boldsymbol{\theta}})\Big] - \sum_{i = 0}^{n-1} \frac{\lambda^{i}(1 - \lambda)}{1 - \lambda^{n}} \mathscr{D}_{KL}(q_{t}(\boldsymbol{\theta}) \mid \mid q_{t-i-1}(\boldsymbol{\theta})) \Bigg\} \\
     &= \mathbb{E}_{\boldsymbol{\theta} \sim q_{t}(\boldsymbol{\theta})}\Big[\sum_{i=0}^{n-1} \underbrace{\lim_{\lambda \to 1} \Big\{ \frac{\lambda^{i}(1 -\lambda^{n-i})}{1 - \lambda^{n}} \Big\}}_{(\text{I})} \log{p(\mathcal{D}_{t-i} \mid \boldsymbol{\theta}})\Big] - \sum_{i = 0}^{n-1} \underbrace{\lim_{\lambda \to 1} \Big\{\frac{\lambda^{i}(1 - \lambda)}{1 - \lambda^{n}}\Big\}}_{(\text{II})} \mathscr{D}_{KL}(q_{t}(\boldsymbol{\theta}) \mid \mid q_{t-i-1}(\boldsymbol{\theta})) \nonumber
\end{split}
\end{equation}

Let us develop $(\text{I})$ and $(\text{II})$ separately by applying the L'H\^{o}pital's rule. First, for $(\text{I})$:

\begin{equation}\label{eq:limit1}
    \begin{split}
        \lim_{\lambda \to 1} \Big\{ \frac{\lambda^{i}(1 -\lambda^{n-i})}{1 - \lambda^{n}} \Big\} &= \lim_{\lambda \to 1} \Big\{ \frac{i\lambda^{i-1}(1-\lambda^{n-i}) - \lambda^{i}(n-i)\lambda^{n-i-1}}{-n\lambda^{n-1}}\Big\} \\ 
        &= \lim_{\lambda \to 1} \Big\{\frac{i\lambda^{i-1} - i\lambda^{n-1} - (n-i)\lambda^{n-1}}{-n\lambda^{n-1}}\Big\} = \frac{n-i}{n}.
    \end{split}
\end{equation}

Now, for $(\text{II})$:

\begin{equation}\label{eq:limit2}
    \begin{split}
        \lim_{\lambda \to 1} \Big\{ \frac{\lambda^{i}(1 -\lambda)}{1 - \lambda^{n}} \Big\} &= \lim_{\lambda \to 1} \Big\{ \frac{i\lambda^{i-1}(1 - \lambda) - \lambda^{i}}{-n\lambda^{n-1}}\Big\} = \frac{1}{n}.
    \end{split}
\end{equation}

Applying Equations \ref{eq:limit1} and \ref{eq:limit2} to TD-VCL objective, we obtain:

\begin{equation}
    \argmax_{q \in \mathcal{Q}}   \mathbb{E}_{\boldsymbol{\theta} \sim q_{t}(\boldsymbol{\theta})}\Big[\sum_{i=0}^{n-1} \frac{(n-i)}{n}\log{p(\mathcal{D}_{t-i} \mid \boldsymbol{\theta}})\Big] - \sum_{i = 0}^{n-1} \frac{1}{n} \mathscr{D}_{KL}(q_{t}(\boldsymbol{\theta}) \mid \mid q_{t-i-1}(\boldsymbol{\theta})), \nonumber 
\end{equation}

which is exactly the N-Step KL Regularization objective.

\newpage

\section{Impact Statement}\label{app:impact}
This work develops a novel learning objective for Bayesian Continual Learning. As such, we believe our work has a positive impact on fundamental research for Machine Learning for three reasons. First, we argue that advancing Continual Learning research is crucial for ensuring the long-term quality of ML models in production systems, as they are vulnerable to potential distributional shifts in the data generation distribution. We also argue that CL is crucial for developing safe autonomous learning agents, as Catastrophic Forgetting may be a dangerous challenge while interacting with the physical or digital world. Second, our particular focus on the Bayesian framework is relevant for designing uncertainty-aware models, which, as argued in Section \ref{sec:intro}, is crucial for robust Machine Learning and general AI safety. Lastly, we provide a solid theoretical connection between Variational Continual Learning methods and Temporal-Difference methods, effectively bridging two seemingly distant disciplines into a unified family of algorithms. We believe this will inspire further research in the intersection of both areas.

\section{Implementation Details and Reproducibility}\label{app:implementation}

\textbf{Operationalization.} For all experiments, we use a Gaussian mean-field approximate posterior and assume a Gaussian prior $\mathcal{N}(0, \sigma^{2}\boldsymbol{I})$ for the variational methods. We parameterize all distributions as deep networks. For all considered objectives, we compute the KL term analytically and employ the Monte Carlo approximations for the expected log-likelihood terms, leveraging the reparametrization trick \citep{Kingma2014} for computing gradients. Lastly, we employ likelihood-tempering \citep{loo2021generalized} to prevent variational over-pruning \citep{trippe2018overpruning}.

\textbf{Model Architecture and Hyperpatameters}. We adopt fully connected neural networks for PermutedMNIST-Hard, SplitMNIST-Hard and SplitNotMNIST-Hard. We choose different depths and sizes depending on the benchmark, and we provide a full list of hyperparameters in Appendix \ref{app:hypers}. For CIFAR100-10 and TinyImageNet-10, we implement a Bayesian version of the AlexNet \cite{10.1145/3065386}, a traditional convolutional neural network architecture, as in prior Bayesian CL literature \cite{10.5555/3692070.3694032}. Crucially, also following prior literature \cite{Ebrahimi2020Uncertainty-guided}, we do not use pre-trained representations, as our goal is to evaluate how the proposed objectives perform in the CL setting, which also requires learning their own robust representations. Finally, for training, we adopt the Adam optimizer \citep{KingBa15} and employ early stopping with a patience parameter of five epochs, which drastically reduces the number of epochs needed for each new task in comparison to previous work \cite{v.2018variational}.


\textbf{Hyperparamter Tuning Protocol.} We conduct hyperparameter tuning for all methods in the paper, including the baselines (VCL, UCL, UCB). We follow a random search for each evaluated benchmark. For a fair comparison, we ensure that all methods use approximately the same compute of 1 GPU day. We provide the search space for each method in our released code. For the proposed methods, we mainly tuned three hyperparameters: $n$ (as in n-Step KL), $\lambda$ (as in TD-VCL), and $\beta$ (the likelihood tempering parameter). We conducted a grid search for each evaluated benchmark, with $n \in \{1, 2, 3, 5, 8, 10\}$, $\lambda \in \{0.0, 0.1, 0.5, 0.8, 0.9, 0.99\}$, and  $\beta \in \{1e-5, 1e-4, 1e-3, 5e-3, 1e-2, 5e-2, 1e-1, 1.0\}$.

\textbf{Reproducibility}. Reported results are averaged across ten different seeds for PermutedMNIST-Hard, SplitMNIST-Hard, and SplitNotMNIST-Hard, and five seeds for CIFAR100-10 and TinyImageNet-10. Error bars represent 95\% confidence intervals, while tables show 2-sigma errors up to two decimal places. We execute all experiments using a single GPU RTX 4090. We provide our implementation code for the proposed methods (TD-VCL, TD-UCB, TD-UCL, and n-Step), as well as considered baselines (Batch MLE, Online MLE, VCL, VCL CoreSet, UCB, and UCL) in \url{https://github.com/luckeciano/TD-VCL}.

\newpage

\section{Hyperparameters}\label{app:hypers}

Table \ref{tab:hypers} provides the shared hyperparameters used in each benchmark. Tables \ref{tab:proposed_methods_hypers} and \ref{tab:baselines_hypers} provided the specific hyperparameters for the proposed methods and baselines, respectively.

\setlength{\textfloatsep}{30pt}
\setlength{\intextsep}{10pt}%

\begin{table}[h]
\centering
{\small
\resizebox{\columnwidth}{!}{
\begin{tabular}{l|c|c|c|c|c}
 & \textbf{PermMNIST-Hard} & \textbf{SplitMNIST-Hard} & \textbf{SplitNotMNIST-Hard} & \textbf{CIFAR100-10} & \textbf{TinyImageNet-10} \\
\hline
\textbf{Batch Size} & 256 & 256 & 256 & 256 & 256  \\

\textbf{Max Epochs} & 100 & 100 & 100 & 100 & 100   \\

\textbf{NN Architecture} & [100, 100] & [256, 256]  & [150, 150, 150, 150] & AlexNet & AlexNet \\

\textbf{Number of Heads} & 1 & 1 & 1 & 10 & 10  \\
\textbf{Learning Rate} & 1e-3 & 1e-3 & 1e-3 & 1e-3 & 1e-3 \\

\end{tabular}
}
}
\caption{Training hyperparameters. These are shared across all evaluated methods.}
\label{tab:hypers}
\end{table}

\begin{table}[h]
\centering
{\small
\resizebox{\columnwidth}{!}{
\begin{tabular}{llccccc}
& & \textbf{PermMNIST-Hard} & \textbf{SplitMNIST-Hard} & \textbf{SplitNotMNIST-Hard} & \textbf{CIFAR100-10} & \textbf{TinyImageNet-10} \\
\hline
\multirow{2}{*}{\textbf{n-Step KL}}
& \textbf{$n$} & 5 & 4 & 5 & 5 & 2 \\
& \textbf{$\beta$} & 5e-3 & 5e-2 & 5e-2 & 3e-5 & 1e-9 \\[7pt]

\multirow{3}{*}{\textbf{TD($\lambda$)-VCL}} 
& \textbf{$n$} & 8 & 4 & 3 & 10 & 2 \\
& \textbf{$\lambda$} & 0.5 & 0.8 & 0.1 & 0.5 & 0.1 \\
& \textbf{$\beta$} & 1e-3 & 5e-2 & 1e-3 & 1e-5 & 1e-9 \\[7pt]

\multirow{3}{*}{\textbf{TD($\lambda$)-UCL}}  
& \textbf{$n$} & 8 & 4 & 3 & 5 & 2 \\
& \textbf{$\lambda$} & 0.5 & 0.8 & 0.1 & 0.8 & 0.5 \\
& \textbf{$\beta$} & 1e-3 & 5e-2 & 1e-3 & 1e-5 & 1e-7 \\[7pt]

\multirow{3}{*}{\textbf{TD($\lambda$)-UCB}} 
& \textbf{$n$} & 8 & 4 & 3 & 8 & 3 \\
& \textbf{$\lambda$} & 0.5 & 0.8 & 0.1 & 0.8 & 0.1 \\
& \textbf{$\beta$} & 1e-3 & 5e-2 & 1e-3 & 1e-5 & 1e-5 \\
\end{tabular}
}}
\caption{Hyperparameters for different methods across benchmarks.}
\label{tab:proposed_methods_hypers}
\end{table}

\begin{table}[h]
\centering
{\small
\resizebox{\columnwidth}{!}{
\begin{tabular}{llccccc}
& & \textbf{PermMNIST-Hard} & \textbf{SplitMNIST-Hard} & \textbf{SplitNotMNIST-Hard} & \textbf{CIFAR100-10} & \textbf{TinyImageNet-10} \\
\hline
\multirow{1}{*}{\textbf{VCL}}
& \textbf{$\beta$} & 5e-3 & 5e-3 & 5e-3 & 5e-4 & 1e-5 \\[7pt]

\multirow{5}{*}{\textbf{UCL}}
& \textbf{$\alpha$} & 1.0 & 10.0 & 0.5 & 1.0 & 10.0 \\
& \textbf{$\beta$} & 0.001 & 1.0 & 0.001 & 0.001 & 1.0 \\
& \textbf{$\gamma$} & 0.01 & 1.0 & 1.0 & 0.005 & 0.1 \\
& r & 0.5 & 0.5 & 0.5 & 0.5 & 0.5 \\
& \textbf{$\beta_{kl}$} & 5e-3 & 1e-3 & 1e-5 & 1e-4 & 1e-7 \\[7pt]

\multirow{2}{*}{\textbf{UCB}}
& \textbf{$\alpha$} & 1.0 & 1.0 & 0.1 & 10.0 & 100.0 \\
& \textbf{$\beta$} & 1e-2 & 1e-2 & 5e-2 & 5e-5 & 1e-5 \\
\end{tabular}
}}
\caption{Hyperparameters for different methods across benchmarks.}
\label{tab:baselines_hypers}
\end{table}





\newpage

\section{PermutedMNIST-Hard, SplitMNIST-Hard, and SplitNotMNIST-Hard: Introducing Higher Standards for MNIST/NotMNIST-based Continual Learning Benchmarks}\label{app:robust_eval}

Popular Continual Learning benchmarks, such as PermutedMNIST, SplitMNIST, and SplitNotMNIST, \citep{goodfellow2015empirical, 10.5555/3305890.3306093, v.2018variational} provide an effective experimental setup. These benchmarks offer tasks that, while conceptually simple in isolation, present a challenging task-streaming setup that highlights the phenomenon of Catastrophic Forgetting. This combination facilitates the study of Continual Learning methods through rapid iterations and modest deep architectures, making it ideal for academic settings. Nonetheless, we argue that the “unrestricted” versions of these benchmarks are either trivially addressed by simple baselines or do not reflect a challenging evaluation setup for Catastrophic Forgetting in current Bayesian CL research. This observation motivates our work to incorporate certain restrictions in the considered methods, resulting in a more challenging setup for Continual Learning while maintaining the benchmarks’ original desiderata.

\begin{figure*}[h]
\centering
\includegraphics[width=1.0\textwidth]{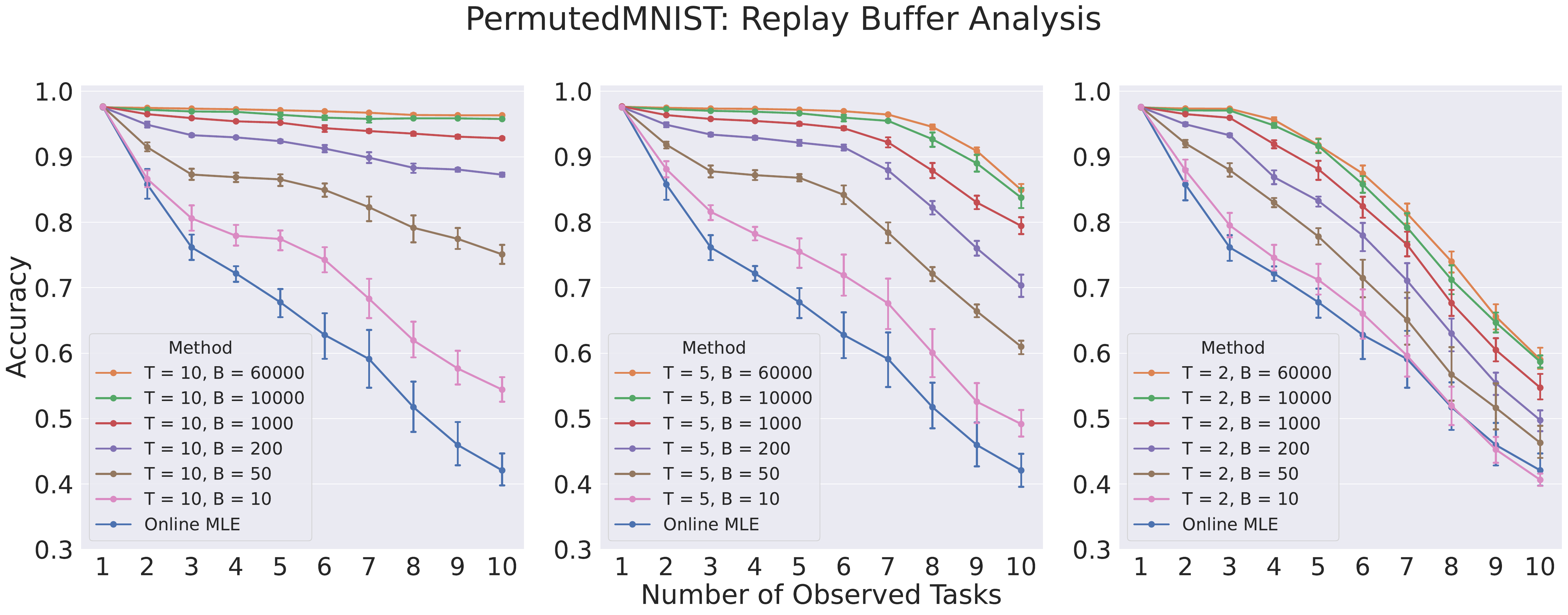}
\caption{\textbf{A Replay Buffer analysis on the PermutedMNIST}. Each curve represents a model re-trained on a buffer composed of ``$T$" previous tasks, ``$B$" examples of each. Online MLE only considers the current task. Allowing ``unlimited" access to previous task data trivializes the CL setting, and a simple MLE baseline is enough to attain strong results. Nevertheless, as we restrict the replay buffer in size and number of tasks, the benchmark becomes substantially more challenging and shows signs of Catastrophic Forgetting.}
\label{fig:memory_analysis}
\end{figure*}

\textbf{Restricting replay memory size imposes a new challenge for MNIST/NotMNIST CL benchmarks}. Figure \ref{fig:memory_analysis} presents MLE models trained on different levels of previous tasks` data (besides the data from the current task) for the classic PermutedMNIST benchmark. Online MLE means no usage of data from previous tasks. On the flip side, we re-train the remaining models considering the data of $T$ previous tasks, with $B$ examples of each. It shows that allowing access to all the old tasks is enough for an MLE model to maintain high accuracy even when presenting to only a set as tiny as 200 examples. As we reduce the number of old tasks in the buffer, performance decreases, showing clear signs of Catastrophic Forgetting. For $T = 2$, all models present an accuracy lower than 60\% regardless of the volume of old task data. Therefore, in order to impose a harder evaluation setup, we impose additional restrictions for re-training in prior tasks. For PermutedMNIST-Hard, we restrict re-training to the two most recent past tasks, with  200 examples per task; for SplitMNIST-Hard and SplitNotMNIST-Hard, we allow only the most recent past task with 40 examples. As shown in Figure \ref{fig:memory_analysis}, MLE-based methods do not perform well in this setting. Crucially, these adopted replay buffers are very small in comparison with the training data of the current task, which is more realistic than retaining the full data. Nonetheless, they strictly follow the core set sizes used in prior work \cite{v.2018variational}, ensuring that the adopted baselines (e.g., VCL CoreSet) work as proposed and promoting a fair comparison.

\textbf{``Single-Head" Classifiers prevents the saturation of PermutedMNIST, SplitMNIST, and SplitNotMNIST}. ``Multi-Head" networks train a different classifier for each task on top of a shared backbone. The goal is to alleviate Catastrophic Forgetting by disregarding the effect of negative transfer among tasks. While this may be acceptable for harder datasets where multi-head architecture is necessary to avoid trivial performance, current methods with multi-head classifiers already saturates the classic MNIST/NotMNIST benchmarks, achieving accuracy above 99\%. For empirical evidence, we evaluate the methods on SplitMNIST (which allows multi-head architecture, Figure \ref{fig:splitmnist}) and SplitMNIST-Hard (which restricts to a single-head classifier, Figure \ref{fig:splitmnistsinglehead} in Appendix \ref{app:pertask}). In the former, all baselines trivially attain high average accuracy; in the latter, all methods face a much more challenging setup. Hence, PermutedMNIST-Hard, SplitMNIST-Hard, and SplitNotMNIST-Hard enforces single-head architecture.

\begin{figure}[!htpb]
\centering
\includegraphics[width=1.0\textwidth]{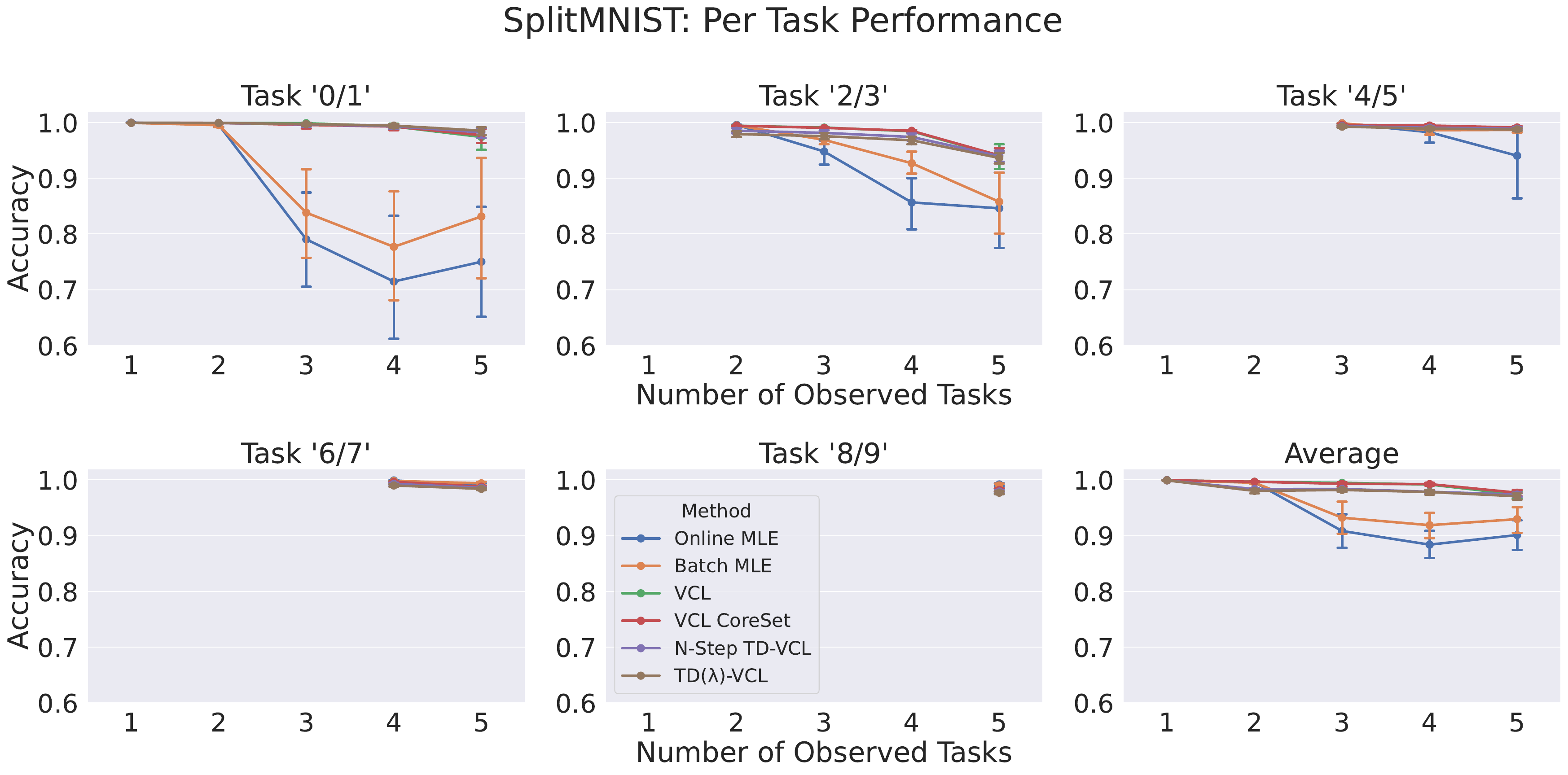}
\caption{\textbf{SplitMNIST results}. The first five plots show results per task, and the last one is an average across tasks. As a consequence of multi-head networks simplifying the Continual Learning challenge, all methods attain high accuracy. In particular, variational methods accuracies ranging from 97\% and 98\%. In constrast, SplitMNIST-Hard in Figure \ref{fig:splitmnistsinglehead}, provides a considerably more challenging CL benchmark.}
\label{fig:splitmnist}
\end{figure}

Lastly, we highlight that all evaluated methods -- including the proposed ones -- are subject to the adopted restrictions highlighted in this Section. Therefore, they are trained in the same data with the same parametrization, ensuring a fair comparison setup.

\newpage
\section{Benchmarks Description}\label{app:benchmarks}


\textbf{PermutedMNIST-Hard}. This benchmark uses the MNIST dataset. Each task corresponds to a different permutation of the pixels in the MNIST data. Similarly to MNIST, PermutedMNIST is a multi-class classification problem to recognize the handwritten digit associated with the image. The benchmark runs 10 successive tasks, and each evaluation iteration considers the performance in all past tasks. For the ``Hard" version, we restrict any method in two ways, as described in Appendix \ref{app:robust_eval}: first, replay buffers are restricted to the \textit{two most recent tasks}, with a fixed set of \textit{200 data points per task}; second, we restrict the model architectures to single-head classifiers.

\textbf{SplitMNIST-Hard}. This benchmark also considers the MNIST dataset but in a binary classification setting. The model selects between two different digits. Five tasks from the MNIST dataset arrive in sequence: 0/1, 2/3, 4/5, 6/7, and 8/9, and evaluation considers the performance in all past tasks. For the ``Hard" version, we apply the similar restrictions: replay buffers restricted to the \textit{most recent task}, with a fixed set of \textit{40 data points}. We also restrict the model architectures to single-head classifiers.

\textbf{SplitNotMNIST-Hard}. This benchmark contains a similar structure to SplitMNIST-Hard, but it leverages the notMNIST dataset. This more challenging task contains characters from diverse font styles, comprising 400,000 examples. The five tasks are A/F, B/G, C/H, D/I, and E/J. The ``Hard" version applies the same restrictions as in SplitMNIST-Hard.

\textbf{CIFAR100-10}. This challenging benchmark contains 10 different tasks, each of them comprising 20 distinct classes from the CIFAR-100 dataset \cite{krizhevsky2009learning}. Evaluation considers the performance in all previous tasks. The dataset contains 50,000 images (5,000 per task) for training/validation and 10,000 images (1,000 per task) for evaluation. For this benchmark, we restrict the replay buffer to contain \textit{200 data points per task}.

\textbf{TinyImageNet-10}. This challenging benchmark also contains 10 different tasks, each of them comprising 20 distinct classes from the ImageNet dataset \cite{5206848}. The dataset contains 100,000 images (10,000 per task) for training/validation and 10,000 images (1,000 per task) for evaluation. Particularly for TinyImageNet-10, we also adopt a memory restriction: replay buffers are restricted to the \textit{three most recent tasks}, with a fixed set of \textit{200 data points per task}.



\newpage

\section{Per Task Performance: Additional Results}\label{app:pertask}


\subsection{SplitMNIST-Hard}

Figure \ref{fig:splitmnistsinglehead} presents the per-task performance for the SplitMNIST-Hard results. As expected, the performance of all methods drops substantially in comparison to traditional SplitMNIST, as the CL becomes considerably harder. However, we highlight that n-Step KL and TD-VCL presented better results than VCL and VCL CoreSet, demonstrating again the effectiveness of the proposed learning objectives. 

Interestingly, the average accuracy does not decrease monotonically, as one might typically expect due to Catastrophic Forgetting. Instead, it drops significantly after Task 3 and then rises again. This evidence indicates two potential dynamics of transfer learning: a negative transfer from Task 1 while learning Task 3, and a positive transfer from Task 1 while learning Task 4. For instance, the digit ``0" from Task 1 is rounded, similar to the digits ``5" and ``6" in Tasks 3 and 4, respectively. Additionally, the digit ``1" is composed of straight lines, much like the digits ``4" and ``7." We believe that the employed architecture, given its inherent and intended simplicity, relies on features of this nature. Therefore, more expressive architectures that better disentangle these features may potentially prevent the negative transfer. However, exploring this possibility is beyond our scope, as our focus is on studying the effects of Catastrophic Forgetting in Continual Learning.

\begin{figure}[!htpb]
\centering
\includegraphics[width=1.0\textwidth]{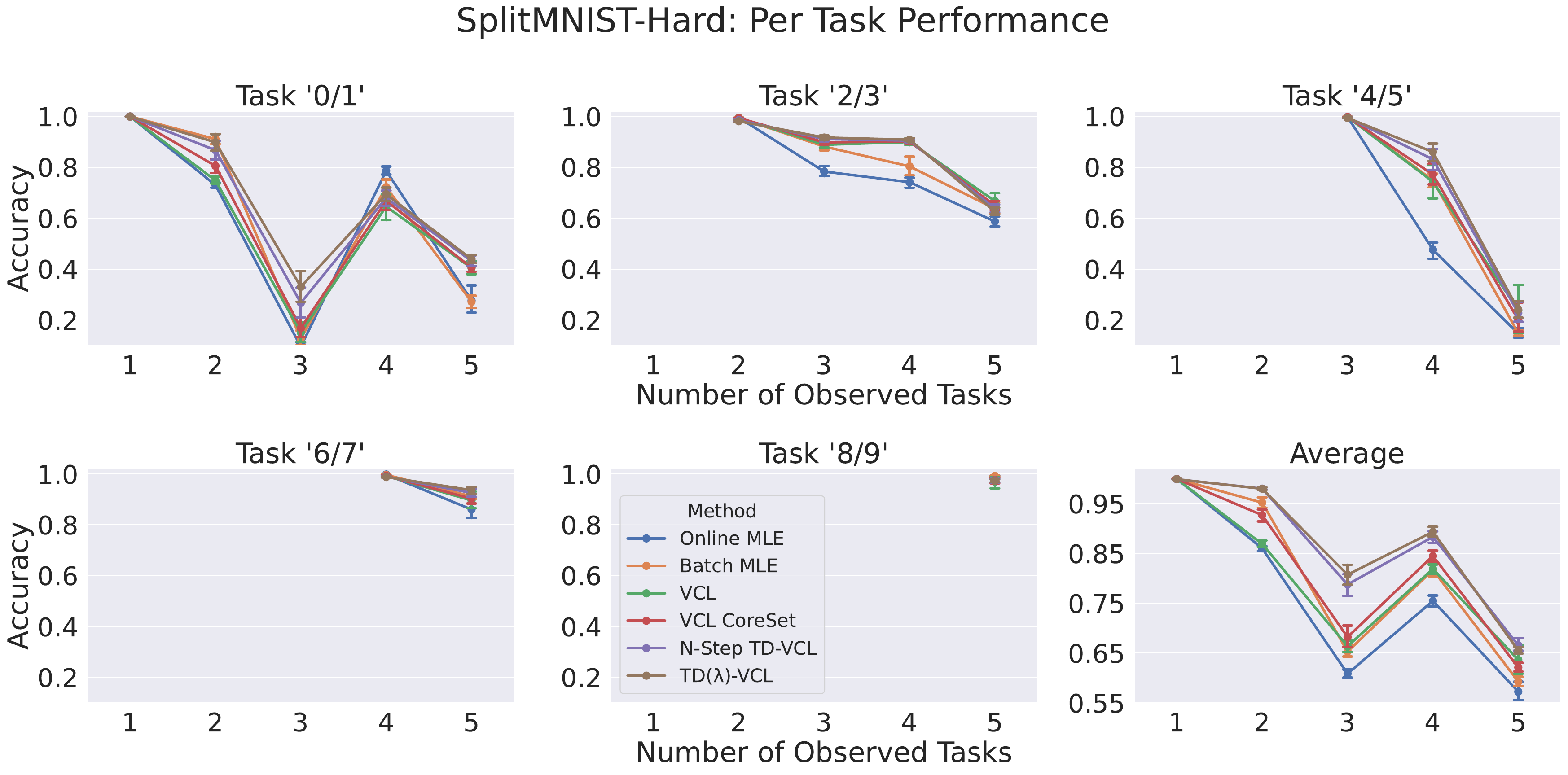}
\caption{\textbf{SplitMNIST-Hard results}. In this more robust evaluation setting, tasks are enforced to share a single classifier with restricted replay memory. Consequently, the effect of Catastrophic Forgetting (and task negative transfer) is explicit. TD-VCL objectives present slightly better average accuracy across tasks in comparison with standard VCL variants.}
\label{fig:splitmnistsinglehead}
\end{figure}

\subsection{SplitNotMNIST-Hard}

In this section, we show per-task performance for SplitNotMNIST-Hard. As highlighted in Section \ref{sec:experiments}, NotMNIST is a considerably harder dataset than MNIST, and the choice of simpler deep architectures naturally results in higher approximation errors. Our goal is to evaluate how the presented methods behave under this circumstance.

\begin{figure}[!htpb]
\centering
\includegraphics[width=1.0\textwidth]{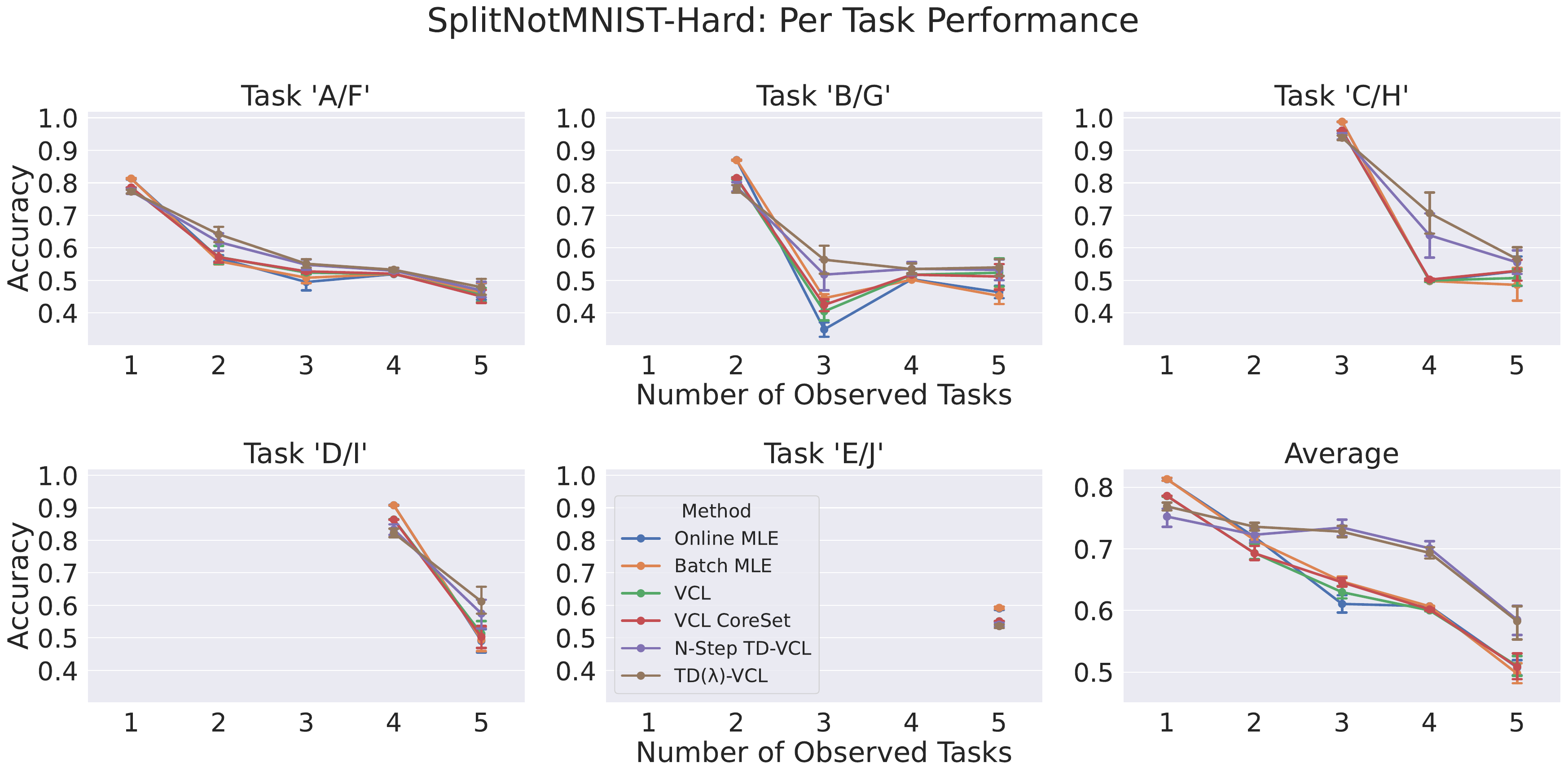}
\caption{\textbf{SplitNotMNIST-Hard results}. The first five plots show results per task, and the last one is an average across them. SplitNotMNIST-Hard is considerably harder to fit with modest deep architectures, leading to a setup where posteriors induce high approximation errors. As a result, the standard VCL variants performs similarly to non-variational approaches. TD-VCL surpasses all methods and shows more robustness to Catastrophic Forgetting under this high approximation error setting.}
\label{fig:splitnotmnist}
\end{figure}

Figure \ref{fig:splitnotmnist} presents the results. As expected, even learning the current task is challenging. This characteristic contrasts with MNIST-based benchmarks, where all models could at least fit the current task almost perfectly. MLE methods fit the current task slightly better since their objectives are not regularized by the prior or previous posterior. However, this same reason caused them to suffer from Catastrophic Forgetting more drastically, as they tend to focus on fitting the current task and disregard past ones. Overall, TD-VCL objectives maintained the best trade-off between plasticity and memory stability, aligning with the results in the other benchmarks.

\subsection{CIFAR100-10}

Figure \ref{fig:per_task_cifar} displays the per-task performance in the CIFAR100-10 benchmark. Non-variational baselines consistently struggle with Catastrophic Forgetting, even in more recent tasks. VCL and VCL CoreSet also show a consistent drop in accuracy as the number of observed tasks increases, although this decline is less noticeable in some cases and occasionally followed by a slight increase in accuracy for certain tasks. In contrast, the proposed TD-VCL objectives demonstrate a significant improvement over the baselines and show little indication of Catastrophic Forgetting, despite the harder challenge posed by the CIFAR100 dataset.

Interestingly, variational methods, which experience less Catastrophic Forgetting, exhibit a surprising effect in some tasks: their accuracy initially drops after observing a few consecutive tasks before subsequently increasing again. For example, in Task 3, this effect is evident across all variational methods. As a result, the average accuracy tends to rise as the total number of observed tasks increases, which is also reported in prior work (see Figure 7a in \citet{10.5555/3454287.3454682}, and Table 2 in \citet{10.5555/3692070.3694032})). We hypothesize that the process of explicit posterior regularization, combined with training on successive tasks, leads to a parameterization that learns features more generalizable across tasks, incurring positive transfer learning.

\begin{figure}[!htpb]
\centering
\includegraphics[width=1.0\textwidth]{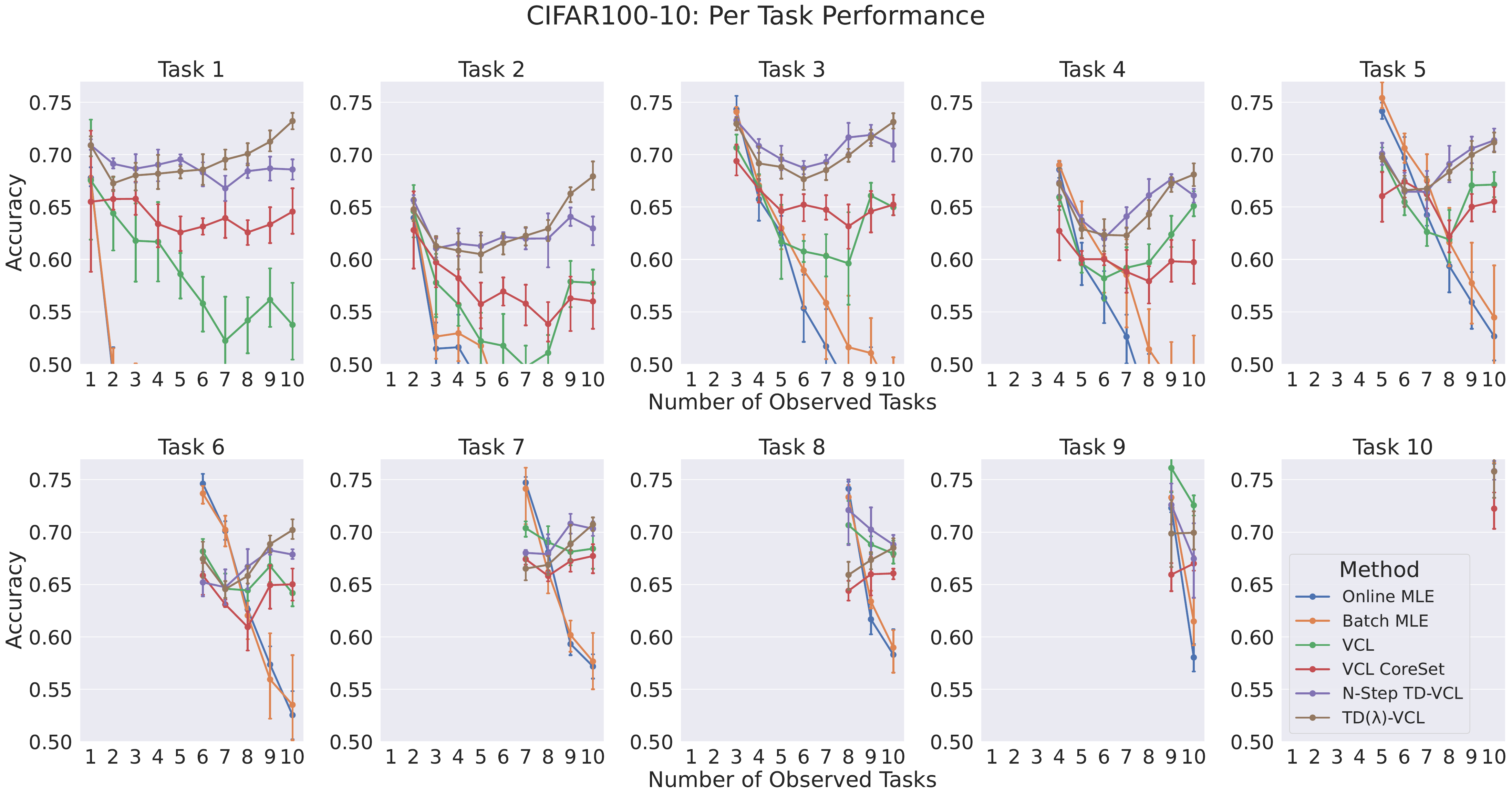}
\caption{\textbf{Per-task performance (accuracy) over time in the CIFAR100-10 benchmark}. Each plot illustrates the accuracy of a specific task (as indicated in the plot title) as the number of observed tasks increases. Non-variational baselines consistently struggle with catastrophic forgetting, while VCL and VCL CoreSet show a mild effect. However, the TD-VCL objectives demonstrate a noticeable improvement over these methods, even in the more challenging setup.}
\label{fig:per_task_cifar}
\end{figure}

\subsection{TinyImageNet-10}

Lastly, Figure \ref{fig:tiny_imagenet_tasks} illustrates the per-task performance in the TinyImageNet-10 benchmark. As seen in previous scenarios, Online MLE consistently fails to achieve continual learning. Interestingly, VCL also encounters difficulties in this more challenging benchmark, showing per-task performance similar to Batch MLE. VCL CoreSet outperforms the standard VCL and achieves performance comparable to the TD-VCL objectives in some tasks. Nevertheless, the TD-VCL objectives consistently demonstrate superior performance across all tasks, reinforcing the findings from the earlier benchmarks.

\begin{figure}[!htpb]
\centering
\includegraphics[width=1.0\textwidth]{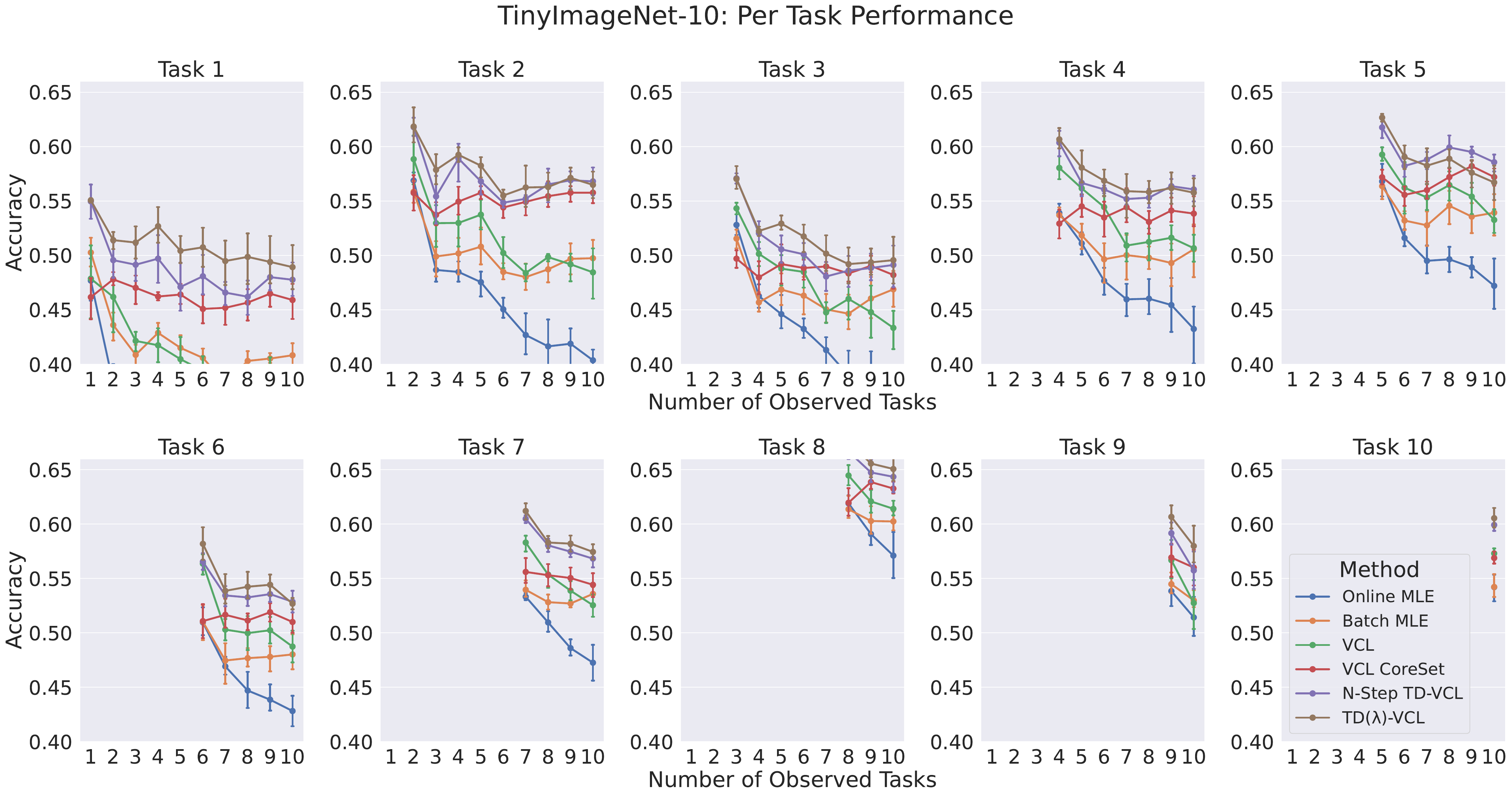}
\caption{\textbf{Per-task performance over time in the TinyImageNet-10 benchmark.}. In the most challenging benchmark presented in this work, we observe similar trends to the previous ones, where TD-VCL objectives show superior performance across tasks.}
\label{fig:tiny_imagenet_tasks}
\end{figure}

\clearpage
\section{Hyperparameters Robustness Analysis}\label{app:ablation}

In this Section, we present robustness studies in the PermutedMNIST-Hard benchmark with respect to the relevant hyperparameters. Our goal is to evaluate how they affect the performance of the proposed methods. 

\subsection{n-Step KL Regularization}

Figure \ref{fig:ablationstep} presents the ablation study of the n-step KL Regularization method in the PermutedMNIST-Hard benchmark. We designed this study to highlight the two most sensitive hyperparameters: $n$, the n-step size, and $\beta$, the likelihood-tempering parameter.

Similarly to VCL, this method is sensitive to the choice of $\beta$. Higher values will prevent the model from fitting new tasks, a manifestation of variational over-pruning. On the other hand, lower values will not retain knowledge properly, suffering from Catastrophic Forgetting. Mild values (0.001, 0.005, 0.01) balanced well this trade-off.

In terms of $n$, we observe benefits of up to 5 steps. Beyond that, the effect saturates, even becoming slightly detrimental. This observation suggests the existence of an optimal range for $n$ while leveraging past posterior estimates.

\begin{figure}[!htpb]
\centering
\includegraphics[width=1.0\textwidth]{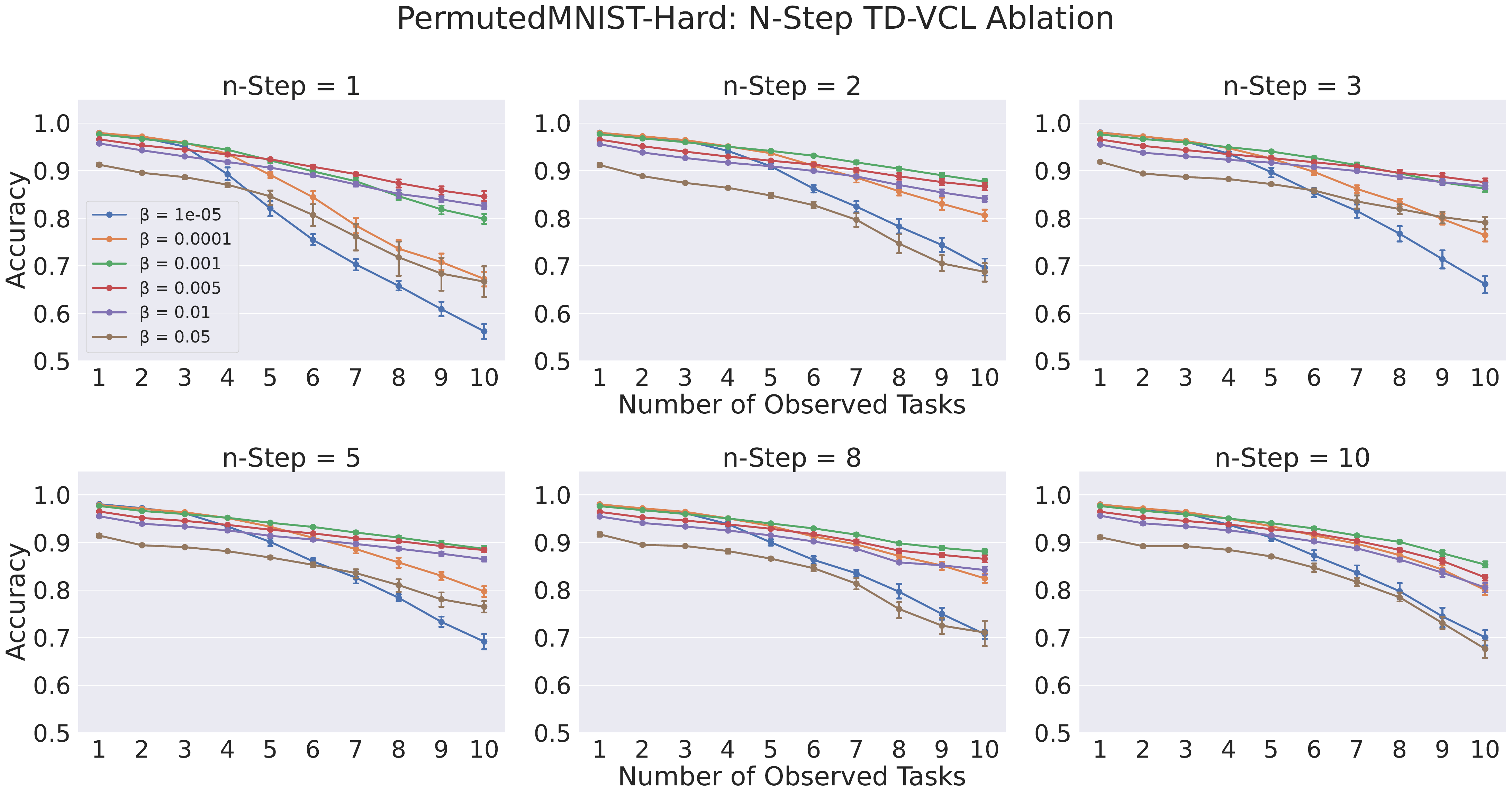}
\caption{\textbf{Hyperparameter Robustness Analysis for n-Step KL Regularization in PermutedMNIST-Hard.} The plots show the effect of the likelihood-tempering parameter $\beta$ for different $n$.  For $\beta$, too high values negatively affect fitting new tasks, and too low values disregard the  regularization of previous posteriors, leading to Catastrophic Forgetting. For $n$, we observe benefits while increasing up to $n = 5$, and the effect saturates. }
\label{fig:ablationstep}
\end{figure}

\subsection{TD($\lambda$)-VCL}

Figure \ref{fig:ablationtdvcl} shows the ablation study for TD-VCL. For this setup, we considered a fixed value of $\beta$, as our hyperparameter search suggested the same trends for n-Step KL Regularization and TD-VCL. Hence, we simplify the analysis to consider only $n$ and $\lambda$.

TD-VCL presents mild sensitivity to the choice of $\lambda$. The effect is more pronounced as the method observes more tasks, with a slight preference for lower values for some choices of $n$. We believe that the choice of $\lambda$ will fundamentally depend on how most recent estimates are better and more informative than old ones. In the case where they present similar approximation errors, the choice of $\lambda$ causes less impact, and, therefore, there is less difference between leveraging N-Step TD-VCL and TD($\lambda$)-VCL objectives.

\begin{figure}[!htpb]
\centering
\includegraphics[width=1.0\textwidth]{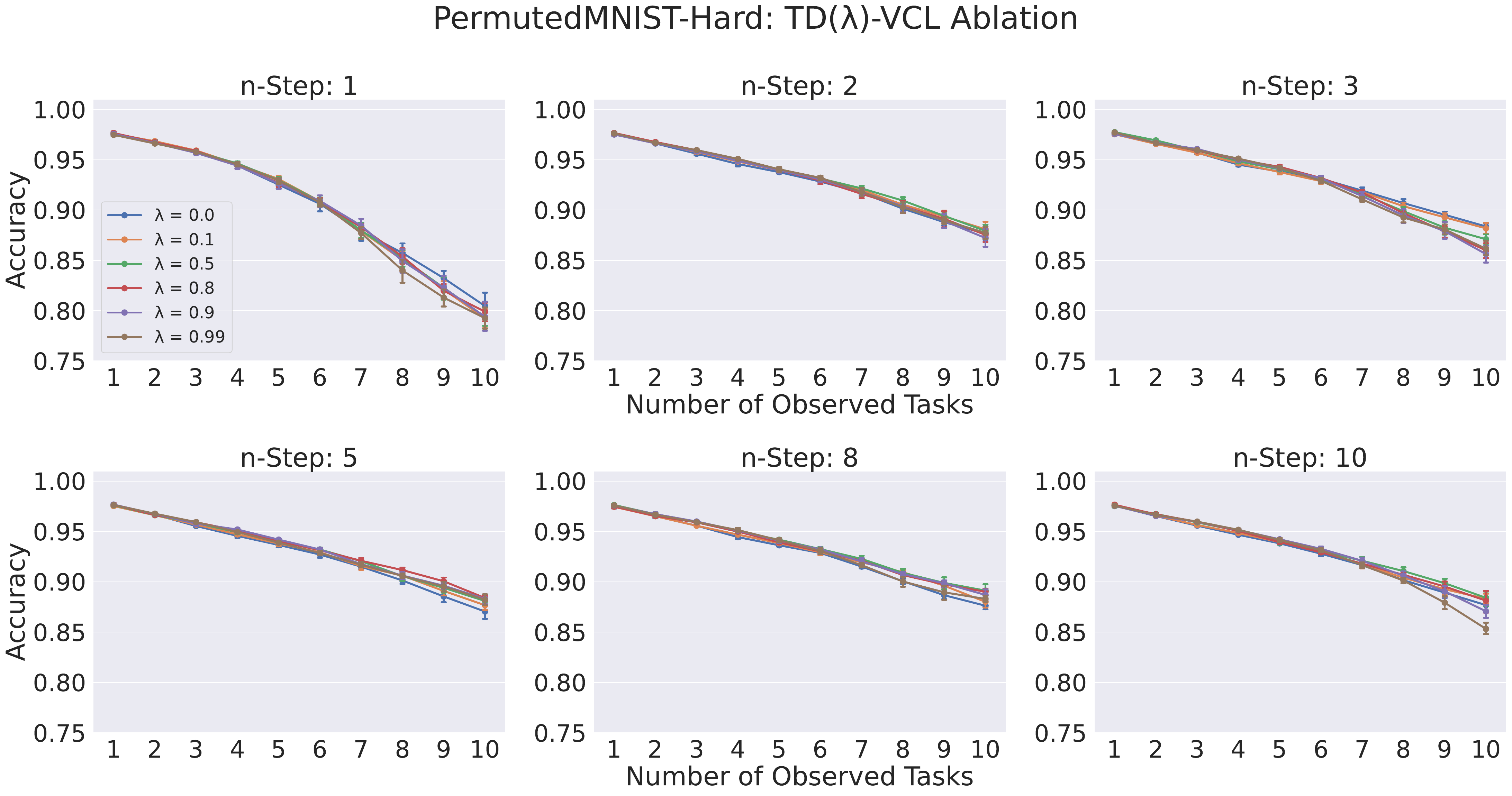}
\caption{\textbf{Hyperparameter Robustness Analysis for TD($\lambda$)-VCL in PermutedMNIST-Hard}. The plots show the effect of $\lambda$ for different choices of $n$. The learning objective presents mild sensitivity to the choice of $\lambda$ in this benchmark, and the effect is more pronounced as the number of observed tasks increases. }
\label{fig:ablationtdvcl}
\end{figure}

\clearpage
\section{Full Table Results}\label{app:full_results}
In this Appendix, we report the full version of Tables \ref{tab:results_hard} and \ref{tab:results_td}, for the sake of completeness. Table \ref{tab:fulltable_results} shows the results on CIFAR100-10 and TinyImageNet-10, considering all timesteps from $t = 2$ to $t = 10$. Table \ref{tab:fulltable_td} shows the results for all benchmarks, including SplitNotMNIST-Hard, for the Bayesian CL methods and their TD-enhanced counterparts.

\begin{table*}[h] 
\caption{\textbf{Full table for quantitative comparison on the CIFAR100-10 and TinyImagenet-10 benchmarks}. Each column presents the average
accuracy across the past t observed tasks. Results are reported with two standard deviations across five seeds. TD-VCL variants
consistently outperform the baselines in harder benchmarks with more complex architectures, such as Bayesian CNNs.} 
\centering 
\setlength{\tabcolsep}{2pt} 
\begin{tabular}{l c c c c c c c c c c} 
\toprule 
& & & \multicolumn{5}{c}{\underline{\textbf{CIFAR100-10}}} \\ 
 & t = 2 & t = 3 & t = 4 & t = 5 & t = 6 & t = 7 & t = 8 & t = 9 & t = 10\\ 
\midrule 
\small Online MLE & \textcolor{lightgray}{0.56\scriptsize±0.05\normalsize} & \textcolor{lightgray}{0.56\scriptsize±0.06\normalsize} & \textcolor{lightgray}{0.57\scriptsize±0.06\normalsize} & \textcolor{lightgray}{0.56\scriptsize±0.04\normalsize} & \textcolor{lightgray}{0.56\scriptsize±0.03\normalsize} & \textcolor{lightgray}{0.55\scriptsize±0.03\normalsize} & \textcolor{lightgray}{0.53\scriptsize±0.06\normalsize} & \textcolor{lightgray}{0.51\scriptsize±0.04\normalsize}  & \textcolor{lightgray}{0.52\scriptsize±0.04\normalsize}\\ 
\small Batch MLE & \textcolor{lightgray}{0.57\scriptsize±0.03\normalsize} & \textcolor{lightgray}{0.58\scriptsize±0.04}   \normalsize & \textcolor{lightgray}{0.58\scriptsize±0.04\normalsize} & \textcolor{lightgray}{0.59\scriptsize±0.04\normalsize} & \textcolor{lightgray}{0.58\scriptsize±0.05\normalsize} & \textcolor{lightgray}{0.58\scriptsize±0.06\normalsize} & \textcolor{lightgray}{0.56\scriptsize±0.06\normalsize} & \textcolor{lightgray}{0.54\scriptsize±0.05\normalsize} & \textcolor{lightgray}{0.54\scriptsize±0.07\normalsize}\\ 
\small VCL & 0.64\scriptsize±0.02\normalsize & \textcolor{lightgray}{0.63\scriptsize±0.03\normalsize} & \textcolor{lightgray}{0.63\scriptsize±0.02\normalsize} & \textcolor{lightgray}{0.60\scriptsize±0.02\normalsize} & \textcolor{lightgray}{0.60\scriptsize±0.02\normalsize} & \textcolor{lightgray}{0.60\scriptsize±0.03\normalsize} & \textcolor{lightgray}{0.61\scriptsize±0.05\normalsize} & \textcolor{lightgray}{0.65\scriptsize±0.02\normalsize} & \textcolor{lightgray}{0.66\scriptsize±0.01\normalsize}\\ 
\small VCL CoreSet & {0.64\scriptsize±0.05\normalsize} & {0.65\scriptsize±0.03\normalsize} & \textcolor{lightgray}{0.63\scriptsize±0.03\normalsize} & \textcolor{lightgray}{0.62\scriptsize±0.03\normalsize} & \textcolor{lightgray}{0.63\scriptsize±0.02\normalsize} & \textcolor{lightgray}{0.63\scriptsize±0.02\normalsize} & \textcolor{lightgray}{0.61\scriptsize±0.02\normalsize} & \textcolor{lightgray}{0.64\scriptsize±0.03\normalsize} & \textcolor{lightgray}{0.65\scriptsize±0.02\normalsize}\\ 
\small \textbf{n-Step TD-VCL}  & 0.67\scriptsize±0.01\normalsize & 0.68\scriptsize±0.01\normalsize & \textbf{0.67\scriptsize±0.02\normalsize} & \textbf{0.67\scriptsize±0.01\normalsize} & \textbf{0.65\scriptsize±0.01\normalsize} & \textbf{0.66\scriptsize±0.01\normalsize} & \textbf{0.68\scriptsize±0.04\normalsize} & \textbf{0.69\scriptsize±0.01\normalsize} & \textbf{0.69\scriptsize±0.02\normalsize}\\ 
\small \textbf{TD($\lambda$)-VCL} & \textbf{0.66\scriptsize±0.02\normalsize} & \textbf{0.67\scriptsize±0.02\normalsize} & \textbf{0.66\scriptsize±0.04\normalsize} & \textbf{0.66\scriptsize±0.01\normalsize} & \textbf{0.66\scriptsize±0.02\normalsize} & \textbf{0.66\scriptsize±0.01\normalsize} & \textbf{0.67\scriptsize±0.01\normalsize} & \textbf{0.69\scriptsize±0.02\normalsize} & \textbf{0.71\scriptsize±0.01\normalsize}\\ 
\midrule 
\midrule 
& & & \multicolumn{5}{c}{\underline{\textbf{TinyImagenet-10}}} \\ 
 & t = 2 & t = 3 & t = 4 & t = 5 & t = 6 & t = 7 & t = 8 & t = 9 & t = 10\\ 
\midrule 
\small Online MLE & \textcolor{lightgray}{0.48\scriptsize±0.03\normalsize} & \textcolor{lightgray}{0.45\scriptsize±0.02\normalsize} & \textcolor{lightgray}{0.45\scriptsize±0.02\normalsize} & \textcolor{lightgray}{0.46\scriptsize±0.02\normalsize} & \textcolor{lightgray}{0.44\scriptsize±0.01\normalsize} & \textcolor{lightgray}{0.44\scriptsize±0.02\normalsize} & \textcolor{lightgray}{0.45\scriptsize±0.02\normalsize} & \textcolor{lightgray}{0.45\scriptsize±0.02\normalsize} & \textcolor{lightgray}{0.44\scriptsize±0.03\normalsize}\\ 
\small Batch MLE & \textcolor{lightgray}{0.50\scriptsize±0.02\normalsize} & \textcolor{lightgray}{0.47\scriptsize±0.02\normalsize} & \textcolor{lightgray}{0.48\scriptsize±0.02\normalsize} & \textcolor{lightgray}{0.49\scriptsize±0.02\normalsize} & \textcolor{lightgray}{0.48\scriptsize±0.02\normalsize} & \textcolor{lightgray}{0.48\scriptsize±0.02\normalsize} & \textcolor{lightgray}{0.50\scriptsize±0.02\normalsize} & \textcolor{lightgray}{0.50\scriptsize±0.02\normalsize} &
\textcolor{lightgray}{0.51\scriptsize±0.03\normalsize}\\ 
\small VCL & 0.53\scriptsize±0.06\normalsize & 0.50\scriptsize±0.02\normalsize & 0.51\scriptsize±0.03\normalsize & 0.52\scriptsize±0.02\normalsize & {0.51\scriptsize±0.03\normalsize} & \textcolor{lightgray}{0.49\scriptsize±0.01\normalsize} & \textcolor{lightgray}{0.51\scriptsize±0.02\normalsize} & \textcolor{lightgray}{0.51\scriptsize±0.02\normalsize} & \textcolor{lightgray}{0.51\scriptsize±0.02\normalsize}\\ 
\small VCL CoreSet & 0.52\scriptsize±0.03\normalsize & 0.50\scriptsize±0.02\normalsize & 0.51\scriptsize±0.02\normalsize & 0.53\scriptsize±0.01\normalsize & 0.51\scriptsize±0.02\normalsize & 0.52\scriptsize±0.01\normalsize & {0.54\scriptsize±0.02\normalsize} & {0.55\scriptsize±0.02\normalsize} & {0.54\scriptsize±0.02\normalsize}\\ 
\small \textbf{n-Step TD-VCL}  & \textbf{0.56\scriptsize±0.02\normalsize} & \textbf{0.54\scriptsize±0.03\normalsize} & \textbf{0.55\scriptsize±0.02\normalsize} & \textbf{0.55\scriptsize±0.02\normalsize} & \textbf{0.54\scriptsize±0.02\normalsize} & \textbf{0.54\scriptsize±0.01\normalsize} & \textbf{0.56\scriptsize±0.02\normalsize} & \textbf{0.56\scriptsize±0.01\normalsize} & \textbf{0.56\scriptsize±0.02\normalsize}\\ 
\small \textbf{TD($\lambda$)-VCL} & \textbf{0.57\scriptsize±0.03\normalsize} & \textbf{0.55\scriptsize±0.02\normalsize} & \textbf{0.56\scriptsize±0.02\normalsize} & \textbf{0.56\scriptsize±0.01\normalsize} & \textbf{0.55\scriptsize±0.03\normalsize} & \textbf{0.55\scriptsize±0.03\normalsize} & \textbf{0.56\scriptsize±0.02\normalsize} & \textbf{0.57\scriptsize±0.02\normalsize} & \textbf{0.56\scriptsize±0.02\normalsize}\\ 
\bottomrule 
\end{tabular}
\label{tab:fulltable_results} 
\end{table*}

\begin{table*}[t] 
\caption{\textbf{Full table for quantitative comparison between Bayesian CL methods and their TD-enhanced counterparts}. The TD-enhanced methods
incorporate the objective in Equation \ref{eq:td} in each base method. Although no single base method consistently outperforms the others across
all benchmarks, their TD-enhanced versions consistently achieve better performance, particularly at later timesteps. } 
\centering 
\setlength{\tabcolsep}{2pt} 
\begin{tabular}{l c c c c c c c c c c} 
\toprule 
& & & \multicolumn{5}{c}{\underline{\textbf{PermutedMNIST-Hard}}} \\ 
 & t = 2 & t = 3 & t = 4 & t = 5 & t = 6 & t = 7 & t = 8 & t = 9 & t = 10\\ 
\midrule 
\small VCL & 0.95\scriptsize±0.00\normalsize & 0.94\scriptsize±0.01\normalsize & 0.93\scriptsize±0.02\normalsize & \textcolor{lightgray}{0.91\scriptsize±0.02\normalsize} & \textcolor{lightgray}{0.89\scriptsize±0.03\normalsize} & \textcolor{lightgray}{0.86\scriptsize±0.03\normalsize} & \textcolor{lightgray}{0.83\scriptsize±0.04\normalsize} & \textcolor{lightgray}{0.80\scriptsize±0.06\normalsize} & \textcolor{lightgray}{0.78\scriptsize±0.04\normalsize} \\ 
\small \textbf{TD($\lambda$)-VCL} & \textbf{0.97\scriptsize±0.00\normalsize} & \textbf{0.96\scriptsize±0.00\normalsize} & \textbf{0.95\scriptsize±0.00\normalsize} & \textbf{0.94\scriptsize±0.01\normalsize} & \textbf{0.93\scriptsize±0.01\normalsize} & \textbf{0.92\scriptsize±0.01\normalsize} & \textbf{0.91\scriptsize±0.01\normalsize} & \textbf{0.90\scriptsize±0.01\normalsize} & \textbf{0.89\scriptsize±0.02\normalsize}\\
\hdashline
\small UCL & 0.97\scriptsize±0.00\normalsize & 0.95\scriptsize±0.01   \normalsize & {0.94\scriptsize±0.01\normalsize} & {0.92\scriptsize±0.02\normalsize} & \textcolor{lightgray}{0.89\scriptsize±0.02\normalsize} & \textcolor{lightgray}{0.86\scriptsize±0.04\normalsize} & \textcolor{lightgray}{0.83\scriptsize±0.06\normalsize} & \textcolor{lightgray}{0.78\scriptsize±0.09\normalsize} & \textcolor{lightgray}{0.73\scriptsize±0.12\normalsize}\\ 
\small \textbf{TD($\lambda$)-UCL} & \textbf{0.97\scriptsize±0.00\normalsize} & \textbf{0.97\scriptsize±0.00\normalsize} & \textbf{0.95\scriptsize±0.00\normalsize} & \textbf{0.94\scriptsize±0.01\normalsize} & \textbf{0.92\scriptsize±0.02\normalsize} & \textbf{0.90\scriptsize±0.02\normalsize} & \textbf{0.88\scriptsize±0.04\normalsize} & \textbf{0.85\scriptsize±0.09\normalsize} & \textbf{0.84\scriptsize±0.04\normalsize}\\ 
\hdashline
\small UCB & 0.93\scriptsize±0.01\normalsize & 0.93\scriptsize±0.01\normalsize & 0.92\scriptsize±0.01\normalsize & 0.90\scriptsize±0.01\normalsize & 0.89\scriptsize±0.02\normalsize & \textcolor{lightgray}{0.87\scriptsize±0.02\normalsize} & \textcolor{lightgray}{0.86\scriptsize±0.02\normalsize} & \textcolor{lightgray}{0.85\scriptsize±0.01\normalsize} & \textcolor{lightgray}{0.83\scriptsize±0.02\normalsize}\\ 
\small \textbf{TD($\lambda$)-UCB} & \textbf{0.94\scriptsize±0.00\normalsize} & \textbf{0.93\scriptsize±0.00\normalsize} & \textbf{0.93\scriptsize±0.00\normalsize} & \textbf{0.92\scriptsize±0.00\normalsize} & \textbf{0.91\scriptsize±0.01\normalsize} & \textbf{0.91\scriptsize±0.01\normalsize} & \textbf{0.90\scriptsize±0.01\normalsize} & \textbf{0.89\scriptsize±0.02\normalsize} & \textbf{0.88\scriptsize±0.02\normalsize}\\ 
\midrule 
\midrule 
& \multicolumn{4}{c}{\underline{\textbf{SplitMNIST-Hard}}} & & \multicolumn{4}{c}{\underline{\textbf{SplitNotMNIST-Hard}}}\\
 & t = 2 & t = 3 & t = 4 & t = 5 & &  t = 2 & t = 3 & t = 4 & t = 5\\ 
 \midrule 
 \small VCL  & \textcolor{lightgray}{0.87\scriptsize±0.02\normalsize} & \textcolor{lightgray}{0.66\scriptsize±0.04\normalsize} & \textcolor{lightgray}{0.82\scriptsize±0.03\normalsize} & \textcolor{lightgray}{0.64\scriptsize±0.11\normalsize} & & \textcolor{lightgray}{0.69\scriptsize±0.04\normalsize} & \textcolor{lightgray}{0.63\scriptsize±0.03\normalsize} & \textcolor{lightgray}{0.60\scriptsize±0.00\normalsize} & \textcolor{lightgray}{0.51\scriptsize±0.06\normalsize}\\ 
 \small \textbf{TD($\lambda$)-VCL} & \textbf{0.98\scriptsize±0.01\normalsize} & \textbf{0.79\scriptsize±0.08\normalsize} & \textbf{0.88\scriptsize±0.04\normalsize} & \textbf{0.67\scriptsize±0.04\normalsize} & & \textbf{0.74\scriptsize±0.02\normalsize} & \textbf{0.73\scriptsize±0.03\normalsize} & \textbf{0.69\scriptsize±0.03\normalsize} & \textbf{0.58\scriptsize±0.09\normalsize}\\ 
 \hdashline
\small UCL & \textcolor{lightgray}{0.88\scriptsize±0.04\normalsize} & \textcolor{lightgray}{0.68\scriptsize±0.03\normalsize} & \textcolor{lightgray}{0.83\scriptsize±0.03\normalsize} & \textcolor{lightgray}{0.66\scriptsize±0.06\normalsize} & & {0.71\scriptsize±0.01\normalsize} & \textcolor{lightgray}{0.63\scriptsize±0.04\normalsize} & {0.61\scriptsize±0.00\normalsize} & \textbf{0.52\scriptsize±0.04\normalsize}\\ 
\small \textbf{TD($\lambda$)-UCL} & \textbf{0.97\scriptsize±0.01\normalsize} & \textbf{0.85\scriptsize±0.06\normalsize} & \textbf{0.90\scriptsize±0.02\normalsize} & \textbf{0.70\scriptsize±0.04\normalsize} & & \textbf{0.72\scriptsize±0.03\normalsize} & \textbf{0.71\scriptsize±0.06\normalsize} & \textbf{0.63\scriptsize±0.02\normalsize} & \textbf{0.51\scriptsize±0.06\normalsize}\\ 
\hdashline
\small UCB  & \textcolor{lightgray}{0.85\scriptsize±0.16\normalsize} & \textcolor{lightgray}{0.79\scriptsize±0.12\normalsize} & \textcolor{lightgray}{0.83\scriptsize±0.06\normalsize} & \textcolor{lightgray}{0.75\scriptsize±0.10\normalsize} & & 0.70\scriptsize±0.08\normalsize & \textcolor{lightgray}{0.63\scriptsize±0.06\normalsize} & \textcolor{lightgray}{0.61\scriptsize±0.01\normalsize} & {0.61\scriptsize±0.05\normalsize}\\ 
\small \textbf{TD($\lambda$)-UCB} & \textbf{0.93\scriptsize±0.02\normalsize} & \textbf{0.89\scriptsize±0.03\normalsize} & \textbf{0.87\scriptsize±0.03\normalsize} & \textbf{0.80\scriptsize±0.03\normalsize} & & \textbf{0.72\scriptsize±0.01\normalsize} & \textbf{0.72\scriptsize±0.01\normalsize} & \textbf{0.70\scriptsize±0.02\normalsize} & \textbf{0.63\scriptsize±0.03\normalsize}\\ 
\midrule
\midrule
& & & \multicolumn{5}{c}{\underline{\textbf{CIFAR100-10}}} \\ 
 & t = 2 & t = 3 & t = 4 & t = 5 & t = 6 & t = 7 & t = 8 & t = 9 & t = 10\\ 
\midrule 
\small VCL & 0.64\scriptsize±0.02\normalsize & \textcolor{lightgray}{0.63\scriptsize±0.03\normalsize} & \textcolor{lightgray}{0.63\scriptsize±0.02\normalsize} & \textcolor{lightgray}{0.60\scriptsize±0.02\normalsize} & \textcolor{lightgray}{0.60\scriptsize±0.02\normalsize} & \textcolor{lightgray}{0.60\scriptsize±0.03\normalsize} & \textcolor{lightgray}{0.61\scriptsize±0.05\normalsize} & \textcolor{lightgray}{0.65\scriptsize±0.02\normalsize} & \textcolor{lightgray}{0.66\scriptsize±0.01\normalsize}\\ 
\small \textbf{TD($\lambda$)-VCL} & \textbf{0.66\scriptsize±0.02\normalsize} & \textbf{0.67\scriptsize±0.02\normalsize} & \textbf{0.66\scriptsize±0.04\normalsize} & \textbf{0.66\scriptsize±0.01\normalsize} & \textbf{0.66\scriptsize±0.02\normalsize} & \textbf{0.66\scriptsize±0.01\normalsize} & \textbf{0.67\scriptsize±0.01\normalsize} & \textbf{0.69\scriptsize±0.02\normalsize} & \textbf{0.71\scriptsize±0.01\normalsize}\\ 
\hdashline
\small UCL & 0.65\scriptsize±0.03\normalsize & 0.66\scriptsize±0.07   \normalsize & {0.64\scriptsize±0.05\normalsize} & \textcolor{lightgray}{0.62\scriptsize±0.04\normalsize} & \textcolor{lightgray}{0.60\scriptsize±0.05\normalsize} & \textcolor{lightgray}{0.60\scriptsize±0.04\normalsize} & \textcolor{lightgray}{0.58\scriptsize±0.02\normalsize} & \textcolor{lightgray}{0.61\scriptsize±0.02\normalsize} & \textcolor{lightgray}{0.62\scriptsize±0.02\normalsize}\\ 
\small \textbf{TD($\lambda$)-UCL} & \textbf{0.68\scriptsize±0.02\normalsize} & \textbf{0.67\scriptsize±0.02\normalsize} & \textbf{0.64\scriptsize±0.01\normalsize} & \textbf{0.70\scriptsize±0.04\normalsize} & \textbf{0.70\scriptsize±0.02\normalsize} & \textbf{0.68\scriptsize±0.03\normalsize} & \textbf{0.66\scriptsize±0.03\normalsize} & \textbf{0.65\scriptsize±0.06\normalsize} & \textbf{0.67\scriptsize±0.03\normalsize}\\ 
\hdashline
\small UCB & \textbf{0.65\scriptsize±0.01\normalsize} & \textbf{0.65\scriptsize±0.02\normalsize} & \textbf{0.66\scriptsize±0.02\normalsize} & 0.66\scriptsize±0.03\normalsize & 0.66\scriptsize±0.03\normalsize & 0.66\scriptsize±0.01\normalsize & \textcolor{lightgray}{0.65\scriptsize±0.01\normalsize} & \textcolor{lightgray}{0.64\scriptsize±0.01\normalsize} & \textcolor{lightgray}{0.66\scriptsize±0.01\normalsize}\\ 
\small \textbf{TD($\lambda$)-UCB}  & \textbf{0.64\scriptsize±0.02\normalsize} & \textbf{0.65\scriptsize±0.02\normalsize} & \textbf{0.66\scriptsize±0.01\normalsize} & \textbf{0.67\scriptsize±0.01\normalsize} & \textbf{0.67\scriptsize±0.01\normalsize} & \textbf{0.68\scriptsize±0.01\normalsize} & \textbf{0.68\scriptsize±0.01\normalsize} & \textbf{0.68\scriptsize±0.02\normalsize} & \textbf{0.70\scriptsize±0.01\normalsize}\\ 
\midrule 
\midrule 
& & & \multicolumn{5}{c}{\underline{\textbf{TinyImagenet-10}}} \\ 
 & t = 2 & t = 3 & t = 4 & t = 5 & t = 6 & t = 7 & t = 8 & t = 9 & t = 10\\ 
\midrule 
\small VCL & \textcolor{lightgray}{0.53\scriptsize±0.06\normalsize} & \textcolor{lightgray}{0.50\scriptsize±0.02\normalsize} & \textcolor{lightgray}{0.51\scriptsize±0.03\normalsize} & \textcolor{lightgray}{0.52\scriptsize±0.02\normalsize} & \textcolor{lightgray}{0.51\scriptsize±0.03\normalsize} & \textcolor{lightgray}{0.49\scriptsize±0.01\normalsize} & \textcolor{lightgray}{0.51\scriptsize±0.02\normalsize} & \textcolor{lightgray}{0.51\scriptsize±0.02\normalsize} & \textcolor{lightgray}{0.51\scriptsize±0.02\normalsize}\\ 
\small \textbf{TD($\lambda$)-VCL} & \textbf{0.57\scriptsize±0.03\normalsize} & \textbf{0.55\scriptsize±0.02\normalsize} & \textbf{0.56\scriptsize±0.02\normalsize} & \textbf{0.56\scriptsize±0.01\normalsize} & \textbf{0.55\scriptsize±0.03\normalsize} & \textbf{0.55\scriptsize±0.03\normalsize} & \textbf{0.56\scriptsize±0.02\normalsize} & \textbf{0.57\scriptsize±0.02\normalsize} & \textbf{0.56\scriptsize±0.02\normalsize}\\ 
\hdashline
\small UCL & 0.55\scriptsize±0.02\normalsize & 0.52\scriptsize±0.03   \normalsize & {0.52\scriptsize±0.03\normalsize} & \textcolor{lightgray}{0.52\scriptsize±0.02\normalsize} & \textcolor{lightgray}{0.51\scriptsize±0.02\normalsize} & \textcolor{lightgray}{0.50\scriptsize±0.02\normalsize} & \textcolor{lightgray}{0.52\scriptsize±0.01\normalsize} & \textcolor{lightgray}{0.52\scriptsize±0.01\normalsize} & \textcolor{lightgray}
{0.50\scriptsize±0.03\normalsize}\\ 
\small \textbf{TD($\lambda$)-UCL}  & \textbf{0.55\scriptsize±0.03\normalsize} & \textbf{0.53\scriptsize±0.01\normalsize} & \textbf{0.54\scriptsize±0.01\normalsize} & \textbf{0.55\scriptsize±0.01\normalsize} & \textbf{0.54\scriptsize±0.01\normalsize} & \textbf{0.54\scriptsize±0.01\normalsize} & \textbf{0.55\scriptsize±0.01\normalsize} & \textbf{0.56\scriptsize±0.01\normalsize} & \textbf{0.56\scriptsize±0.01\normalsize}\\ 
\hdashline
\small UCB & 0.52\scriptsize±0.06\normalsize & 0.51\scriptsize±0.04\normalsize & 0.51\scriptsize±0.02\normalsize & 0.50\scriptsize±0.02\normalsize & \textcolor{lightgray}{0.48\scriptsize±0.04\normalsize} & \textcolor{lightgray}{0.46\scriptsize±0.01\normalsize} & \textcolor{lightgray}{0.45\scriptsize±0.02\normalsize} & \textcolor{lightgray}{0.44\scriptsize±0.03\normalsize} & \textcolor{lightgray}{0.42\scriptsize±0.03\normalsize}\\ 
\small \textbf{TD($\lambda$)-UCB} & \textbf{0.54\scriptsize±0.04\normalsize} & \textbf{0.54\scriptsize±0.01\normalsize} & \textbf{0.52\scriptsize±0.01\normalsize} & \textbf{0.52\scriptsize±0.02\normalsize} & \textbf{0.51\scriptsize±0.02\normalsize} & \textbf{0.50\scriptsize±0.02\normalsize} & \textbf{0.50\scriptsize±0.03\normalsize} & \textbf{0.49\scriptsize±0.02\normalsize} & \textbf{0.47\scriptsize±0.02\normalsize}\\ 
\bottomrule 
\end{tabular}
\label{tab:fulltable_td} 
\end{table*}

\clearpage

\section{Does TD-VCL Assume Knowledge of Task Boundaries?}\label{app:boundaries}

In this Section, we argue that the TD-VCL objective (and VCL objectives in general) does not require knowledge of task boundaries, and we provide theoretical and empirical evidence for that. The theoretical argument comes from the principle that the \textit{Bayesian framework is self-consistent}: given a stream of data, the final posterior distribution should be the same regardless of how many Bayesian updates are executed.

Based on that, the key thing is to realize that the number of updates does not need to be equal to the number of tasks. Mathematically, suppose we have a stream of $T$ tasks (represented by $t$). At a particular update $k$, we may consider a Bayesian update that includes data from multiple ($m$) sequential tasks (e.g., from $t_{a}$ to $t_{a + m}$): 
\begin{equation}
    \mathcal{D}_k = \bigcup_{j=0}^{m} \mathcal{D}_{j}.
\end{equation}
Crucially, this does not impose any assumptions on boundaries. Rather, once we decide where to start and end the data stream for the Bayesian update, there could be potentially many tasks included. Under the same assumptions stated in Section \ref{sec:preliminaries}, we have that: 
\begin{equation}
    p(\mathcal{D}_k \mid \theta) = \prod_{j=0}^{m} p(\mathcal{D}_j \mid \theta).
\end{equation}
And, the recursive relationship (Equation \ref{eq1}) also follows:
\begin{equation}
    p(\theta \mid \mathcal{D}_{1:k}) \propto p(\theta \mid \mathcal{D}_{1:k-1}) \prod_{j=0}^{m} p(\mathcal{D}_{k} \mid \theta).
\end{equation}
Finally, following the same variational objective and ELBO derivation, we arrive at
\begin{equation}
    \mathcal{L}^k(\theta) = \mathbb{E}_{\theta \sim q_k(\theta)}\left[ \sum_{j=0}^{m} \log p(\mathcal{D}_k \mid \theta) \right] - \mathcal{D}_{\text{KL}}(q_k(\theta) \,\|\, q_{k-1}(\theta)).
\end{equation}
Therefore, the objective itself does not discriminate or require task boundaries. TD-VCL will estimate the likelihood terms for multiple terms simultaneously, which is something already done while replaying past tasks. 

\begin{wraptable}{r}{0.5\textwidth} 
\caption{\textbf{StreamingPermutedMNIST-Hard results}. We observe no negative impact in the TD-VCL methods in comparison with PermutedMNIST-Hard, suggesting that these methods do not require knowledge of task boundaries.}
\centering
\setlength{\tabcolsep}{2pt}
\begin{tabular}{l c}
\toprule
\multicolumn{2}{c}{\underline{\textbf{StreamingPermutedMNIST-Hard}}} \\
\midrule
\textbf{Method} & \textbf{$t = 10$} \\
\midrule
\small Online MLE & {0.54\scriptsize±0.09\normalsize}\\
\small Batch MLE & {0.64\scriptsize±0.09\normalsize}\\
\small VCL & {0.82\scriptsize±0.05\normalsize}\\
\small VCL CoreSet & {0.85\scriptsize±0.04\normalsize}\\
\textbf{\small N-Step TD-VCL} & \textbf{0.89\scriptsize±0.02\normalsize}\\
\textbf{\small TD($\lambda$)-VCL} & \textbf{0.89\scriptsize±0.02\normalsize}\\
\bottomrule
\end{tabular}
\label{tab:streaming}
\end{wraptable}


\textbf{Empirical Evidence.} We highlight that most benchmarks -- including the ones presented in this work -- isolate tasks, which makes it convenient to consider one Bayesian update per task. To provide further practical evidence that the method does not require knowledge of boundaries, we present another benchmark called \textbf{StreamingPermutedMNIST-Hard}. This benchmark does not provide any boundary between tasks. From the full data stream of $T$ tasks, we create sequential streams of data where boundaries are placed randomly and provide them to the methods for continual learning. We execute an evaluation after the complete data stream, considering held-out splits composed of all tasks. We report the average accuracy across them, equivalently to the $t = 10$ column in Tables \ref{tab:results_cnn} and \ref{tab:results_td}. Table \ref{tab:streaming} shows the empirical results over 10 seeds. We observe no negative impact in the VCL/TD-VCL methods in comparison with PermutedMNIST-Hard. In fact, some methods improved performance, because we are likely replaying the same task into different chunks, alleviating the catastrophic forgetting challenge. The proposed objectives still outperform all other methods.

\clearpage

\section{Further Questions}

This Appendix presents additional clarification questions aimed at improving the understanding of the proposed method and experiments. These questions were raised during the peer-review process, and we refer to the OpenReview page for the full discussion.

\subsection{What is the computational cost associated with TD-VCL?}

We analyze the computational cost from three aspects: training, inference, and hyperparameter search.

\textbf{Training Cost.} The training cost arises from the computation of Equations \ref{eq:nstep} and \ref{eq:td}, which depend on two components: (I) Monte Carlo estimation of the likelihood term, and (II) the KL regularization term. (I) corresponds to the standard cross-entropy loss for classification, averaged over samples of $\boldsymbol{\theta}$ from the variational distribution. \textit{In practice, we approximate this average with a single sample}, so the cost is equivalent to standard classification under the MLE objective. (II) is the KL regularization term, which can be computed in closed form. This computation is lightweight since it does not involve data or forward passes through the network. Overall, the training costs of VCL and TD-VCL are nearly identical, as the main bottleneck lies in (I), which is similar in both methods. Additionally, we employ early stopping, which reduces the number of training epochs and thereby lowers computational requirements compared to prior implementations.

\textbf{Inference Cost.} Bayesian inference is approximated by the posterior predictive distribution $p(y^* \mid \mathbf{x}^*, \mathcal{D}_{1:t}) = \mathbb{E}_{q_t(\theta)} \big[p(y^* \mid \theta, \mathbf{x}^*, \mathcal{D}_{1:t})\big]$,
which we estimate via Monte Carlo sampling. The computational cost of this step depends on the number of parameter samples $\theta$. In our experiments, \textit{we use a single sample} to ensure computational fairness across all baselines. With this choice, inference reduces to a single forward pass through the network, identical to a standard classifier. Using more samples increase the cost proportionally but also improve predictive performance.

\textbf{Hyperparameter Search Cost.} The cost of hyperparameter search can be managed by constraining the computational budget. As described in Appendix \ref{app:hypers}, we restrict all methods (including baselines) to a budget of at most one GPU-day. Our method introduces two additional hyperparameters beyond those of VCL ($n$ and $\lambda$). Results in Appendix \ref{app:ablation} show moderate to good robustness to these choices, suggesting that the search cost could be further reduced if necessary.

\subsection{What is the memory cost of TD-VCL?}

We analyze memory usage in terms of maintaining the replay buffer and storing the previous posteriors.

\textbf{Replay Buffer Cost.} The buffer has memory complexity $\mathcal{O}(n)$, where $n$ is the $n$-Step hyperparameter. This cost is no greater than that of the core sets used in VCL CoreSet. In our benchmarks, the replay buffer is intentionally limited to at most 200 data points from previously observed tasks, which is negligible compared to the 60{,}000 data points in the current task. As a result, the replay buffer does not represent a major bottleneck.

\textbf{Posterior Storage Cost.} The storage of posteriors also has memory complexity $\mathcal{O}(n)$. For smaller networks (such as those used in the MNIST and NotMNIST benchmarks), the memory usage is comparable to that of the replay buffer: for example, storing 200 MNIST data points requires about 0.60 MB, while storing the posterior requires about 0.68 MB (assuming float32 precision). Naturally, the memory required for posteriors increases with larger and deeper networks.

\subsection{Is maintaining previous posteriors a major bottleneck? Can we optimize this cost?}

As noted in the Limitations (Section \ref{sec:remarks}), TD-VCL may increase memory requirements. However, this is not necessarily a major limitation, and we also discuss strategies to reduce memory usage.

\textbf{Memory Complexity is a function of $n$, which the user controls}. The buffer size and number of posteriors are defined by $n$. If memory is a bottleneck, one can control $n$ to satisfy memory constraints. \textit{Crucially, $n$ is not always equal to the number of tasks.} Our robustness analysis (Appendix \ref{app:ablation}) shows that performance increases monotonically with $n$ up to a level where it may saturate. Therefore, any $n > 1$ should be better than vanilla VCL, and, if performance is expected to saturate, one can also set $n$ to be much lower than the number of tasks. The employed hyperparameters (Appendix \ref{app:hypers}) suggest that we can usually assume a value of $n$ that is lower than the number of tasks. 

\textbf{Assume a memory-efficient variational family $\mathcal{Q}$}. Since memory may be a challenge for large Bayesian networks, there are alternative architectures, such as last-layer variational methods or bayesian LoRA adapters \cite{pmlr-v37-snoek15, melo2024deepbayesianactivelearning, yang2024bayesian}, which approximates the posterior distribution in a fixed number of parameters. These methods drastically reduce the required memory at the cost of expressiveness of the variational family.

\textbf{Store previous posteriors in cheaper memory alternatives}. Since TD-VCL does not use previous posteriors for inference but only for computing the KL regularization, they do not need to occupy GPU memory. In fact, the regularization term could be computed asynchronously on CPU (or even with an external computer) while the GPU is used to generate predictions and estimate the likelihood terms. While implementation is more involved, it allows the use of both CPU/GPU and avoids having previous posteriors in GPU memory, which is usually the bottleneck.

\textbf{Estimate TD objective with fewer posteriors but covering older timesteps}. A corollary of Proposition \ref{propsum} is that we can represent the learning target as \textbf{any} combination of $n$-step TD targets. This means that we may store posteriors at every $m$ steps, instead of every step. In this case, given $T$ tasks, we only store $T / m$ posteriors. Naturally, this leads to a different way of estimating the learning objective, but ensures that it is covering older tasks to prevent catastrophic forgetting.

Lastly, we highlight that there are realistic Continual Learning settings where storing posteriors is not a major bottleneck. For instance, when \textit{continually learning on embedded systems with access to a cloud storage}. These embedded systems (mobile phones, wearables) usually present limited storage/GPU memory onboard. Some problems require on-the-fly model adaptation and, for privacy-related reasons, the data must be kept on the device for a limited time. Nonetheless, we may upload model snapshots to the cloud without problems (sometimes this upload is required to conduct quality evaluation or audits). A concrete example is an on-device speech recognition model on a smartwatch adapting to a user's voice. We believe our TD-VCL objective is well-suited for this problem setting.

\subsection{When should one use $n$-Step TD-VCL or TD($\lambda$)-VCL?}

TD($\lambda$)-VCL is a generalization of $n$-Step TD-VCL. As we presented in Appendix \ref{app:spectrum}, TD($\lambda$)-VCL forms a spectrum of CL algorithms, and we recover $n$-Step TD-VCL when $\lambda \rightarrow 1$. Therefore, the "choice" depends on $\lambda$, which controls how much the learning objective should prioritize recent posteriors. If one believes that the most recent posterior retains the knowledge of previous tasks, then a higher $\lambda$ should work better. Otherwise, one should use lower values as past estimates contain information that has not propagated over the recursive updates. In practice, it depends on the continual learning problem and the potential transfer/interference among tasks. The recommendation is to start from TD($\lambda$)-VCL and tune the $\lambda$ hyperparameter.

\clearpage

\subsection{What is the impact of Early Stopping in the presented methods?}

We perform an ablation study to evaluate the impact of Early Stopping, with results reported in Table \ref{tab:no_early_stopping}. We find that removing Early Stopping does not significantly impact the performance of TD-VCL, while VCL shows a slight degradation. The methods most negatively affected are the MLE baselines. This result is expected, since these models lack any form of regularization, and training without Early Stopping leads to overfitting. Notably, experiments without Early Stopping required approximately 5 to 10 times more training time. As discussed in the main paper, Early Stopping substantially reduces computational cost.

\begin{table*}[h] 
\caption{\textbf{Results without Early Stopping.} We observe that TD-VCL maintains strong performance even without early stopping, while VCL shows slight degradation and MLE baselines suffer from overfitting. Training without early stopping also took 5--10$\times$ longer.} 
\centering 
\setlength{\tabcolsep}{2pt} 
\begin{tabular}{l c c c c c c c c c} 
\toprule 
& \multicolumn{9}{c}{\underline{\textbf{PermutedMNIST-Hard}}} \\ 
& t = 2 & t = 3 & t = 4 & t = 5 & t = 6 & t = 7 & t = 8 & t = 9 & t = 10 \\ 
\midrule 
\small Online MLE & \textcolor{lightgray}{0.75\scriptsize±0.04\normalsize} & \textcolor{lightgray}{0.58\scriptsize±0.07\normalsize} & \textcolor{lightgray}{0.55\scriptsize±0.07\normalsize} & \textcolor{lightgray}{0.44\scriptsize±0.05\normalsize} & \textcolor{lightgray}{0.42\scriptsize±0.05\normalsize} & \textcolor{lightgray}{0.36\scriptsize±0.04\normalsize} & \textcolor{lightgray}{0.34\scriptsize±0.05\normalsize} & \textcolor{lightgray}{0.31\scriptsize±0.04\normalsize} & \textcolor{lightgray}{0.30\scriptsize±0.04\normalsize} \\
\small Batch MLE & {0.94\scriptsize±0.01\normalsize} & \textcolor{lightgray}{0.89\scriptsize±0.02\normalsize} & \textcolor{lightgray}{0.82\scriptsize±0.04\normalsize} & \textcolor{lightgray}{0.69\scriptsize±0.03\normalsize} & \textcolor{lightgray}{0.64\scriptsize±0.04\normalsize} & \textcolor{lightgray}{0.55\scriptsize±0.03\normalsize} & \textcolor{lightgray}{0.52\scriptsize±0.03\normalsize} & \textcolor{lightgray}{0.46\scriptsize±0.04\normalsize} & \textcolor{lightgray}{0.45\scriptsize±0.03\normalsize} \\
\small VCL & {0.96\scriptsize±0.01\normalsize} & {0.94\scriptsize±0.01\normalsize} & {0.92\scriptsize±0.01\normalsize} & \textcolor{lightgray}{0.89\scriptsize±0.04\normalsize} & \textcolor{lightgray}{0.86\scriptsize±0.04\normalsize} & \textcolor{lightgray}{0.84\scriptsize±0.04\normalsize} & \textcolor{lightgray}{0.80\scriptsize±0.09\normalsize} & \textcolor{lightgray}{0.78\scriptsize±0.11\normalsize} & \textcolor{lightgray}{0.71\scriptsize±0.15\normalsize} \\
\small VCL CoreSet & {0.96\scriptsize±0.00\normalsize} & {0.95\scriptsize±0.01\normalsize} & {0.94\scriptsize±0.01\normalsize} & {0.92\scriptsize±0.02\normalsize} & {0.90\scriptsize±0.02\normalsize} & \textcolor{lightgray}{0.88\scriptsize±0.02\normalsize} & \textcolor{lightgray}{0.85\scriptsize±0.02\normalsize} & \textcolor{lightgray}{0.83\scriptsize±0.03\normalsize} & \textcolor{lightgray}{0.79\scriptsize±0.09\normalsize} \\
\textbf{\small N-Step TD-VCL} & \textbf{0.95\scriptsize±0.00\normalsize} & \textbf{0.95\scriptsize±0.00\normalsize} & \textbf{0.94\scriptsize±0.00\normalsize} & \textbf{0.93\scriptsize±0.00\normalsize} & \textbf{0.92\scriptsize±0.01\normalsize} & \textbf{0.92\scriptsize±0.01\normalsize} & 0.89\scriptsize±0.01\normalsize & 0.88\scriptsize±0.02\normalsize & 0.85\scriptsize±0.04\normalsize \\
\textbf{\small TD($\lambda$)-VCL} & \textbf{0.97\scriptsize±0.00\normalsize} & \textbf{0.97\scriptsize±0.00\normalsize} & \textbf{0.96\scriptsize±0.01\normalsize} & \textbf{0.95\scriptsize±0.00\normalsize} & \textbf{0.94\scriptsize±0.01\normalsize} & \textbf{0.93\scriptsize±0.01\normalsize} & \textbf{0.91\scriptsize±0.01\normalsize} & \textbf{0.91\scriptsize±0.02\normalsize} & \textbf{0.89\scriptsize±0.02\normalsize} \\
\bottomrule
\end{tabular}
\label{tab:no_early_stopping}
\end{table*}

\clearpage
\newpage
\section*{NeurIPS Paper Checklist}

\begin{enumerate}

\item {\bf Claims}
    \item[] Question: Do the main claims made in the abstract and introduction accurately reflect the paper's contributions and scope?
    \item[] Answer: \answerYes{} 
    \item[] Justification: All the claims are accompanied with either a theoretical development or experimental validation, which are found in Section \ref{sec:experiments} and appendices.
    \item[] Guidelines:
    \begin{itemize}
        \item The answer NA means that the abstract and introduction do not include the claims made in the paper.
        \item The abstract and/or introduction should clearly state the claims made, including the contributions made in the paper and important assumptions and limitations. A No or NA answer to this question will not be perceived well by the reviewers. 
        \item The claims made should match theoretical and experimental results, and reflect how much the results can be expected to generalize to other settings. 
        \item It is fine to include aspirational goals as motivation as long as it is clear that these goals are not attained by the paper. 
    \end{itemize}

\item {\bf Limitations}
    \item[] Question: Does the paper discuss the limitations of the work performed by the authors?
    \item[] Answer: \answerYes{} 
    \item[] Justification: Refer to Section \ref{sec:remarks}.
    \item[] Guidelines:
    \begin{itemize}
        \item The answer NA means that the paper has no limitation while the answer No means that the paper has limitations, but those are not discussed in the paper. 
        \item The authors are encouraged to create a separate "Limitations" section in their paper.
        \item The paper should point out any strong assumptions and how robust the results are to violations of these assumptions (e.g., independence assumptions, noiseless settings, model well-specification, asymptotic approximations only holding locally). The authors should reflect on how these assumptions might be violated in practice and what the implications would be.
        \item The authors should reflect on the scope of the claims made, e.g., if the approach was only tested on a few datasets or with a few runs. In general, empirical results often depend on implicit assumptions, which should be articulated.
        \item The authors should reflect on the factors that influence the performance of the approach. For example, a facial recognition algorithm may perform poorly when image resolution is low or images are taken in low lighting. Or a speech-to-text system might not be used reliably to provide closed captions for online lectures because it fails to handle technical jargon.
        \item The authors should discuss the computational efficiency of the proposed algorithms and how they scale with dataset size.
        \item If applicable, the authors should discuss possible limitations of their approach to address problems of privacy and fairness.
        \item While the authors might fear that complete honesty about limitations might be used by reviewers as grounds for rejection, a worse outcome might be that reviewers discover limitations that aren't acknowledged in the paper. The authors should use their best judgment and recognize that individual actions in favor of transparency play an important role in developing norms that preserve the integrity of the community. Reviewers will be specifically instructed to not penalize honesty concerning limitations.
    \end{itemize}

\item {\bf Theory assumptions and proofs}
    \item[] Question: For each theoretical result, does the paper provide the full set of assumptions and a complete (and correct) proof?
    \item[] Answer: \answerYes{} 
    \item[] Justification: Please refer to Sections \ref{sec:preliminaries} and \ref{sec:tdvcl}, and appendices \ref{app:n-step}, \ref{app:td}, \ref{app:rlcon}, \ref{app:tdsum}, \ref{app:spectrum}.
    \item[] Guidelines:
    \begin{itemize}
        \item The answer NA means that the paper does not include theoretical results. 
        \item All the theorems, formulas, and proofs in the paper should be numbered and cross-referenced.
        \item All assumptions should be clearly stated or referenced in the statement of any theorems.
        \item The proofs can either appear in the main paper or the supplemental material, but if they appear in the supplemental material, the authors are encouraged to provide a short proof sketch to provide intuition. 
        \item Inversely, any informal proof provided in the core of the paper should be complemented by formal proofs provided in appendix or supplemental material.
        \item Theorems and Lemmas that the proof relies upon should be properly referenced. 
    \end{itemize}

    \item {\bf Experimental result reproducibility}
    \item[] Question: Does the paper fully disclose all the information needed to reproduce the main experimental results of the paper to the extent that it affects the main claims and/or conclusions of the paper (regardless of whether the code and data are provided or not)?
    \item[] Answer: \answerYes{} 
    \item[] Justification: Please refer to Appendix \ref{app:implementation}.
    \item[] Guidelines:
    \begin{itemize}
        \item The answer NA means that the paper does not include experiments.
        \item If the paper includes experiments, a No answer to this question will not be perceived well by the reviewers: Making the paper reproducible is important, regardless of whether the code and data are provided or not.
        \item If the contribution is a dataset and/or model, the authors should describe the steps taken to make their results reproducible or verifiable. 
        \item Depending on the contribution, reproducibility can be accomplished in various ways. For example, if the contribution is a novel architecture, describing the architecture fully might suffice, or if the contribution is a specific model and empirical evaluation, it may be necessary to either make it possible for others to replicate the model with the same dataset, or provide access to the model. In general. releasing code and data is often one good way to accomplish this, but reproducibility can also be provided via detailed instructions for how to replicate the results, access to a hosted model (e.g., in the case of a large language model), releasing of a model checkpoint, or other means that are appropriate to the research performed.
        \item While NeurIPS does not require releasing code, the conference does require all submissions to provide some reasonable avenue for reproducibility, which may depend on the nature of the contribution. For example
        \begin{enumerate}
            \item If the contribution is primarily a new algorithm, the paper should make it clear how to reproduce that algorithm.
            \item If the contribution is primarily a new model architecture, the paper should describe the architecture clearly and fully.
            \item If the contribution is a new model (e.g., a large language model), then there should either be a way to access this model for reproducing the results or a way to reproduce the model (e.g., with an open-source dataset or instructions for how to construct the dataset).
            \item We recognize that reproducibility may be tricky in some cases, in which case authors are welcome to describe the particular way they provide for reproducibility. In the case of closed-source models, it may be that access to the model is limited in some way (e.g., to registered users), but it should be possible for other researchers to have some path to reproducing or verifying the results.
        \end{enumerate}
    \end{itemize}

\item {\bf Open access to data and code}
    \item[] Question: Does the paper provide open access to the data and code, with sufficient instructions to faithfully reproduce the main experimental results, as described in supplemental material?
    \item[] Answer: \answerYes{} 
    \item[] Justification: Please refer to the codebase shared as a link in Section \ref{sec:experiments}.
    \item[] Guidelines:
    \begin{itemize}
        \item The answer NA means that paper does not include experiments requiring code.
        \item Please see the NeurIPS code and data submission guidelines (\url{https://nips.cc/public/guides/CodeSubmissionPolicy}) for more details.
        \item While we encourage the release of code and data, we understand that this might not be possible, so “No” is an acceptable answer. Papers cannot be rejected simply for not including code, unless this is central to the contribution (e.g., for a new open-source benchmark).
        \item The instructions should contain the exact command and environment needed to run to reproduce the results. See the NeurIPS code and data submission guidelines (\url{https://nips.cc/public/guides/CodeSubmissionPolicy}) for more details.
        \item The authors should provide instructions on data access and preparation, including how to access the raw data, preprocessed data, intermediate data, and generated data, etc.
        \item The authors should provide scripts to reproduce all experimental results for the new proposed method and baselines. If only a subset of experiments are reproducible, they should state which ones are omitted from the script and why.
        \item At submission time, to preserve anonymity, the authors should release anonymized versions (if applicable).
        \item Providing as much information as possible in supplemental material (appended to the paper) is recommended, but including URLs to data and code is permitted.
    \end{itemize}

\item {\bf Experimental setting/details}
    \item[] Question: Does the paper specify all the training and test details (e.g., data splits, hyperparameters, how they were chosen, type of optimizer, etc.) necessary to understand the results?
    \item[] Answer: \answerYes{} 
    \item[] Justification: This information is shared in Section \ref{sec:experiments} and Appendices \ref{app:implementation} and \ref{app:benchmarks}.
    \item[] Guidelines:
    \begin{itemize}
        \item The answer NA means that the paper does not include experiments.
        \item The experimental setting should be presented in the core of the paper to a level of detail that is necessary to appreciate the results and make sense of them.
        \item The full details can be provided either with the code, in appendix, or as supplemental material.
    \end{itemize}

\item {\bf Experiment statistical significance}
    \item[] Question: Does the paper report error bars suitably and correctly defined or other appropriate information about the statistical significance of the experiments?
    \item[] Answer: \answerYes{} 
    \item[] Justification: Experimental results are reported with two standard deviations across ten or five seeds.
    \item[] Guidelines:
    \begin{itemize}
        \item The answer NA means that the paper does not include experiments.
        \item The authors should answer "Yes" if the results are accompanied by error bars, confidence intervals, or statistical significance tests, at least for the experiments that support the main claims of the paper.
        \item The factors of variability that the error bars are capturing should be clearly stated (for example, train/test split, initialization, random drawing of some parameter, or overall run with given experimental conditions).
        \item The method for calculating the error bars should be explained (closed form formula, call to a library function, bootstrap, etc.)
        \item The assumptions made should be given (e.g., Normally distributed errors).
        \item It should be clear whether the error bar is the standard deviation or the standard error of the mean.
        \item It is OK to report 1-sigma error bars, but one should state it. The authors should preferably report a 2-sigma error bar than state that they have a 96\% CI, if the hypothesis of Normality of errors is not verified.
        \item For asymmetric distributions, the authors should be careful not to show in tables or figures symmetric error bars that would yield results that are out of range (e.g. negative error rates).
        \item If error bars are reported in tables or plots, The authors should explain in the text how they were calculated and reference the corresponding figures or tables in the text.
    \end{itemize}

\item {\bf Experiments compute resources}
    \item[] Question: For each experiment, does the paper provide sufficient information on the computer resources (type of compute workers, memory, time of execution) needed to reproduce the experiments?
    \item[] Answer: \answerYes{} 
    \item[] Justification: Refer to Appendix \ref{app:implementation}.
    \item[] Guidelines:
    \begin{itemize}
        \item The answer NA means that the paper does not include experiments.
        \item The paper should indicate the type of compute workers CPU or GPU, internal cluster, or cloud provider, including relevant memory and storage.
        \item The paper should provide the amount of compute required for each of the individual experimental runs as well as estimate the total compute. 
        \item The paper should disclose whether the full research project required more compute than the experiments reported in the paper (e.g., preliminary or failed experiments that didn't make it into the paper). 
    \end{itemize}
    
\item {\bf Code of ethics}
    \item[] Question: Does the research conducted in the paper conform, in every respect, with the NeurIPS Code of Ethics \url{https://neurips.cc/public/EthicsGuidelines}?
    \item[] Answer: \answerYes{} 
    \item[] Justification: The authors reviewed the NeurIPS Code of Ethics and the research conforms to it.
    \item[] Guidelines:
    \begin{itemize}
        \item The answer NA means that the authors have not reviewed the NeurIPS Code of Ethics.
        \item If the authors answer No, they should explain the special circumstances that require a deviation from the Code of Ethics.
        \item The authors should make sure to preserve anonymity (e.g., if there is a special consideration due to laws or regulations in their jurisdiction).
    \end{itemize}

\item {\bf Broader impacts}
    \item[] Question: Does the paper discuss both potential positive societal impacts and negative societal impacts of the work performed?
    \item[] Answer: \answerYes{} 
    \item[] Justification: Refer to Appendix \ref{app:impact}.
    \item[] Guidelines:
    \begin{itemize}
        \item The answer NA means that there is no societal impact of the work performed.
        \item If the authors answer NA or No, they should explain why their work has no societal impact or why the paper does not address societal impact.
        \item Examples of negative societal impacts include potential malicious or unintended uses (e.g., disinformation, generating fake profiles, surveillance), fairness considerations (e.g., deployment of technologies that could make decisions that unfairly impact specific groups), privacy considerations, and security considerations.
        \item The conference expects that many papers will be foundational research and not tied to particular applications, let alone deployments. However, if there is a direct path to any negative applications, the authors should point it out. For example, it is legitimate to point out that an improvement in the quality of generative models could be used to generate deepfakes for disinformation. On the other hand, it is not needed to point out that a generic algorithm for optimizing neural networks could enable people to train models that generate Deepfakes faster.
        \item The authors should consider possible harms that could arise when the technology is being used as intended and functioning correctly, harms that could arise when the technology is being used as intended but gives incorrect results, and harms following from (intentional or unintentional) misuse of the technology.
        \item If there are negative societal impacts, the authors could also discuss possible mitigation strategies (e.g., gated release of models, providing defenses in addition to attacks, mechanisms for monitoring misuse, mechanisms to monitor how a system learns from feedback over time, improving the efficiency and accessibility of ML).
    \end{itemize}
    
\item {\bf Safeguards}
    \item[] Question: Does the paper describe safeguards that have been put in place for responsible release of data or models that have a high risk for misuse (e.g., pretrained language models, image generators, or scraped datasets)?
    \item[] Answer: \answerNA{} 
    \item[] Justification: The paper does not pose such risks.
    \item[] Guidelines:
    \begin{itemize}
        \item The answer NA means that the paper poses no such risks.
        \item Released models that have a high risk for misuse or dual-use should be released with necessary safeguards to allow for controlled use of the model, for example by requiring that users adhere to usage guidelines or restrictions to access the model or implementing safety filters. 
        \item Datasets that have been scraped from the Internet could pose safety risks. The authors should describe how they avoided releasing unsafe images.
        \item We recognize that providing effective safeguards is challenging, and many papers do not require this, but we encourage authors to take this into account and make a best faith effort.
    \end{itemize}

\item {\bf Licenses for existing assets}
    \item[] Question: Are the creators or original owners of assets (e.g., code, data, models), used in the paper, properly credited and are the license and terms of use explicitly mentioned and properly respected?
    \item[] Answer: \answerYes{} 
    \item[] Justification: All assets are properly referenced throughout the paper.
    \item[] Guidelines:
    \begin{itemize}
        \item The answer NA means that the paper does not use existing assets.
        \item The authors should cite the original paper that produced the code package or dataset.
        \item The authors should state which version of the asset is used and, if possible, include a URL.
        \item The name of the license (e.g., CC-BY 4.0) should be included for each asset.
        \item For scraped data from a particular source (e.g., website), the copyright and terms of service of that source should be provided.
        \item If assets are released, the license, copyright information, and terms of use in the package should be provided. For popular datasets, \url{paperswithcode.com/datasets} has curated licenses for some datasets. Their licensing guide can help determine the license of a dataset.
        \item For existing datasets that are re-packaged, both the original license and the license of the derived asset (if it has changed) should be provided.
        \item If this information is not available online, the authors are encouraged to reach out to the asset's creators.
    \end{itemize}

\item {\bf New assets}
    \item[] Question: Are new assets introduced in the paper well documented and is the documentation provided alongside the assets?
    \item[] Answer: \answerYes{} 
    \item[] Justification: All the details of the code and newly introduced benchmarks are available in the paper and released codebase.
    \item[] Guidelines:
    \begin{itemize}
        \item The answer NA means that the paper does not release new assets.
        \item Researchers should communicate the details of the dataset/code/model as part of their submissions via structured templates. This includes details about training, license, limitations, etc. 
        \item The paper should discuss whether and how consent was obtained from people whose asset is used.
        \item At submission time, remember to anonymize your assets (if applicable). You can either create an anonymized URL or include an anonymized zip file.
    \end{itemize}

\item {\bf Crowdsourcing and research with human subjects}
    \item[] Question: For crowdsourcing experiments and research with human subjects, does the paper include the full text of instructions given to participants and screenshots, if applicable, as well as details about compensation (if any)? 
    \item[] Answer: \answerNA{}{} 
    \item[] Justification: the paper does not involve crowdsourcing nor research with human subjects.
    \item[] Guidelines:
    \begin{itemize}
        \item The answer NA means that the paper does not involve crowdsourcing nor research with human subjects.
        \item Including this information in the supplemental material is fine, but if the main contribution of the paper involves human subjects, then as much detail as possible should be included in the main paper. 
        \item According to the NeurIPS Code of Ethics, workers involved in data collection, curation, or other labor should be paid at least the minimum wage in the country of the data collector. 
    \end{itemize}

\item {\bf Institutional review board (IRB) approvals or equivalent for research with human subjects}
    \item[] Question: Does the paper describe potential risks incurred by study participants, whether such risks were disclosed to the subjects, and whether Institutional Review Board (IRB) approvals (or an equivalent approval/review based on the requirements of your country or institution) were obtained?
    \item[] Answer: \answerNA{} 
    \item[] Justification: the paper does not involve crowdsourcing nor research with human subjects.
    \item[] Guidelines:
    \begin{itemize}
        \item The answer NA means that the paper does not involve crowdsourcing nor research with human subjects.
        \item Depending on the country in which research is conducted, IRB approval (or equivalent) may be required for any human subjects research. If you obtained IRB approval, you should clearly state this in the paper. 
        \item We recognize that the procedures for this may vary significantly between institutions and locations, and we expect authors to adhere to the NeurIPS Code of Ethics and the guidelines for their institution. 
        \item For initial submissions, do not include any information that would break anonymity (if applicable), such as the institution conducting the review.
    \end{itemize}

\item {\bf Declaration of LLM usage}
    \item[] Question: Does the paper describe the usage of LLMs if it is an important, original, or non-standard component of the core methods in this research? Note that if the LLM is used only for writing, editing, or formatting purposes and does not impact the core methodology, scientific rigorousness, or originality of the research, declaration is not required.
    \item[] Answer: \answerNA{} 
    \item[] Justification: the core method development in this research does not
involve LLMs as any important, original, or non-standard components.
    \item[] Guidelines:
    \begin{itemize}
        \item The answer NA means that the core method development in this research does not involve LLMs as any important, original, or non-standard components.
        \item Please refer to our LLM policy (\url{https://neurips.cc/Conferences/2025/LLM}) for what should or should not be described.
    \end{itemize}

\end{enumerate}

\end{document}